\newcommand{\removelatexerror}{\let\@latex@error\@gobble}
\def\@endtheorem{\endtrivlist}
\newtheorem{theorem}{Theorem}
\newtheorem{definition}{Definition}
\newtheorem{proposition}{Proposition}
\newtheorem{remark}{Remark}
\newtheorem{corollary}{Corollary}
\newtheorem{lemma}{Lemma}
\newtheorem{problem}{Problem}
\date{\today}
\newcommand{\nn}{{\mathscr{N}\negthickspace\negthickspace\negthinspace\mathscr{N}}\negthinspace}
\newcommand{\ou}{%
  \mathrel{%
    \vcenter{\offinterlineskip
      \ialign{##\cr$<$\cr\noalign{\kern-1.5pt}$>$\cr}%
    }%
  }%
}%
\newcommand{\xition}[2]{\overset{#2}{_{\scriptscriptstyle\Sigma_#1} \negthickspace\negthickspace\negthinspace\negthinspace\longrightarrow}}
\newcommand{\ltsxition}[2]{\overset{#2}{_{\scriptscriptstyle #1 \vphantom{\scriptstyle (}}\negthickspace\negthickspace\negthinspace\longrightarrow}}
\newcommand{\tllTheta}{\Theta^{\overset{\text{\tiny TLL}}{~}}}
\newcommand{\tllThetaPar}{\Theta^{\overset{\text{\tiny TLL}\parallel}{~}}}
\newcommand{\subcpwa}{_{\negthinspace\text{\tiny CPWA}}}
\begin{document}

\title{
Assured Neural Network Architectures for Control and Identification of Nonlinear Systems
} %
\author{James Ferlez\textsuperscript{$*$} and Yasser Shoukry\textsuperscript{$*$}
\thanks{
\textsuperscript{$*$}Department of Electrical Engineering and Computer Science, University of California, Irvine
\texttt{\{jferlez,yshoukry\}@uci.edu}
} %
\thanks{This  work  was  partially  sponsored  by  the  NSF  awards \#CNS-2002405 and \#CNS-2013824.}
} %

\maketitle

\begin{abstract}
In this paper, we consider the problem of automatically designing a Rectified Linear Unit (ReLU) Neural Network (NN) architecture (number of layers and number of neurons per layer) with the assurance that it is sufficiently parametrized to control a nonlinear system; i.e. control the system to satisfy a given formal specification. This is unlike current techniques, which provide no assurances on the resultant architecture. Moreover, our approach requires only limited knowledge of the underlying nonlinear system and specification. We assume only that the specification can be satisfied by a Lipschitz-continuous controller with a known bound on its Lipschitz constant; the specific controller need not be known. From this assumption, we bound the number of affine functions needed to construct a Continuous Piecewise Affine (CPWA) function that can approximate any Lipschitz-continuous controller that satisfies the specification. Then we connect this CPWA to a NN architecture using the authors' recent results on the Two-Level Lattice (TLL) NN architecture; the TLL architecture was shown to be parameterized by the number of affine functions present in the CPWA function it realizes.

\end{abstract}

\excludecomment{techreport} %
\excludecomment{arxivreference}



\section{Introduction} 
\label{sec:introduction}


Recent advances in theory and computation have facilitated wide-spread adoption 
of Rectified Linear Unit (ReLU) Neural Networks (NNs) in conventional 
feedback-control settings, especially for Cyber-Physical Systems 
(CPSs)~\cite{bojarski2016end}. However, this proliferation of NN controllers 
has also highlighted weaknesses in state-of-the-art NN design techniques, 
because CPS systems are often \emph{safety critical}. In a safety-critical CPS, 
it is not enough to simply learn a NN controller, i.e. fit data: such a 
controller must also have demonstrable safety or robustness properties, usually 
with respect to \emph{closed-loop} specifications on a dynamical system. 
Moreover, a meaningful safety specification for a CPS is often binary: either a 
controller makes the system safe or it doesn't. This is in contrast to  
optimization-based approaches typical in Machine/Reinforcement Learning  
(ML/RL), where the goal is to optimize a particular cost or reward without any 
requirements imposed on eventual quantity in question. Examples of the former 
include stability about an equilibrium point or forward invariance of a 
particular set of safe states (e.g. for collision avoidance). Examples of the 
latter include minimizing the mean-squared fit error; minimizing regret; or 
maximizing a surrogate for a value function; etc. 

The distinction between safety specifications and ML/RL objectives is  relevant 
to more than just learning NN weights and biases, though: it is has special 
relevance to \textbf{\itshape NN architecture design} --- i.e. deciding on the 
number of neurons and their connection (or arrangement) in the NN to be trained 
in the first place. Specifically, (binary) safety specifications beg 
\emph{existential questions} about NN architectures in a way that conventional, 
optimization-focused ML/RL techniques do not: given a particular NN 
architecture for a controller, it is necessary to ask whether there is there 
\emph{any} possible choice of weights and biases that achieve the desired 
safety specification. By contrast, conventional ML/RL type problems instead 
take an architecture as given, and attempt to achieve the best training error, 
reward, etc. subject to that implicit constraint. Thus, in typical ML/RL 
treatments, NN architectures merely adjust the final cost/reward, 
rather than leading to ill-posedness, such as can occur with a safety 
specification.

In this paper, we directly address the issue of whether a ReLU NN architecture 
is well-posed as a state-feedback controller for a given nonlinear system and 
closed-loop (safety) specification. That is we present a systematic methodology 
for designing a NN controller architecture that is \emph{guaranteed} (or  
\emph{assured}) to be able to meet a given closed-loop specification. Our 
approach provides such a guarantee contingent on the following properties of 
the system and specification: 
\begin{enumerate}[{\itshape (i)}]
	\item the nonlinear system's vector field is Lipschitz continuous with 
		\emph{known} Lipschitz constants;

	\item there exists a Lipschitz continuous controller that satisfies the 
		closed-loop specification \emph{robustly}, and the Lipschitz constant 
		of that controller is known (\emph{\underline{although the controller 
		itself need not be known}}); and

	\item the conjectured Lipschitz continuous controller makes a compact 
		subset of the state space positive invariant.
\end{enumerate}
The need to assume the existence of a controller is primarily to ensure that 
the specification is well-posed in general -- i.e., for \emph{any} controller, 
whether it is a NN or not; we will elaborate on the robustness in \emph{(ii)} 
subsequently. Importantly, subject to these conditions, our approach can design 
a NN controller architecture with the following assurance: there exists neuron 
weights/biases for that architecture such that it can exactly meet the same 
specification as the assured non-NN controller (albeit non-robustly). Moreover, 
our proposed methodology requires \emph{only} the information described above, 
so it is
applicable even without perfect knowledge of the underlying system dynamics, 
albeit at the expense of designing rather larger architectures.




The cornerstone of our approach is a special ReLU NN architecture introduced by 
the authors in the context of another control problem: viz. the Two-Level 
Lattice (TLL) ReLU NN architecture \cite{FerlezAReNAssuredReLU2020}. A TLL NN, 
like all ReLU NNs, instantiates a Continuous, Piecewise Affine (CPWA) function, 
and hence implements one of finitely many \emph{local linear 
functions}\footnote{The term ``linear'' here is somewhat of a misnomer: 
``affine'' is more accurate. However, we use this terminology for consistency 
with the literature.} at each point in its domain; thus, its domain, like that 
of any CPWA, can be partitioned into \emph{linear} regions, each of which 
corresponds to a different local linear function. However, unlike general ReLU 
NN architectures, a TLL NN exposes these local linear functions directly in the 
parameters of the network. As a consequence of this parameterization, then, TLL 
NNs are easily parameterized by their number of linear regions: after all, each 
linear region must instantiate at least one local linear function -- see 
\cite[Theorem 2]{FerlezAReNAssuredReLU2020}. In fact, this idea also applies to 
\emph{upper bounds} on the number of regions desired in an architecture; see 
\cite[Theorem 3]{FerlezAReNAssuredReLU2020}. TLL NNs thus have the special 
property that they can be used to connect a \textbf{\itshape desired number of 
linear regions directly to a ReLU architecture.} 

Thus, to obtain an assured controller architecture, it is enough to obtain an 
assured upper-bound on the number of linear regions required of a controller. 
In this paper, we show that such an assured upper bound can be obtained using 
just the information assumed in \emph{(i) - (iii)}; i.e. primarily Lipschitz 
constants and bounds on the relevant objects. This bound is derived by counting 
the number of linear regions needed to linearly and continuously interpolate 
between regularly spaced ``samples'' of a 
controller with known Lipschitz constant.


Moreover, this core approach is relevant to end-to-end learning beyond just the 
design of \emph{controller} architectures: it can also be used to obtain 
architectures that are guaranteed to represent the dynamics of a nonlinear 
system itself -- i.e. assured architectures for \emph{system identification}. 
Indeed, in this paper, we further show that contingent on information akin to 
\emph{(i)} and \emph{(iii)}, it is possible to generate a ReLU architecture 
that is guaranteed to capture the essence of an \emph{unknown} nonlinear 
control system. Specifically, we exhibit a methodology for designing an 
architecture to represent a nonlinear (controlled) vector field with the 
following assurance: if the ReLU vector field is sufficiently well trained on 
data from a compatible -- but unknown -- controlled vector field, then robustly 
controlling the ReLU dynamics to a specification will yield a controller that  
likewise controls the unknown dynamics to the same specification (albeit 
non-robustly). Providing a guaranteed architecture for system identification 
has unique value for end-to-end learning: because the ReLU control system can 
be used as a surrogate for the original nonlinear system, control design can be 
moved entirely from the \emph{unknown} system to the \emph{known} ReLU 
surrogate, the latter of which can be \emph{simulated} instead. Furthermore, 
this system identification can be combined with a guaranteed controller 
architecture to do \emph{in silico} ReLU control design.

The contributions of this paper can be summarized thusly.
\begin{enumerate}[{\itshape I.}]
	\item A new notion of \emph{Abstract Disturbance Simulation (ADS)} to 
		formulate of robust specification satisfaction; ADS unifies and 
		generalizes several related notions of simulation in the literature -- 
		see \cref{sub:perturbation_}.

	\item A methodology to design a ReLU architecture that is  
		\emph{assured} to be able to control an unknown nonlinear system to 
		meet a closed-loop specification; this is subject to the existence of a 
		(likewise) \emph{unknown} controller that \emph{robustly} meets the 
		same specification.

	\item A methodology to design a ReLU architecture that can be used in 
		system identification of an unknown nonlinear control system; this 
		architecture, when adequately trained, is \emph{assured} to be viable 
		as a surrogate for the original nonlinear system in controller design.
\end{enumerate}
A preliminary version of this paper appeared  
as~\cite{FerlezTwoLevelLatticeNeural2020a}. Relative  
to~\cite{FerlezTwoLevelLatticeNeural2020a}, this paper has the following 
additional novel content: first, it uses new, dramatically improved techniques 
to obtain a smaller architecture than~\cite{FerlezTwoLevelLatticeNeural2020a} 
(see \cref{rem:improved_bound}); second, it includes full proofs of every 
claim; and third, it contains the extension to system identification 
architectures noted above.

\subsection{Related Work} 
\label{sub:related_work}


The literature most directly relevant to this paper is work by the authors: 
AReN \cite{FerlezAReNAssuredReLU2020} and the preliminary version of this 
paper, \cite{FerlezTwoLevelLatticeNeural2020a}. The former contains an  
algorithm that generates an architecture assured to represent an optimal MPC 
controller. The AReN algorithm is fully automatic, and the assurances provided 
are the same as those for the referent MPC controller; but it is not very 
generalizable, given the restriction to MPC control. This paper and its 
preliminary version offer a significantly improved methodology. \emph{The  
architecture design presented herein is likewise fully automatic; however, it 
is generalizable to any Lipschitz continuous control system, and it provides 
assurances for a wide variety specification captured by simulation relations.}

The largest single class of NN architecture design algorithms
is commonly referred to as Neural Architecture Search (NAS) algorithms. These 
algorithms essentially use an iterative improvement/optimization scheme to 
design a NN architecture, thus automating something like a 
`guess-train-evaluate'' hyperparameter tuning loop. There is a large literature 
on NAS algorithms; \cite{ElskenNeuralArchitectureSearch2019} is a good summary.
Typical NASs design architectures within some structured class of  
architectures such as: ``chain'' architectures (i.e. a sequence of fully 
connected layers) \cite{MendozaAutomaticallyTunedNeuralNetworks2016}; chain 
architectures with different layer types (e.g. convolution, pooling, etc. in 
addition to fully connected) \cite{BakerAcceleratingNeuralArchitecture2017, 
CaiEfficientArchitectureSearch2017}; or mini-NN architectures that are 
replicated and interconnected to form larger networks 
\cite{ZophLearningTransferableArchitectures2018}. They then update these 
architectures according to a variety of different mechanisms: RL formulations
\cite{baker2016designing, ZophNeuralArchitectureSearch2017, 
ZophLearningTransferableArchitectures2018}; sequential decision problems 
formulations where actions corresponding to network morphisms 
\cite{CaiEfficientArchitectureSearch2017}; Bayesian optimization formulations
\cite{bergstra2012random, MendozaAutomaticallyTunedNeuralNetworks2016}; and 
Neuro-evolutionary approaches (relatedly population dynamics or genetic 
algorithms) 
\cite{ElskenEfficientMultiObjectiveNeural2018, 
RealRegularizedEvolutionImage2019a}. Different evaluation mechanisms are used  
to evaluate the ``quality'' of current architecture iterate: lower fidelity 
models 
\cite{ZophLearningTransferableArchitectures2018} or weight inheritance 
morphisms \cite{ElskenNeuralArchitectureSearch2019, 
CaiEfficientArchitectureSearch2017} (a sort of transfer learning); learning 
curve extrapolation to estimate the final performance of an architecture before 
training has converged \cite{BakerAcceleratingNeuralArchitecture2017}; and 
one-shot models that agglomerate many architectures into a single large 
architecture that shares edges between individual architectures 
\cite{XieSNASStochasticNeural2018}. Notably these algorithms all share the  
same features: 
they are highly automatic, even accounting for the need to choose 
meta-hyperparameters; they are fairly general, since they are data-driven; 
however, they provide no closed-loop assurances. 

At the opposite end of our assessment spectra are control-based methods for 
obtaining NN controllers with assurances; we regard these methods as implicit 
architecture design techniques, since exhibiting a NN controller serves as a 
direct validation of that controller's architecture. Examples of these methods 
include: directly approximating a controller by a NN for non-affine systems 
\cite{ZhangDirectNNapproximationControl2020}; adaptively learning NN controller 
weights to ensure Input-to-State stability for certain third-order affine 
systems  \cite{WangLearningISSModularAdaptive2012,GeDirectAdaptiveNN2002}; NN 
hybrid adaptive control for stabilizing uncertain impulsive dynamical systems 
\cite{HayakawaNeuralNetworkHybrid2008}. These methods are based on the 
assertion that a function of interest can be approximated by a sufficiently 
large (usually shallow) NN: the size of this NN is explicit, which 
limits their effectiveness as architecture design methods. Even neglecting this 
shortcoming, these methods generally provide just one meaningful assurance 
(stability); they are not at all general, since they are based on approximating 
a specific, hand-designed controller; and they are thus highly manual methods.

A subset of NN \emph{verification} methods from the control system literature 
is related to the ``guess-train-evaluate''  architecture design iterations 
described above. In particular, some closed-loop NN verifiers provide 
additional dynamical information about how a NN controller fails to meet a 
specification. Examples include: using complementary analysis on NN-controlled 
linear systems, thus obtaining sufficient conditions for stability in terms of 
LMIs \cite{AydinogluStabilityAnalysisComplementarity2021}; training and 
verifying a specific NN architecture as a barrier certificate for hybrid 
systems \cite{ZhaoSynthesizingReLUNeural2021}; and using adversarial 
perturbation to verify NN control policies \cite{WangNeuralNetworkControl2020}. 
These methods  can be assessed as follows: they are highly automatic 
(verifiers); 
they are of limited generalizability, since they require either specific models 
and/or detailed knowledge of the dynamical model; and each provides one and 
only one assurance (e.g. stability or a barrier certificate).  A similar, but 
less applicable, subset of the control literature that consists of experimental 
work that suggests promising NN controller architectures. These works do not do 
automatic NN architecture design, nor do they contain verification algorithms  
of the type suggested above. Even so, they experimentally support using some 
conventional NN architectures as controllers 
\cite{PonKumarDeepLearningArchitecture2018}; and using Input Convex NNs  
(ICNNs) for controllers and system identification 
\cite{ChenOptimalControlNeural2018}.

We also acknowledge prior work on system identification using NNs, although 
they do not emphasize architecture design. These methods suffer from a lack of 
assurances on the resultant NNs/architectures 
\cite{CHENNonlinearSystemIdentification1990, 
OgunmoluNonlinearSystemsIdentification2016,  
FaltSystemIdentificationHybrid2019, 
Velazquez-VelazquezHybridRecurrentNeural2011}. 

Finally, subsequent to the publication of 
\cite{FerlezTwoLevelLatticeNeural2020a}, we became aware of other works that 
use more or less what we describe as TLL NNs  
\cite{SunGeneralLatticeRepresentation2011, HeReluDeepNeural2020}. The former is 
concerned with simplification of explicit MPC controllers rather than NN 
architecture design; the latter can be regarded as architecture design for a 
different application (although the architectures are less efficient than the  
ones presented here -- see \cref{rem:fem_remark}).

%


\section{Preliminaries} 
\label{sec:prelims}

\subsection{Notation} 
\label{sub:notation}

We denote by $\mathbb{N}$, $\mathbb{R}$ and $\mathbb{R}^+$ the set of natural 
numbers, the set of real numbers and the set of non-negative real numbers, 
respectively. For a function $f : A \rightarrow B$, let $\text{dom}(f)$ return 
the domain of $f$, and let $\text{range}(f)$ return the range of $f$. For $x 
\in \mathbb{R}^n$, we will denote by $\lVert x \rVert$ the max-norm of $x$. 
Relatedly, for $x \in \mathbb{R}^n$ and $\epsilon \geq 0$ we will denote by 
$B(x;\epsilon)$ the open ball of radius $\epsilon$ centered at $x$ as specified 
by $\lVert \cdot \rVert$, and $\overline{B}(x;\epsilon)$ its  closed-ball 
analog. Let $\text{int}(X)$ denote the interior of a set  $X\subseteq 
\mathbb{R}^n$, and $\text{bd}(X) = X \backslash \text{int}(X)$ denote its 
boundary. $e_i$ will denote the $i^\text{th}$ column of the $(n \times n)$ 
identity matrix, unless otherwise specified. Let $\text{hull}(X)$ denote the 
convex hull of a set of points $X\subseteq \mathbb{R}^n$. For $f : \mathbb{R}^n 
\rightarrow \mathbb{R}^m$, $\lVert f \rVert_\infty \triangleq  
\text{ess}\sup_{x\in \mathbb{R}^n} \lVert f(x) \rVert$, and $\lVert f 
\rVert_{X}$ will denote the same but restricted to the set $X$. The projection 
map over $\mathbb{R}^n$ will be denoted by  $\pi_i : \mathbb{R}^n \rightarrow 
\mathbb{R}$, so that $\pi_i(x)$ returns the  $i^\text{th}$ component of the 
vector $x$ (in the understood coordinate  system). Finally, given two sets $A$ 
and $B$ denote by $B^A$ the set of all functions $f: A \rightarrow B$.


\subsection{Dynamical Model} 
\label{sub:dynamical_model}
In this paper, we will assume an underlying, but not necessarily known,  
continuous-time nonlinear dynamical system specified by an ordinary 
differential equation (ODE): that is
\begin{equation}\label{eq:main_ode}
	\dot{x}(t) = f( x(t) , u(t))
\end{equation}
where the state vector $x(t) \in \mathbb{R}^n$ and the control vector $u(t) \in 
\mathbb{R}^m$. Formally, we have the following definition:
\begin{definition}[Control System]
\label{def:control_system}
	A \textbf{control system} is a tuple $\Sigma = (X, U, \mathcal{U}, f)$ 
	where

	\begin{itemize}
		\item $X \subset \mathbb{R}^n$ is the connected, compact subset of 
			the state space with non-empty interior;

		\item $U \subset \mathbb{R}^m$ is the compact set of admissible 
			controls;

		\item $\mathcal{U} \subseteq U^{\mathbb{R}^+}$ is the space of 
			admissible open-loop control functions -- i.e. $v \in \mathcal{U}$ 
			is a function $v : \mathbb{R}^+ \rightarrow U$; and

		\item $f : \mathbb{R}^n \times U \rightarrow \mathbb{R}^n$ is a 
			vector field specifying the time evolution of states according to 
			\eqref{eq:main_ode}.
	\end{itemize}
	A control system is said to be (globally) \textbf{Lipschitz} if there 
	exists constants $K_x$ and $K_u$ s.t. for all $x,x^\prime \in\mathbb{R}^n$ 
	and $u,u^\prime \in \mathbb{R}^m$: 
	\begin{equation}\label{eq:dynamics_lipshitz}
		\lVert f(x,u) - f(x^\prime, u^\prime) \rVert \leq 
			K_x \lVert x - x^\prime \rVert + K_u \lVert u - u^\prime \rVert.
	\end{equation}
\end{definition}

In the sequel, we will primarily be concerned with solutions to 
\eqref{eq:main_ode} that result from instantaneous state-feedback controllers, 
$\Psi : X \rightarrow U$. Thus, we use $\zeta_{x_0 \Psi}$ to denote the 
\emph{closed-loop} solution of \eqref{eq:main_ode} starting from initial 
condition $x_0$ (at time $t=0$) and using \emph{state-feedback controller} 
$\Psi$. We refer to such a $\zeta_{x_0\Psi}$ as a (closed-loop) 
\emph{trajectory} of its associated control system.

\begin{definition}[Closed-loop Trajectory]
	Let $\Sigma$ be a Lipschitz control system, and let $\Psi : \mathbb{R}^n 
	\rightarrow U$ be a globally Lipschitz continuous function. A 
	\textbf{closed-loop trajectory} of $\Sigma$ under controller $\Psi$ and  
	starting from $x_0 \in X$ is the function $\zeta_{x_0\Psi} : \mathbb{R}^+ 
	\rightarrow X$ that solves the integral equation:
	\begin{equation}\label{eq:feedback_integral_eq}
		\zeta_{x_0\Psi}(t) = x_0 + \int_0^t f( \zeta_{x_0\Psi}(\sigma), \Psi( \zeta_{x_0\Psi}(\sigma) )) d\sigma.
	\end{equation}
	It is well known that such solutions exist and are unique under these 
	assumptions \cite{KhalilNonlinearSystems2001}.
\end{definition}
\begin{definition}[Feedback Controllable]
	A Lipschitz control system $\Sigma$ is \textbf{feedback controllable} by a 
	Lipschitz controller $\Psi: \mathbb{R}^n \rightarrow U$ if the following is 
	satisfied:
	\begin{equation}
		\Psi \circ \zeta_{x\Psi} \in \mathcal{U} \qquad \forall x\in X.
	\end{equation} 
	If $\Sigma$ is feedback controllable for any such $\Psi$, then we simply 
	say that it is feedback controllable.
\end{definition}
Because we're interested in a compact set of states, $X$, we consider only 
feedback controllers whose closed-loop trajectories stay within $X$.
\begin{definition}[Positive Invariance]
	A feedback trajectory of a Lipschitz control system, $\zeta_{x_0\Psi}$, is 
	\textbf{positively invariant} if $\zeta_{x_0\Psi}(t) \in X$ for all $t \geq 
	0$. A controller $\Psi$ is positively invariant if $\zeta_{x_0\Psi}$ is 
	positively invariant for all $x_0 \in X$.
\end{definition}
For technical reasons, we will also need the following stronger notion of 
positive invariance.
\begin{definition}[$\delta$,$\tau$ Positive Invariance]
\label{def:delta_tau_positive_invariance}
	Let $\delta,\tau \negthinspace > \negthinspace 0$ and 
	$\text{edge}_\delta(X) \negthinspace \triangleq \negthinspace \cup_{x \in X 
	\backslash \text{int}(X)} ( X \negthinspace \cap \negthinspace B(x;\delta) 
	)$. Then a positively invariant controller $\negthinspace \Psi 
	\negthinspace$ is $\mathbf{\delta}$,$\mathbf{\tau}$ \textbf{positively 
	invariant} if \vspace{-0.2em}
	\begin{equation}
		\forall x_0 \in \text{edge}_{\delta}(X)
		\; . \; 
		\zeta_{x_0\negthinspace\Psi}(\tau) \negthinspace \in \negthinspace X \backslash \text{edge}_{\delta}(X)
	\end{equation}
	\vspace{-14pt}

	\noindent and $\Psi$ is positively invariant with respect to $X \backslash 
	\text{edge}_\delta(X)$.
\end{definition}
For a $\delta$,$\tau$ positively invariant controller, trajectories that start 
$\delta$-close to the boundary of $X$ will end up $\delta$-far 
\emph{away} from that boundary after $\tau$ seconds, and remain there forever 
after.



Finally, borrowing from \cite{ZamaniSymbolicModelsNonlinear2012}, we define a 
$\tau$-sampled transition system embedding of a feedback-controlled system.
\begin{definition}[$\tau$-sampled Transition System Embedding]
\label{def:embedding}
	Let $\Sigma=(X,U,\mathcal{U},f)$ be a feedback controllable Lipschitz 
	control system, and let $\Psi : \mathbb{R}^n \rightarrow U$ be a Lipschitz 
	continuous feedback controller. For any $\tau > 0$, the 
	$\mathbf{\tau}$\textbf{-sampled transition system embedding} of $\Sigma$ 
	under $\Psi$ is the tuple $S_\tau(\Sigma_\Psi) = (X_\tau, \mathcal{U}_\tau, 
	\xition{\Psi}{~} )$ where:

	\begin{itemize}
		\item $X_\tau = X$ is the state space;

		\item $\mathcal{U}_\tau = \{ (\Psi \circ \zeta_{x_0\Psi})|_{t\in 
			[0,\tau]}: x_0 \in X \}$ is the set of open loop control inputs 
			generated by $\Psi$-feedback, each restricted to the domain 
			$[0,\tau]$; and

		\item $\xition{\Psi}{~} \subseteq X_\tau \times \mathcal{U}_\tau 
			\times X_\tau$ such that $x \xition{\Psi}{u} x^\prime$ iff \\ 
			$\qquad$ both $u = (\Psi \circ \zeta_{x\Psi})|_{t\in [0,\tau]}$ and 
			$x^\prime = \zeta_{x\Psi}(\tau)$. 
	\end{itemize}
	$S_\tau(\Sigma_\Psi)$ is thus a \textbf{metric} transition system 
	\cite{ZamaniSymbolicModelsNonlinear2012}.
\end{definition}

\begin{definition}[Simulation Relation]
\label{def:simulation_relation}
Let $S_1 = (X_1, \mathcal{U}_1, \ltsxition{1}{~} )$ and $S_2 =  (X_2, 
\mathcal{U}_2,\ltsxition{2}{~} )$ be two metric transition systems. Then we say 
that $S_2$ simulates $S_1$, written $S_1 \precsim S_2$, if there exists a 
relation $\precsim \subseteq X_1 \times X_2$ such that

\begin{itemize}
	\item $\{x_1 | (x_1,x_2) \in \precsim\} = X_1$; and 

	\item for all $(x_1,x_2) \in \precsim$ we have
		\begin{align}
			&x_1 \ltsxition{1}{u} 
					x_1^\prime \implies \notag \\
			&\qquad\qquad \exists x_2^\prime \in X_2 . \big(  
				(x_1^\prime, x_2^\prime) \in \precsim \; \wedge \; x_2  
				\ltsxition{2}{u} x_2^\prime
				\big).
				\label{eq:ordinary_simulation}
		\end{align}
\end{itemize}
\end{definition}

\subsection{ReLU Neural Network Architectures} 
\label{sub:relu_neural_network_architectures}
We will consider controlling the nonlinear system defined in 
\eqref{eq:main_ode} with a state-feedback neural network controller $\nn$:
\begin{equation}
	\nn: X \rightarrow U
\end{equation}
where $\nn$ denotes a Rectified Linear Unit Neural Network (ReLU NN).
Such a ($K$-layer) ReLU NN is specified by composing $K$ \emph{layer} functions 
(or just \emph{layers}). A layer with $\mathfrak{i}$ inputs and $\mathfrak{o}$ 
outputs is specified by a $(\mathfrak{o} \times \mathfrak{i} )$ matrix of 
\emph{weights}, $W$, and a $(\mathfrak{o} \times 1)$ matrix of \emph{biases}, 
$b$, as follows:
\begin{align}
	L_{\theta} : \mathbb{R}^{\mathfrak{i}} &\rightarrow \mathbb{R}^{\mathfrak{o}} \notag\\
	      z &\mapsto \max\{ W z + b, 0 \}
\end{align}
where the $\max$ function is taken element-wise, and $\theta \triangleq (W,b)$ 
for brevity. Thus, a $K$-layer ReLU NN function 
is specified by $K$ layer functions $\{L_{\theta^{(i)}} : i = 1, \dots, K\}$ 
whose input and output dimensions are \emph{composable}: that is, they satisfy 
$\mathfrak{i}_{i} = \mathfrak{o}_{i-1}$ for $i = 2, \dots, K$. Specifically:
\begin{equation}
	\nn(x) = (L_{\theta^{(K)}} \circ L_{\theta^{(K-1)}} \circ \dots \circ L_{\theta^{(1)}})(x).
\end{equation}
When we wish to make the dependence on parameters explicit, we will index a 
ReLU function $\nn$ by a \emph{list of matrices} $\Theta \triangleq ( 
\theta^{(1)}, \dots , \theta^{(K)} )$ \footnote{That is, $\Theta$ is not the 
concatenation of the $\theta^{(i)}$ into a single large matrix, so it preserves 
information about the sizes of the constituent $\theta^{(i)}$.}.

Fixing the number of layers and the \emph{dimensions} of the associated 
matrices $\theta^{(i)} = (\; W^{(i)}, b^{(i)}\; )$ specifies the 
\emph{architecture} of a fully-connected ReLU NN. Therefore, we will use:
\begin{equation}
	\text{Arch}(\Theta) \negthinspace \triangleq \negthinspace ( (n,\mathfrak{o}_{1}), (\mathfrak{i}_{2},\mathfrak{o}_{2}), \ldots, 
	(\mathfrak{i}_{K}, m))
\end{equation}
to denote the architecture of the ReLU NN $\nn_{\Theta}$.

Since we are interested in designing ReLU architectures, we will also need the 
following result from \cite[Theorem 7]{FerlezAReNAssuredReLU2020}, which states 
that a Continuous, Piecewise Affine (CPWA) function, $\mathsf{f}$, can be 
implemented exactly using a Two-Level-Lattice (TLL) NN architecture that is 
parameterized by the local linear functions in $\mathfrak{f}$.

\begin{definition}[Local Linear Function]
	Let $\mathsf{f} : \mathbb{R}^n \rightarrow \mathbb{R}^m$ be CPWA. Then a 
	\textbf{local linear function of} $\mathsf{f}$ is a linear function $\ell : 
	\mathbb{R}^n \rightarrow \mathbb{R}^m$ if there exists an open set 
	$\mathfrak{O}\subseteq \mathbb{R}^n$ such that $\ell(x) = \mathsf{f}(x)$ 
	for all $x\in \mathfrak{O}$.
\end{definition}

\begin{definition}[Linear Region]\label{def:linear_region}
	Let $\mathsf{f} : \mathbb{R}^n \rightarrow \mathbb{R}^m$ be CPWA. Then a 
	\textbf{linear region of} $\mathsf{f}$ is the largest set $\mathfrak{R} 
	\subseteq \mathbb{R}^n$ such that $\mathsf{f}$ has only one local linear 
	function on $\text{int}(\mathfrak{R})$.
\end{definition}

\begin{theorem}[Two-Level-Lattice (TLL) NN Architecture {[}7, Theorem 7{]}]
\label{thm:tll_architecture}
	Let $\mathsf{f}:\mathbb{R}^n \rightarrow \mathbb{R}^m$ be a CPWA function, 
	and let $\bar{N}$ be an upper bound on the number of local linear functions 
	in $\mathsf{f}$. Then there is a Two-Level-Lattice (TLL) NN architecture 
	$\text{Arch}(\tllTheta_{\bar{N}})$ parameterized by $\bar{N}$ and values of 
	$\tllTheta_{\bar{N}}$ such that:
	\begin{equation}
		\mathsf{f} = \nn_{\negthickspace\tllTheta_{\bar{N}}}.
	\end{equation}
	In particular, the number of linear regions of $\mathsf{f}$ is such an 
	upper bound on the number of local linear functions.
\end{theorem}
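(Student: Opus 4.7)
The plan is to invoke a known lattice representation of CPWA functions and then translate that representation faithfully into a ReLU network whose size depends only on $\bar{N}$ (and the fixed input/output dimensions $n,m$). First, I would appeal to the classical Tarela--Alonso-Castro lattice theorem: any CPWA $\mathsf{f}:\mathbb{R}^n \to \mathbb{R}^m$ can be written componentwise as
\[
\pi_k \circ \mathsf{f}(x) \;=\; \max_{j \in J_k}\, \min_{i \in S_{k,j}} \ell^{(k)}_i(x),
\]
where $\ell^{(k)}_1,\dots,\ell^{(k)}_{N_k}$ are the local linear functions of $\pi_k \circ \mathsf{f}$ and the selector sets $S_{k,j} \subseteq \{1,\dots,N_k\}$ are indexed by $j \in J_k$. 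Since $N_k \leq \bar{N}$ and the list of local linear functions may be padded by harmless repetition, it is enough to build an architecture capable of realizing an arbitrary max-of-mins over $\bar{N}$ affine functions per output coordinate.

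Next, I would construct the TLL network as three conceptual blocks, replicated for each of the $m$ output coordinates. The first block is a single affine layer that simultaneously evaluates all $\bar{N}$ candidate affine functions; because a ReLU layer is not itself linear, this is realized through the identity trick $z = \mathrm{ReLU}(Wx+b) - \mathrm{ReLU}(-(Wx+b))$, giving an affine output of fixed depth and width $O(\bar{N})$. The second block computes the minimum over each selector set $S_{k,j}$ using iterated binary gadgets of the form $\min(a,b) = b - \mathrm{ReLU}(b-a)$. The third block takes the maximum over these selector outputs using the companion gadget $\max(a,b) = \mathrm{ReLU}(a-b)+b$. Each binary gadget has constant depth, so the overall depth is logarithmic in $\bar{N}$ under a balanced reduction tree and the total neuron count is bounded by a fixed polynomial in $\bar{N}$; crucially, the shape of the resulting network depends only on $\bar{N}$, $n$, and $m$, not on the particular CPWA $\mathsf{f}$.

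Correctness then reduces to two checks: that, with appropriately chosen weights $\tllTheta_{\bar{N}}$, the composed network evaluates exactly to the max-of-mins expression above, and hence to $\mathsf{f}$; and that the number of linear regions of $\mathsf{f}$ is indeed a valid value of $\bar{N}$. The latter is immediate from \cref{def:linear_region}: $\mathsf{f}$ agrees with a single local linear function on the interior of each linear region, so distinct local linear functions necessarily lie in distinct linear regions.

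The main obstacle I anticipate is ensuring that the construction yields a single fixed-shape architecture parameterized \emph{only} by $\bar{N}$ (together with $n$ and $m$). The number and sizes of the selector sets $S_{k,j}$ delivered by the Tarela-type lattice theorem can depend on the combinatorial geometry of $\mathsf{f}$, so I would need to pad to a uniform worst case expressible purely in terms of $\bar{N}$ --- for example, taking the canonical family indexed by all subsets of $\{1,\dots,\bar{N}\}$, or exploiting the observation that only selector sets naturally induced by the local linear functions themselves are needed. This padding step, together with the identity trick in the affine layer, is the delicate part of the construction; once it is in place, correctness is a direct algebraic verification that the ReLU gadgets compute $\min$ and $\max$ as stated.
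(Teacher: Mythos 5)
Your overall route is the same one taken in the cited source of this result (\cite{FerlezAReNAssuredReLU2020}): the paper does not reprove \cref{thm:tll_architecture} but imports it, and the imported proof is exactly ``lattice (max-of-mins) representation of a CPWA, realized by a linear layer feeding $\min$-networks feeding a $\max$-network built from the standard two-term ReLU gadgets.'' Your verification that the number of linear regions upper-bounds the number of local linear functions is also correct and matches \cref{def:linear_region}.

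However, there is a genuine gap, and it sits precisely at the step you defer as ``the delicate part.'' The lattice theorem hands you selector sets $S_{k,j}$ indexed by some $J_k$ whose cardinality and contents depend on the combinatorial geometry of $\mathsf{f}$, so to get an architecture parameterized \emph{only} by $\bar{N}$ you must bound $|J_k|$ by a function of $\bar{N}$ and show the padding is harmless. Your first fallback --- indexing the second level by all subsets of $\{1,\dots,\bar{N}\}$ --- does yield a representable network, but it has $2^{\bar{N}}$ $\min$-units and is therefore not the TLL architecture: it destroys the polynomial-in-$\bar{N}$ neuron count that the paper explicitly relies on downstream (see \cref{rem:fem_remark} and the sizes in \cref{thm:main_theorem}). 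Your second fallback --- ``only selector sets naturally induced by the local linear functions are needed'' --- is the right idea but is exactly the content of the key lemma you would have to prove: one shows that \emph{one selector set per linear region} suffices, namely (per linear region $R$ with active function $\ell_{i_R}$) the set of indices $j$ with $\ell_j \geq \ell_{i_R}$ on $R$, and then proves both that the resulting $\min$ equals $\ell_{i_R}$ on $R$ and that it never exceeds $\mathsf{f}$ elsewhere; the latter requires a continuity/path argument and is not ``a direct algebraic verification.'' This is also why the theorem's closing sentence matters: the number of selector sets is controlled by the number of \emph{linear regions}, not by the number of local linear functions, so $\bar{N}$ must in practice be taken as a bound on the former (as it is in \cref{cor:parallel_tll}). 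Without this lemma your construction either has the wrong size class or an unproven correctness claim for the second level.
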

In this paper, we will find it convenient to use \cref{thm:tll_architecture} to 
create TLL architectures \emph{component-wise}. To this end, we define the 
following notion of NN composition.
\begin{definition}
	\label{def:parallel_composition}
	Let $\nn_{\Theta_{\scriptscriptstyle 1}}$ and 
	$\nn_{\Theta_{\scriptscriptstyle 2}}$ be two $K$-layer NNs with parameter 
	lists:
	\begin{equation}
		\Theta_i = ((W^{\scriptscriptstyle |1}_i, b^{\scriptscriptstyle |1}_i), \dots, (W^{\scriptscriptstyle |K}_i, b^{\scriptscriptstyle |K}_i)), \quad i = 1,2.
	\end{equation}
	Then the \textbf{parallel composition} of $\nn_{\Theta_{\scriptscriptstyle 
	1}}$ and $\nn_{\Theta_{\scriptscriptstyle 2}}$ is a NN given by the 
	parameter list
	\begin{equation}
		\Theta_{1} \parallel \Theta_{2} \triangleq \big(\negthinspace
			\left(
				\negthinspace
				\left[
					\begin{smallmatrix}
						W^{\scriptscriptstyle |1}_1 \\
						W^{\scriptscriptstyle |1}_2
					\end{smallmatrix}
				\right],
				\left[
					\begin{smallmatrix}
						b^{\scriptscriptstyle |1}_1 \\
						b^{\scriptscriptstyle |1}_2
					\end{smallmatrix}
				\right]
				\negthinspace
			\right),
			{\scriptstyle \dots},
			\left(
				\negthinspace
				\left[
					\begin{smallmatrix}
						W^{\scriptscriptstyle |K}_1 \\
						W^{\scriptscriptstyle |K}_2
					\end{smallmatrix}
				\right],
				\left[
					\begin{smallmatrix}
						b^{\scriptscriptstyle |K}_1 \\
						b^{\scriptscriptstyle |K}_2
					\end{smallmatrix}
				\right]
				\negthinspace
			\right)
		\negthinspace\big).
	\end{equation}
	That is $\Theta_{1} \negthickspace \parallel \negthickspace \Theta_{2}$ 
	accepts an input of the same size as (both) $\Theta_1$ and $\Theta_2$, but 
	has as many outputs as $\Theta_1$ and $\Theta_2$ combined.
\end{definition}
\begin{corollary}
\label{cor:parallel_tll}
	Let $\mathsf{f}: \mathbb{R}^n \rightarrow \mathbb{R}^m$ be a CPWA function, 
	each of whose $m$ \textbf{component} CPWA functions are denoted by 
	$\mathsf{f}_i : x \mapsto \pi_i(\mathsf{f}(x))$, and let $\bar{N}$ be an 
	upper bound on the number of linear regions in each $\mathsf{f}_i$.

	Then for $\tllTheta_{\bar{N}}$ a TLL architecture representing a CPWA  
	$\mathbb{R}^n\rightarrow \mathbb{R}$, the $m$-fold \textbf{parallel} TLL 
	architecture:
	\begin{equation}
		\tllThetaPar_{^n_m\negthinspace\bar{N}}
		\triangleq
		\tllTheta_{\bar{N}}
			\parallel
		\dots
			\parallel
		\tllTheta_{\bar{N}}
	\end{equation}
	has the property that $\mathsf{f} = \nn_{\tllThetaPar_{^n_m\negthinspace\bar{N}}}$.
\end{corollary}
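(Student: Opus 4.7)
The plan is to invoke \cref{thm:tll_architecture} componentwise on $\mathsf{f}$ and then assemble the resulting scalar TLLs into a vector-valued network via the parallel composition of \cref{def:parallel_composition}. First, decompose $\mathsf{f}$ into its $m$ scalar components $\mathsf{f}_i = \pi_i \circ \mathsf{f}$; each is itself CPWA, and its number of distinct local linear functions is upper-bounded by its number of linear regions, which by hypothesis is at most $\bar{N}$.

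Second, apply \cref{thm:tll_architecture} separately to each $\mathsf{f}_i : \mathbb{R}^n \to \mathbb{R}$. For each $i$ this yields a parameter list $\tllTheta^{(i)}_{\bar{N}}$ realizing the common TLL architecture $\text{Arch}(\tllTheta_{\bar{N}})$---which depends only on $n$, $\bar{N}$, and the scalar output convention---and satisfying $\mathsf{f}_i = \nn_{\tllTheta^{(i)}_{\bar{N}}}$. The $m$-fold parallel composition in the statement should then be read as $\tllThetaPar_{^n_m\negthinspace\bar{N}} = \tllTheta^{(1)}_{\bar{N}} \parallel \cdots \parallel \tllTheta^{(m)}_{\bar{N}}$; the reuse of the symbol $\tllTheta_{\bar{N}}$ refers to the shared architecture, not to identical parameter values.

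Third, I invoke the defining property of parallel composition from \cref{def:parallel_composition}: on a single shared input $x \in \mathbb{R}^n$, the composite network emits the concatenation of its factors' outputs. Consequently, for all $i$ and all $x$,
\begin{equation}
	\pi_i\bigl(\nn_{\tllThetaPar_{^n_m\negthinspace\bar{N}}}(x)\bigr)
		= \nn_{\tllTheta^{(i)}_{\bar{N}}}(x)
		= \mathsf{f}_i(x)
		= \pi_i(\mathsf{f}(x)),
\end{equation}
which yields the claimed identity $\mathsf{f} = \nn_{\tllThetaPar_{^n_m\negthinspace\bar{N}}}$ as vector-valued functions on $\mathbb{R}^n$.

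The only technical point deserving any care is the property just invoked---namely, that parallel composition really does emit the concatenation of its factors' outputs across all $K$ layers without any cross-branch interference. This follows by a short induction on layer index: because the ReLU nonlinearity acts elementwise, row-stacking the first-layer matrices preserves independence of the $m$ branches, and the subsequent layers can be organized so that the branches remain decoupled on disjoint blocks of neurons. Beyond this bookkeeping, the corollary is immediate from \cref{thm:tll_architecture} and \cref{def:parallel_composition}.
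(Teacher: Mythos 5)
Your proposal is correct and follows exactly the paper's own (one-line) proof, which simply applies \cref{thm:tll_architecture} component-wise and assembles the resulting scalar TLLs via \cref{def:parallel_composition}; your additional remarks---that the shared symbol $\tllTheta_{\bar{N}}$ denotes a common architecture rather than common parameter values, and that the post-first layers must be arranged block-diagonally so the branches stay decoupled---are accurate clarifications of details the paper leaves implicit.
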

\begin{proof}
	Apply \cref{thm:tll_architecture} component-wise.
\end{proof}

Finally, note that a ReLU NN function, $\nn$, is known to be a continuous, 
piecewise affine (CPWA) function consisting of finitely many linear segments. 
Thus, a function $\nn$ is itself necessarily globally Lipschitz continuous.

\subsection{Notation Pertaining to Hypercubes} 
\label{sub:notation_pertaining_to_hypercubes}
Since the unit ball of the max-norm, $\lVert \cdot \rVert$, on $\mathbb{R}^n$ 
is a hypercube, we will make use of the following notation.
\begin{definition}[Face/Corner of a hypercube]
\label{def:face_corner}
	Let $C_n = [0,1]^n$ be a unit hypercube of dimension $n$. A set $F 
	\subseteq C_n$ is a $k$-dimensional \textbf{face} of $C_n$ if there exists 
	a set $J \subseteq \{1, \negthinspace \dots, \negthinspace n\}$ such that 
	$|J| = n \negthinspace - \negthinspace k$ and
	\begin{equation}
		\forall x \in F ~.~ \bigwedge_{j \in J} \Big( \pi_j(x) \in \{0,1\} \Big).
	\end{equation}
	Let $\mathscr{F}_k(C_n)$ denote the set of $k$-dimensional faces of $C_n$, 
	and let $\mathscr{F}(C_n)$ denote the set of all faces of $C_n$ (of any 
	dimension). A \textbf{corner} of $C_n$ is a $0$-dimensional face of $C_n$. 
	Furthermore, we will use the notation $F_i^k$ to denote an  
	\textbf{full-dimensional face} ($n-1$-dimensional in this case) whose index 
	set $J = \{i\}$ and whose projection on the $i^\text{th}$ coordinate is $k 
	\in \{0,1\}$.

	We extend these definitions by isomorphism to any other hypercube in 
	$\mathbb{R}^n$.
\end{definition}



\section{Problem Formulation: NN Architectures for Control} 
\label{sec:problem_statement}

We begin by stating the first main problem that we will consider in this  
paper: that of designing an assured NN architecture for nonlinear control  
(hereafter referred to as the \textbf{controller architecture problem}).  
Specifically, we wish to identify a ReLU architecture to be used as a feedback 
controller for the control system $\Sigma$; this architecture must further come 
with the \emph{assurance} that there exist parameter weights for which the 
realized NN controller controls $\Sigma$ to some proscribed specification.

However, for pedagogical reasons, we will state two versions of the controller 
architecture problem in this section. The first will be somewhat generic in 
order to motivate a crucial innovation of this paper: a new simulation 
relation, Abstract Disturbance Simulation (ADS) (see also 
\cite{FerlezTwoLevelLatticeNeural2020a}). The second formulation of this 
problem, then, actually incorporates ADS into a formal problem statement, where 
it serves to facilitate the design of assured controller architectures. Our 
solution of this second, more specific version, is the main contribution of 
this paper, and  appears as \cref{thm:main_theorem} in 
\cref{sec:a_relu_architecture_for_nonlinear_systems}.


\subsection{Generic Controller Architecture Problem} 
\label{sub:generic_control_architecture_problem}

As noted in \cref{sec:introduction}, designing an \emph{assured} NN 
architecture for control hinges on the well-posedness of the desired (binary) 
closed-loop specification, and this is as much a statement about the 
specification as it is about the architecture. Thus, a formal problem of NN 
architecture design (for control) necessarily begins with a framework for 
describing closed-loop system specifications.


To this end, we will formulate our controller architecture problem in terms of 
a $\tau$-sampled metric transition system embedding of the underlying 
continuous-time models (see \cref{sub:dynamical_model}). Although this choice 
may seem an unnatural deviation from the underlying continuous-time models, it 
affords two important benefits. First, metric transition systems come with a 
natural and flexible notion of specification satisfaction in the form of 
(bi)simulation relations. In this paradigm, specifications are described by 
means of \emph{another} transition system that encodes the specification; the 
original system then satisfies the specification if it is simulated by the 
(transition system) encoding of the specification. Importantly, it is well 
known that a diverse array of specifications can be captured in this context, 
among them LTL formula satisfaction \cite{TabuadaVerificationControlHybrid2009} 
and stability. Secondly, the sample period $\tau$ constitutes an additional 
degree of freedom in the specification relative to the original continuous-time 
system (or a proscribed fixed sample-period embedding); this extra degree of 
freedom will facilitate the development of assured  NN architectures.

From this, we consider the following generic formulation of a controller 
architecture design problem.
\begin{problem}[Controller Architecture Design -- Generic Formulation]
\label{prob:main_problem_generic}
	Let $\tau > 0$ and $K_\text{cont} \negthinspace > \negthinspace 0$ be 
	given. Let $\Sigma$ be a feedback controllable Lipschitz control system, 
	and let $S_\text{spec} \negthinspace = \negthinspace 
	(X_\text{spec},U_\text{spec}, 
	\ltsxition{S_\text{spec}\negthickspace\negthickspace}{~})$ be a transition 
	system encoding for a specification on $\Sigma$.

	Now, suppose that there exists a 
	Lipschitz-continuous controller $\Psi: \mathbb{R}^n \rightarrow U$ with 
	Lipschitz constant $K_\Psi \le K_\text{cont}$ s.t.:
	\begin{equation}
	\label{eq:main_prob_spec_assumtion_generic}
		S_\tau(\Sigma_\Psi)
			\preceq
		S_\text{spec}.
	\end{equation}

	\noindent Then the problem is to find a ReLU architecture, 
	$\text{Arch}(\Theta)$, with the property that there exists values for 
	$\Theta$ such that:
	\begin{equation}\label{eq:main_prob_spec_conclusion_generic}
		S_{\tau}(\Sigma_{\negthinspace\nn\negthinspace_\Theta})
			\preceq
		S_\text{spec}.
	\end{equation}
	In both \eqref{eq:main_prob_spec_assumtion_generic} and 
	\eqref{eq:main_prob_spec_conclusion_generic}, $\precsim$ is as defined in 
	\cref{def:simulation_relation}.
\end{problem}


The main assumption in \cref{prob:main_problem_generic} is that there exists a 
controller $\Psi$ which satisfies the specification, $S_\text{spec}$. We use 
this assumption primarily to help ensure that the problem is well posed:  
indeed, it is known that there are nonlinear control problems for which 
\emph{no} continuous controllers exits. Thus, this assumption is in some sense 
an essential requirement to formulate a well-posed controller architecture 
problem: for if there exists no such $\Psi$, then there could be no NN 
controller that satisfies the specification, either, since the latter also  
belongs to the class of Lipschitz continuous functions (modulo discrepancies  
in Lipschitz constants). 
In this way, the existence of a controller $\Psi$ also subsumes any possible 
conditions on the nonlinear system that one might wish to impose: 
stabilizability, 
for example.

That \cref{prob:main_problem_generic} is more or less ill-posed without 
assuming the existence of a controller $\Psi$ (for \emph{some} constant 
$K_\text{cont}$) suggests a natural solution to the problem. In particular, a 
NN architecture can be bootstrapped from this knowledge by simply designing an 
architecture that is sufficiently parameterized as to adequately approximate 
\emph{any} such $\Psi$, i.e. any function of Lipschitz constant at most 
$K_\text{cont}$ (this is a preview of the approach we will subsequently use). 
Unfortunately, however, this approach also reveals a deficiency in the 
assumption associated with $\Psi$: controller approximation necessarily 
introduces instantaneous control errors relative to $\Psi$, and these errors 
can compound transition upon transition from the dynamics. As a consequence, 
the assumed information about $\Psi$ is actually not as immediately helpful as 
it appears. \emph{In particular, if $\Psi$ is not very robust, then the 
accumulation of such errors could make it impossible to prove that an 
(approximate) NN controller satisfies the same specification as $\Psi$, viz. 
\eqref{eq:main_prob_spec_conclusion_generic}. }

This effect can be seen directly in terms of the simulation relations in 
\eqref{eq:main_prob_spec_assumtion_generic} and 
\eqref{eq:main_prob_spec_conclusion_generic}. Take $x \in X_\tau$ and consider 
two transitions from $x$: one in $S_\tau(\Sigma_\Psi)$ given by $x  
\xition{\Psi}{~} x^\prime$ and one in 
$S_{\tau}(\Sigma_{\negthinspace\nn\negthinspace_\Theta})$ given by $x  
\xition{{\negthinspace {\nn_\Theta}}}{~} x^{\prime\prime}$. Note that if 
$\nn_\Theta$ merely approximates $\Psi$, there will in general be a 
discrepancy between $x^\prime$ and $x^{\prime\prime}$, i.e. $x^\prime \neq  
x^{\prime\prime}$. Thus, although $x^\prime$ necessarily has a simulating state 
in $S_\text{spec}$ by assumption, $x^{\prime\prime}$ need not have its own 
simulating state in $S_\text{spec}$. This follows because the simulation 
relation in \cref{def:simulation_relation} can assert simulating states only 
through transitions (i.e. \eqref{eq:ordinary_simulation}), and there may be no 
transition $x \xition{\Psi}{~} x^{\prime\prime}$ in $S_\tau(\Sigma_\Psi)$.

\subsection{Abstract Disturbance Simulation} 
\label{sub:perturbation_}

Motivated by the observations above, we propose a new simulation relation as a 
formal notion of specification satisfaction for metric transition  systems; we 
call this relation \emph{abstract disturbance simulation} or ADS (see also 
\cite{FerlezTwoLevelLatticeNeural2020a}). Simulation by means of an ADS 
relation is stronger than ordinary simulation   
(\cref{def:simulation_relation}) in order to incorporate a notion of 
robustness. Thus, abstract disturbance simulation is inspired by -- and is 
related to -- both robust bisimulation 
\cite{KurtzRobustApproximateSimulation2020} and especially disturbance 
bisimulation \cite{MallikCompositionalSynthesisFiniteState2019}. Crucially 
however, it abstracts those notions away from their definitions in terms of 
specific control system embeddings and explicit modeling of disturbance inputs. 
As a result, ADS can then be used in a generic context such as the one 
suggested by \cref{prob:main_problem_generic}.

Fundamentally, ADS still functions in terms of conventional simulation 
relations; however, it incorporates robustness by first augmenting the 
simulated system with ``virtual'' transitions, each of which has a target that 
is perturbed from the target of a corresponding ``real'' transition. In this 
way, it is conceptually similar to the technique used in 
\cite{ZamaniSymbolicModelsNonlinear2012} and 
\cite{PolaApproximatelyBisimilarSymbolic2008} to define a quantized 
abstraction, where deliberate non-determinism is introduced in order to account 
for input errors. As a result of these additional transitions, when a metric 
transition system is ADS simulated by a specification, this implies that the 
system robustly satisfies the specification relative to satisfaction merely by 
an ordinary simulation relation.
%
%

As a prerequisite for defining ADS, we introduce the following definition: it 
captures the idea of augmenting a metric transition system with virtual, 
perturbed transitions.
\begin{definition}[Perturbed Metric Transition System]
\label{def:perturbed_system}
	Let $S = (X, U, \ltsxition{S}{~})$ be a metric transition system where $X 
	\subseteq X_M$ for a metric space $(X_M, d)$. Then the 
	\textbf{$\delta$-perturbed metric transition system} of $S$, 
	$\mathfrak{S}^\delta$, is a tuple $\mathfrak{S}^\delta = (X, U, 
	\ltsxition{\mathfrak{S}\negthinspace\overset{\delta}{~} 
	\thickspace\thinspace}{~})$ where the (altered) transition relation, 
	$\ltsxition{\mathfrak{S}\negthinspace\overset{\delta}{~} 
	\thickspace\thinspace}{~}$, is defined as:
	\begin{equation}
		x \hspace{-0.5mm} 
		\ltsxition{\mathfrak{S}\negthinspace\overset{\delta}{~} \thickspace\thinspace}{u}
		x^\prime
		\text{ iff } \;
		\exists x^{\prime\prime} \hspace{-0.8mm} \in \hspace{-0.5mm}  X \text{ s.t. } d(\hspace{-0.2mm} x^{\prime\prime} \hspace{-0.2mm} , \hspace{-0.2mm} x^\prime \hspace{-0.2mm}) \hspace{-0.4mm} \leq  \hspace{-0.3mm} \delta \text{ and } x \ltsxition{S}{u} x^{\prime\prime}\hspace{-0.8mm}.
	\end{equation}
\end{definition}
Note that $\mathfrak{S}^\delta$ has identical states and input labels to $S$, 
and it also subsumes all of the transitions therein, i.e. $\ltsxition{S}{~} 
\thinspace \subset \thinspace 
\ltsxition{\mathfrak{S}\negthinspace\overset{\delta}{~} 
\thickspace\thinspace}{~}$. However, as noted above, the transition relation 
for $\mathfrak{S}^\delta$ explicitly contains new nondeterminism relative to 
the transition relation of $S$; each additional nondeterministic transition 
is obtained by perturbing the target state of a transition in $S$.

Using this definition, we can finally define an 
abstract disturbance simulation between two metric transition systems.
\begin{definition}[Abstract Disturbance Simulation]
\label{def:delta_perturbation_simulation} 
	Let $S = (X_S, U, \ltsxition{S}{~})$ and $T = (X_T, U_T, \ltsxition{T}{~})$ 
	be metric transition systems whose state spaces $X_S$ and $X_T$ are subsets 
	of the same metric space $(X_M, d)$. Then $T$ \textbf{abstract-disturbance 
	simulates} $S$ under disturbance $\delta$, written $S 
	\preceq_{{\mathcal{AD}_\delta}} T$ if there is a relation $R \subseteq X_S 
	\times X_T$ such that

	\begin{enumerate}
		\item for every $(x,y) \in R$, $d(x,y) \leq \delta$;

		\item for every $x \in X_S$ there exists a pair $(x,y) \in R$; and

		\item for every $(x,y) \in R$ and $x 
			\ltsxition{\mathfrak{S}\negthinspace\overset{\delta}{~} 
			\thickspace\thinspace}{u} x^\prime$ there exists a $y 
			\ltsxition{T}{v} y^\prime$ such that $(x^\prime, y^\prime) \in R$.
	\end{enumerate}
\end{definition}

\begin{remark}
	$\preceq_{\mathcal{AD}_0}$ corresponds with the usual notion of simulation 
	for metric transition systems. Thus,
	\begin{equation}
		S \preceq_{\mathcal{AD}_\delta} T \Leftrightarrow \mathfrak{S}^\delta \preceq_{\mathcal{AD}_0} T.
	\end{equation}
\end{remark}


\subsection{Main Controller Architecture Problem} 
\label{sub:main_controller_architecture_problem}

Using ADS for specification satisfaction, we can now state the version of 
\cref{prob:main_problem_generic} that we will consider and solve as the main 
result of this paper.

\begin{problem}[Controller Architecture Design]
\label{prob:main_problem}
	Let $\delta \negthinspace > \negthinspace 0$, $\tau > 0$ and $K_\text{cont} 
	\negthinspace > \negthinspace 0$ be given. Let $\Sigma$ be a feedback 
	controllable Lipschitz control system, and let $S_\text{spec} \negthinspace 
	= \negthinspace (X_\text{spec},U_\text{spec}, 
	\ltsxition{S_\text{spec}\negthickspace\negthickspace}{~})$ be a transition 
	system encoding for a specification on $\Sigma$.

	Now, suppose that there exists a $\delta$,$\tau$ positively invariant 
	Lipschitz-continuous controller $\Psi: \mathbb{R}^n \rightarrow U$ with 
	Lipschitz constant $K_\Psi \le K_\text{cont}$ such that:
	\begin{equation}
	\label{eq:main_prob_spec_assumtion}
		S_\tau(\Sigma_\Psi)
			\preceq_{\mathcal{AD}_\delta}
		S_\text{spec}.
	\end{equation}

	\noindent Then the problem is to find a ReLU architecture, 
	$\text{Arch}(\Theta)$, with the property that there exists values for 
	$\Theta$ such that:
	\begin{equation}\label{eq:main_prob_spec_conclusion}
		S_{\tau}(\Sigma_{\negthinspace\nn\negthinspace_\Theta})
			\preceq_{\mathcal{AD}_0}
		S_\text{spec}.
	\end{equation}
\end{problem}

\cref{prob:main_problem} is distinct from \cref{prob:main_problem_generic} in 
two crucial ways. The first of these is the foreshadowed use of ADS for 
specification satisfaction with respect to $\Psi$. In particular, we now assume 
the existence of a controller, $\Psi$, that satisfies the specification up to 
some fixed robustness margin $\delta$, as captured by 
$\preceq_{\mathcal{AD}_\delta}$. This will be the main technical facilitator of 
our solution, since it enables the design of a NN architecture around the 
(still unknown) controller $\Psi$.

However, \cref{prob:main_problem} also has a second additional assumption 
relative to \cref{prob:main_problem_generic}: the controller $\Psi$ must also 
be  $\delta$-$\tau$ positive invariant with respect to the compact subset of 
states under consideration, $X$ -- see 
\cref{def:delta_tau_positive_invariance}. This is a technically, but not 
conceptually, relevant assumption, and it is an artifact of the fact that we 
are confining ourselves to a compact subset of the state space. It merely 
ensures that those $\delta$ perturbations created internally to 
$\preceq_{\mathcal{AD}_\delta}$ will lie entirely within $X$. $\delta$-$\tau$ 
invariance captures this by asserting that those states within $\delta$ of the 
boundary of $X$ (i.e. $\text{edge}_\delta(X)$) are ``pushed'' sufficiently 
strongly towards the interior of $X$ so that after $\tau$ seconds, they are no 
longer $\delta$ close to the boundary of $X$ (i.e. in $\text{edge}_\delta(X)$). 
Thus, every transition in $S_\tau(\Sigma_\Psi)$ starting from 
$\text{edge}_\delta(X)$ has a target in $X \backslash  \text{edge}_\delta(X)$, 
and the $\delta$-perturbed version of $S_\tau(\Sigma_\Psi)$ has no transitions 
with targets outside of $X$.




 %



\section{ReLU Architectures for Controlling Nonlinear Systems} 
\label{sec:a_relu_architecture_for_nonlinear_systems}

We are almost able to state main theorem of this paper: that is 
\cref{thm:main_theorem}, which directly solves \cref{prob:main_problem}. As a 
necessary prelude, though, we introduce the following two definitions. The 
first formalizes the distance between coordinate-wise upper and lower bounds of 
a compact set (\cref{def:extent}). The second formalizes the notion of a 
(rectangular) grid of points that is sufficiently fine to cover a compact set 
$X$ by $\sup$-norm balls of a fixed size (\cref{def:eta_grid}).

\begin{definition}[Extent of $X$]
\label{def:extent}
	The \textbf{extent} of a compact set $X$ is defined as:
	\begin{equation}
		\text{ext}(X) \triangleq \max_{k=1,\dots,n} \left| 
		\max_{x \in X} \pi_k(x) - \min_{x\in X}\pi_k(x)
		\right|.
	\end{equation}
\end{definition}
Indeed, the extent of a compact set may also be regarded as the smallest edge 
length of a hypercube that can contain $X$.
\begin{definition}[$\eta$-grid]\label{def:eta_grid}
	Let $\eta > 0$ be given, and let $X$ be compact and connected with  
	non-empty interior. Then a set $X_\eta \subset X$ is an $\eta$-grid of $X$ 
	if
	\begin{itemize}
		\item the set of $\eta/2$-balls, $X_\text{part} \triangleq  
			\{B(\mathbf{x};\eta/2) | \mathbf{x} \in X_\eta\}$, has the 
			properties that:

			\begin{enumerate}
				\item for all $\mathbf{x}, \mathbf{x}^\prime \in  X_\eta$ 
					and $i = 1, \dots, n$, there is an integer $k_i \in 
					\mathbb{Z}$ such that $\pi_i(\mathbf{x}) = \pi_i(\mathbf{x}^\prime) + k_i 
					\cdot \eta$; and 

				\item $X \subseteq \bigcup_{\mathbf{x} \in X_\eta}  
					\overline{B}(\mathbf{x};\eta)$.
			\end{enumerate}
	\end{itemize}
	Elements of $X_\eta$ will be denoted by bold-face font, i.e.  $\mathbf{x} 
	\in X_\eta$.

	
\end{definition}
Note: \emph{1)} asserts that the elements of $X_\eta$ are spaced on a 
rectangular grid, and \emph{2)} asserts that centering a closed ball at each 
element of $X_\eta$ covers the set $X$. 
\begin{remark}
	It is not the case that a compact and connected set with non-empty interior 
	necessarily has an $\eta$ grid for any arbitrary choice of $\eta$; see 
	\cref{fig:not_eta_grid} for an illustration.
\end{remark}
\begin{figure}[h]
	\centering
	\includegraphics[width=0.46\textwidth, trim={0 0 0 0},clip]{./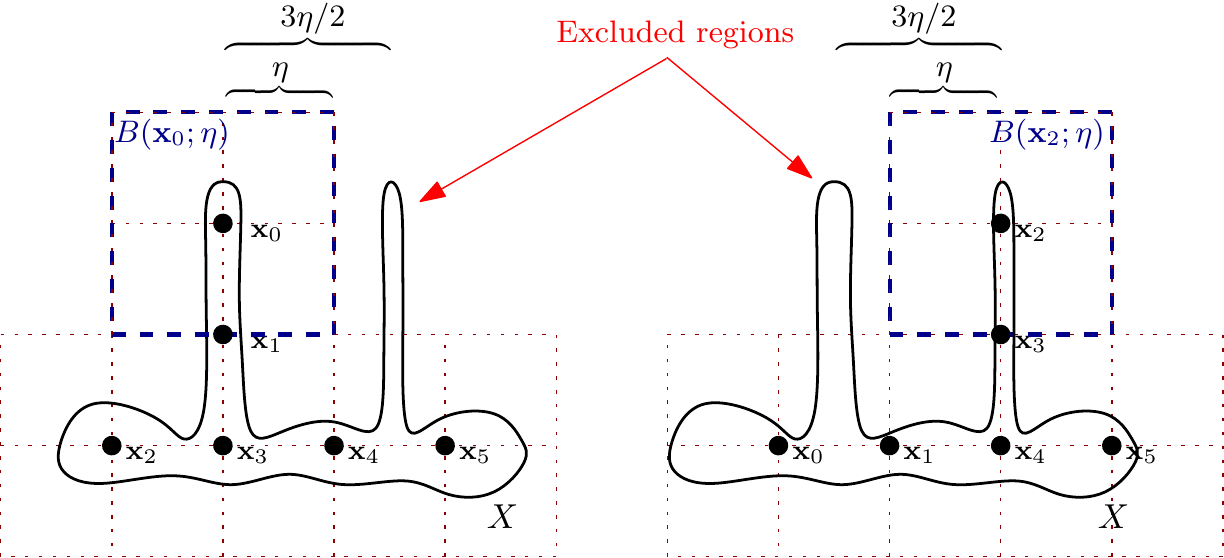} %
	\caption{Illustration of a set $X$ and a choice of $\eta$ for which no $\eta$-grid exists: any attempt to cover one ``spike'' with grid points will result in excluding the other. (Red dashed lines indicate $\eta/2$ balls, and sold circles indicate elements of $X_\eta$.)} 
	\label{fig:not_eta_grid} 
\end{figure}

Now we can state the main theorem of the paper.
\begin{theorem}[ReLU Architecture for Control]\label{thm:main_theorem}
	Let $\delta > 0$, $\tau > 0$ and $K_\text{cont} > 0$ be given, and let 
	$\Sigma$ and $S_\text{spec}$ be as in the statement of 
	\cref{prob:main_problem}.  Furthermore, suppose that there exists a 
	$\delta,\tau$ positively invariant Lipschitz continuous controller $\Psi: 
	\mathbb{R}^n \rightarrow U$ with Lipschitz constant $K_\Psi \leq 
	K_\text{cont}$ such that: 
	\begin{equation}
	\label{eq:main_thm_spec_assumtion}
		S_\tau(\Sigma_\Psi)
			\preceq_{\mathcal{AD}_\delta}
		S_\text{spec}.
	\end{equation} 
	Finally, suppose that $\mu > 0$ such that:
	\begin{equation}\label{eq:main_inequality}
		K_u \cdot \mu \cdot \tau \cdot
		e^{(K_x + 2 K_u  K_\text{cont}) \tau } < \delta.
	\end{equation}

	If $\eta \leq \frac{\mu}{{3} \cdot K_\text{cont}}$ is such that there 
	exists an $\eta$-grid of $X$, then there exists an $m$-fold parallel TLL NN 
	architecture $\text{Arch}(\tllThetaPar_{^n_m\negthinspace N})$ of size  
	(see \cref{cor:parallel_tll}): 
	\begin{equation}\label{eq:main_controller_arch_bound}
		N \geq 
		n!
		\cdot
		\left\lceil\frac{\text{ext}(X)}{\eta} + 2\right\rceil^n
	\end{equation}
	with the property that there exist values for 
	$\tllThetaPar_{^n_m\negthinspace N}$ such that:
	\begin{equation}\label{eq:main_thm_spec_conclusion}
		S_{\tau}(\Sigma_{\negthinspace{\scriptstyle \nn\atop ~}\negthinspace {\tllThetaPar_{^n_m\negthinspace N}}})
			\preceq_{\mathcal{AD}_0}
		S_\text{spec}.
	\end{equation}
\end{theorem}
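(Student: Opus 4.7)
The plan is to construct a CPWA interpolant $\hat{\Psi}$ of the unknown-but-assumed-to-exist controller $\Psi$ on a simplicial refinement of the $\eta$-grid $X_\eta$, to realize $\hat{\Psi}$ exactly as a TLL NN via \cref{cor:parallel_tll}, and then to establish by a Gronwall estimate that closed-loop trajectories of $\Sigma$ under $\hat{\Psi}$ remain within $\delta$ of those under $\Psi$ over one sample period $\tau$. This closeness is what lets us recycle the ADS witness of $\Psi$ as an ordinary simulation relation for the NN-controlled system.

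First I would define $\hat{\Psi}: X \to U$ by setting $\hat{\Psi}(\mathbf{x}) \triangleq \Psi(\mathbf{x})$ for every $\mathbf{x} \in X_\eta$ and extending by affine interpolation on a Kuhn (Freudenthal) triangulation of each axis-aligned $\eta$-hypercube spanned by neighboring grid points; each such hypercube is cut into exactly $n!$ simplices. At most $\lceil \text{ext}(X)/\eta + 2 \rceil^n$ such hypercubes are needed to cover $X$ (the ``$+2$'' absorbing the boundary), so $\hat{\Psi}$ has at most $n! \lceil \text{ext}(X)/\eta + 2 \rceil^n$ linear regions in each output component. Applying \cref{cor:parallel_tll} component-wise produces values for the architecture $\text{Arch}(\tllThetaPar_{^n_m\negthinspace N})$ matching the bound \eqref{eq:main_controller_arch_bound}, with $\nn_{\tllThetaPar_{^n_m\negthinspace N}} \equiv \hat{\Psi}$. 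A three-term triangle inequality on each simplex of max-norm diameter $\eta$---involving $\|\hat{\Psi}(x) - \hat{\Psi}(\mathbf{x})\|$, $\|\hat{\Psi}(\mathbf{x}) - \Psi(\mathbf{x})\|=0$, and $\|\Psi(\mathbf{x}) - \Psi(x)\|$---combined with $K_\Psi \le K_\text{cont}$, the bound $K_{\hat{\Psi}} \le 2 K_\text{cont}$ on the max-norm Lipschitz constant of the simplicial interpolant, and the hypothesis $\eta \le \mu/(3 K_\text{cont})$, yields the sup-norm bound $\|\hat{\Psi} - \Psi\|_X \le \mu$.

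The dynamical step is a Gronwall estimate on $\Delta(t) \triangleq \zeta_{x\Psi}(t) - \zeta_{x\hat{\Psi}}(t)$ from a common $x \in X$. Using \eqref{eq:dynamics_lipshitz} and splitting $\|\Psi(\zeta_{x\Psi}(\sigma)) - \hat{\Psi}(\zeta_{x\hat{\Psi}}(\sigma))\|$ into a Lipschitz-in-state part (controlled by $K_{\hat{\Psi}} \le 2 K_\text{cont}$ applied to $\|\Delta(\sigma)\|$) plus a uniform approximation part (bounded by $\mu$), I obtain an integral inequality whose Gronwall solution at $t=\tau$ is $\|\Delta(\tau)\| \le K_u \mu \tau \, e^{(K_x + 2 K_u K_\text{cont})\tau} < \delta$ by \eqref{eq:main_inequality}. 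The $\delta,\tau$ positive invariance of $\Psi$ (\cref{def:delta_tau_positive_invariance}) then ensures that $\zeta_{x\hat{\Psi}}(\tau) \in X$, so $S_\tau(\Sigma_{\nn_{\tllThetaPar_{^n_m\negthinspace N}}})$ is well-defined on $X$.

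To finish, I would show that the ADS witness $R_\Psi \subseteq X \times X_\text{spec}$ of \eqref{eq:main_thm_spec_assumtion} serves as an ordinary simulation relation establishing \eqref{eq:main_thm_spec_conclusion}. Indeed, given $(x,y) \in R_\Psi$ and the unique NN-transition $x \to x^{\prime\prime} = \zeta_{x\hat{\Psi}}(\tau)$, the Gronwall bound gives $\|x^{\prime\prime} - \zeta_{x\Psi}(\tau)\| < \delta$, so by \cref{def:perturbed_system} $x \to x^{\prime\prime}$ is a valid transition of the $\delta$-perturbation of $S_\tau(\Sigma_\Psi)$; the ADS property of $R_\Psi$ then supplies a matching $y \to y^\prime$ in $S_\text{spec}$ with $(x^{\prime\prime}, y^\prime) \in R_\Psi$. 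The main obstacle I anticipate is extracting the correct max-norm Lipschitz bound $K_{\hat{\Psi}} \le 2 K_\text{cont}$ for the simplicial interpolant, since this bound feeds both the factor $3$ in the sup-norm estimate and the factor $2$ in the Gronwall exponent, thereby tying the grid spacing $\eta$ to the dynamical margin $\delta$ exactly as \eqref{eq:main_inequality} requires.
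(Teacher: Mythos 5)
Your overall route coincides with the paper's: affinely interpolate the unknown $\Psi$ on a simplicial subdivision of the $\eta$-grid's hypercubes (your Kuhn/Freudenthal triangulation into $n!$ simplices is precisely the paper's braid-arrangement dissection $\mathcal{D}_{\mathcal{B}_n}$ of \cref{lem:cpwa_interpolate_new}), realize the interpolant exactly as a parallel TLL via \cref{cor:parallel_tll}, run a Gr\"onwall estimate to keep $\zeta_{x\hat{\Psi}}(\tau)$ within $\delta$ of $\zeta_{x\Psi}(\tau)$, and convert the $\mathcal{AD}_\delta$ witness into an $\mathcal{AD}_0$ one. The one structural difference is that the paper parameterizes its CPWA only by values on $X_\eta$ and assigns ``extra'' corners a minimum over nearby grid values, whereas you interpolate $\Psi$'s true values at every hypercube corner; for the pure existence claim your choice is legitimate and actually simplifies the sup-norm step, since $\hat{\Psi}(x)$ is a convex combination of values $\Psi(v_i)$ with $\lVert v_i - x\rVert \leq \eta$, giving $\lVert \hat{\Psi}-\Psi\rVert_X \leq K_\Psi\,\eta \leq \mu/3$ with no reference to $K_{\hat{\Psi}}$ at all.

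The genuine gap is exactly the step you flag and do not prove: the claim $K_{\hat{\Psi}} \leq 2 K_\text{cont}$ in the max-norm. As stated it is false for $n \geq 3$. The max-norm Lipschitz constant of an affine piece $x \mapsto w^\text{T}x + b$ is $\lVert w\rVert_1$, and on a Kuhn simplex with vertex chain $v_0, v_1, \dots, v_n$ the gradient component along $e_{\sigma(k)}$ is $(\alpha(v_k)-\alpha(v_{k-1}))/\eta$; each of these can equal $\pm K_\Psi$ simultaneously. For instance with $n=3$ and corner values $0, K\eta, 0, K\eta$ along the chain (consistent with a $K$-Lipschitz max-norm function, since all pairwise corner distances equal $\eta$ and all value gaps are at most $K\eta$), the gradient is $(K,-K,K)$ and $\lVert w\rVert_1 = 3K$; in general the bound degrades to $n K_\text{cont}$. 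Since this constant is load-bearing in your Gr\"onwall step (it controls $\lVert \hat{\Psi}(\zeta_{x\hat{\Psi}}) - \hat{\Psi}(\zeta_{x\Psi})\rVert$ and hence the exponent in \eqref{eq:main_inequality}), the argument does not close as written; any fix either accepts a dimension-dependent exponent or replaces the interpolant's Lipschitz constant by a three-term split through $\Psi$ (at the cost of an extra additive $2\lVert\hat{\Psi}-\Psi\rVert$ term and care about trajectories leaving $X$). I note that the paper's own \cref{cor:interpolation_corollary} asserts the analogous constant ($3 K_\Psi$) by an argument restricted to axis-parallel displacements, so the point is delicate there as well --- but your proposal rests on the sharper constant $2K_\text{cont}$ explicitly and leaves it unestablished.
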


\begin{remark}\label{rem:improved_bound}
	Note that the coefficient $n!$ in \eqref{eq:main_controller_arch_bound} is 
	significantly smaller than the analogous one exhibited in 
	\cite{FerlezTwoLevelLatticeNeural2020a}: viz. $\left(n! \cdot 
	\sum_{k=1}^{n} \tfrac{2^{2k-1}}{(n-k)!}\right)$. Furthermore, 
	\cite{FerlezTwoLevelLatticeNeural2020a} also failed  to account for the 
	need to choose an $\eta$-grid.
\end{remark}

The size of the architecture specified by \cref{thm:main_theorem} is 
effectively determined by the ratio $\text{ext}(X)/\eta$ (for given state  and 
control dimensions). This quantity has two specific connections to the 
assumptions in \cref{prob:main_problem}. On the one hand, the maximum allowable 
$\eta$ is set by $\mu$, which is determined by the Lipschitz constants of the 
dynamics and the properties of the assumed controller, $\Psi$. On the other 
hand, the specific character of state set itself influences the size the 
architecture, both explicitly by way of $\text{ext}(X)$ and implicitly via the 
requirement that an $\eta$-grid exists for $X$.

With regard to the first of these influences, \eqref{eq:main_inequality} can be 
rearranged to provide the following upper bound for $\eta$:
\begin{equation}
	\eta \leq \tfrac{1}{{3} \cdot K_\text{cont} \cdot K_u} e^{-(K_x + 2 K_u K_\text{cont})\tau} \cdot \left( \tfrac{\delta}{\tau} \right).
\end{equation}
Of unique interest is the influence of the unknown but asserted-to-exist 
controller, $\Psi$. In particular, note that the \emph{robustness} of $\Psi$ 
has an intuitive effect on the size of the architecture. As the robustness of 
the asserted controller increases (for fixed $\tau$), i.e. as $\delta$  
increases, $\eta$ is permitted to be larger, and the architecture can be 
correspondingly smaller. Of course as the asserted controller becomes  
\emph{less} robust, i.e. as $\delta$ decreases, the architecture must 
correspondingly become larger. This trade off makes intuitive sense in light of 
our eventual proof strategy: in effect, we will design an architecture that can 
approximate \emph{any} potential $\Psi$, so the more robust $\Psi$ is, the less 
precisely the NN architecture must be able to approximate it -- so a smaller 
architecture suffices (and vice versa). See 
\cref{sub:proof_sketch_of_main_theorem}.

With regard to the second influence, the extent of the set $X$ has a  
straightforward and direct effect on the designed architecture: the  ``larger'' 
the set is, the larger the architecture is required. However, the 
``complexity'' of the set $X$ indirectly influences the size of the 
architecture via the $\eta$-grid requirement. Indeed, if the boundary of $X$ 
has many thin protuberances (see \cref{fig:not_eta_grid}, for example), then an 
extremely fine $\eta$ grid may be required to ensure that grid points are 
placed throughout $X$. Crucially, this choice of $\eta$ may need to be 
significantly smaller than the maximum allowable $\eta$ computed via 
\eqref{eq:main_inequality}, as above -- the result will be a significantly  
larger architecture than suggested by \eqref{eq:main_inequality} alone. 
Unfortunately, this effect is difficult to quantify directly, given the 
variability in complexity of state sets. Nevertheless, given any connected 
compact set $X$, there exists an $\eta > 0$ for which $X$ has an $\eta$ grid.

\begin{proposition}
	Let $X \subset \mathbb{R}^n$ be compact and connected with a non-empty  
	interior. Then there exists an $\eta > 0$ such that there exists an $\eta$ 
	grid of $X$.
\end{proposition}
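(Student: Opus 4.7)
The plan is to establish existence by an essentially trivial construction: pick $\eta$ large enough that a single grid point suffices. The observation that makes this work is that \cref{def:eta_grid} imposes no lower bound on the cardinality of $X_\eta$ and requires coverage only by $\eta$-balls (closed sup-norm hypercubes of half-side $\eta$), not by the smaller $\eta/2$-balls that appear in the notation. In particular, connectedness of $X$ will not actually be used, despite being hypothesized.

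First, I would invoke the non-empty interior hypothesis to pick an arbitrary $x^* \in \text{int}(X) \subseteq X$. Next, I would set $\eta \triangleq \text{ext}(X)$; this is strictly positive (because $\text{int}(X) \neq \emptyset$ forces a positive coordinate-wise width in every direction, hence $\text{ext}(X) > 0$) and finite (because $X$ is compact, hence bounded in every coordinate). Finally, I would take the candidate grid to be the singleton $X_\eta \triangleq \{x^*\}$ and verify the two conditions of \cref{def:eta_grid}: the rectangular-spacing condition is vacuously satisfied, since the only pair of grid points is $(x^*, x^*)$, for which $k_i = 0$ works in every coordinate; and for the covering condition, the identity $\text{ext}(X) = \max_{y, y' \in X} \lVert y - y' \rVert_\infty$ (which follows directly from \cref{def:extent} after swapping the max over coordinates with the max over pairs) gives $\lVert y - x^* \rVert \leq \eta$ for every $y \in X$, so $y \in \overline{B}(x^*; \eta)$ as required.

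The only real obstacle here is a temptation, not a difficulty: one might try to prove the stronger statement that \emph{every} sufficiently small $\eta$ admits an $\eta$-grid of $X$. That stronger statement is false in general under the stated hypotheses --- a compact connected set with non-empty interior can have thin ``spikes'' or curved filaments along which no grid point of a generic fine lattice lies, and such a filament can extend more than $\eta$ away from the bulk of $X$, defeating the coverage condition no matter how the lattice is translated. This is also consistent with the cautionary \cref{fig:not_eta_grid}. The proposition as stated sidesteps this by asserting only the existence of \emph{some} $\eta > 0$, which the trivial large-$\eta$ construction above already witnesses; any more refined (small-$\eta$) estimate would require additional regularity of the boundary of $X$ beyond what the hypotheses give.
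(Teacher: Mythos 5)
Your singleton construction does satisfy the letter of \cref{def:eta_grid}: with $\eta = \text{ext}(X)$ and $X_\eta = \{x^*\}$, condition \emph{1)} is vacuous and condition \emph{2)} holds because $\text{ext}(X)$ coincides with the sup-norm diameter of $X$, so the proposition as literally stated is established (and, as you observe, connectedness is never used). But this is a genuinely different --- and materially weaker --- argument than the paper's, and the difference is not cosmetic. The paper's proof works with the family of dyadic grids $X_{\eta_k}$, $\eta_k = 2^{-k}$, and uses a compactness/convergent-subsequence contradiction to produce a finite $K$ such that $X \subseteq \bigcup_{\mathbf{x}\in X_{\eta_k}}\overline{B}(\mathbf{x};\eta_k)$ for every $k \ge K$; i.e., it shows that \emph{all sufficiently fine} dyadic grids are $\eta$-grids. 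That is the form of the claim the surrounding development actually consumes: \cref{thm:main_theorem} only accepts an $\eta$-grid with $\eta \le \mu/(3 K_\text{cont})$, a problem-determined and typically very small threshold, and the discussion immediately preceding the proposition frames it as guaranteeing that a suitably \emph{fine} grid exists. Your witness $\eta = \text{ext}(X)$ sits at the opposite extreme and can be fed into \cref{thm:main_theorem} only in the degenerate case $\text{ext}(X) \le \mu/(3 K_\text{cont})$. So while your proof cannot be called incorrect for the statement as written, it exploits a looseness in that statement and sidesteps the only content that does any work downstream. Your side remark --- that a thin filament attached to the bulk of $X$ can defeat every fine lattice regardless of offset --- is well taken and identifies exactly where the real difficulty lives; it is also the soft spot of the paper's own argument, which relies on every point of $X$ being approximable by dyadic points of $X$ (a consequence of $X = \overline{\text{int}(X)}$, which a general compact connected set with non-empty interior need not satisfy). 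But the remedy the paper intends is to prove the small-$\eta$ statement for the sets where it holds, not to retreat to a single diameter-sized cell.
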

\begin{proof}
	We will consider dyadic $\eta$ grids: that is grids based on $\eta_k =  
	2^{-k}$, where the associated candidate grid is given by
	\begin{equation}
		X_{\eta_k} \triangleq \{ x \in X \; | \;\exists z \in \mathbb{Z} . x = \tfrac{z}{2^k} \}.
	\end{equation}
	For convenience, define the following notation:
	\begin{equation}
		H_{\eta_k} \triangleq \cup_{\mathbf{x}\in X_{\eta_k}} \overline{B}(\mathbf{x}; \eta_k)
	\end{equation}

	Now because $X$ is connected with non-empty interior, it the case that 
	$\text{bd}(X) = \overline{\text{int}(X)}$. Hence, for every $x \in X$, 
	there exists an  $K_x$ such that for all $k \geq K_x$, $x \in H_{\eta_k}$ 
	(consider a truncation of the binary expansion of each coordinate of $x$). 
	It follows that if there is a $K$ such that $K_x < K$ for all $x \in X$, 
	then the claim is proved  (simply choose $X_{\eta_{K}}$).

	Thus, suppose by contradiction that there is a divergent sequence $\{x_k\}$ 
	s.t. $K_{x_k} < K_{x_{k+1}}$ for all $k \in \mathbb{N}$. However, $X 
	\subset  \mathbb{R}^n$ is compact, so $\{x_k\}$ has a convergent 
	subsequence $\{x_{k_\ell}\}$ with limit $x^\prime \in X$; this subsequence 
	retains the property that $K_{x_{k_\ell}} < K_{x_{k_{\ell+1}}}$. Moreover, 
	by the above claim, there is some $K_{x^\prime}$ such that $x^\prime \in 
	H_{\eta_{k}}$ for all $k \geq K_{x^\prime}$.

	Now we consider two cases: first that $x^\prime \in B(\mathbf{x};  
	\eta_{K_{x^\prime}})$ for some $\mathbf{x}$, and second that $x^\prime$ is 
	on the boundary of some $B(\mathbf{x}; \eta_{K_{x^\prime}})$. In the first 
	case, we note that there exists an $L$ such that for all $\ell \geq L$, 
	$x_{k_\ell} \in B(\mathbf{x};  \eta_{K_{x^\prime}})$, simply by the 
	convergence of the subsequence to  $x^\prime$. This is clearly a 
	contradiction, since it implies that $x_{k_\ell} \in 
	H_{\eta_{K_{x^\prime}}}$ for all $\ell \geq L$.

	On the other hand, suppose $x^\prime$ is on the boundary of some 
	$B(\mathbf{x}; \eta_{K_{x^\prime}})$, and suppose that $x^\prime$ belongs 
	to a $d$-dimensional face of $B(\mathbf{x}; \eta_{K_{x^\prime}})$ where $1  
	\leq d < n$ (if $d = 0$ then $x^\prime$ is itself a dyadic point, and the 
	above argument applies directly, since $x^\prime \in  
	X_{\eta_{K_{x^\prime}}}$).
	Let this face be denoted $F_{x^\prime}$. Then there exists a finite $K_0 > 
	K_{x^\prime}$ such that $x^\prime \in B(\mathbf{x}; \eta_{K_0})$ for some 
	$\mathbf{x} = z/2^{K_0}$: use coordinate-wise binary expansions to find a 
	dyadic point within the face $F_{x^\prime}$ that includes 
	$x^\prime$ in the associated open $\eta_K$ ball (this is  possible since 
	$F_{x^\prime}$ has at least one non-dyadic  coordinate by the $d 
	\geq 1$ assumption). This leads to the same contradiction as before, since 
	a tail of the subsequence $\{x_{k_\ell}\}$ is eventually contained in this 
	ball.
\end{proof}

The remainder of this section is divided as follows. 
\cref{sub:proof_sketch_of_main_theorem} contains a proof sketch of 
\cref{thm:main_theorem}, which divides the proof into two main  intermediate 
steps. \cref{sec:thm1_part1} and \cref{sec:thm1_part2} thus contain the formal 
proofs for these intermediate steps. The overall formal proof of 
\cref{thm:main_theorem} then appears in \cref{sec:proof_of_main_theorem}. 


\subsection{Proof Sketch of \cref{thm:main_theorem}} 
\label{sub:proof_sketch_of_main_theorem}

Our proof of \cref{thm:main_theorem} implements the following simple strategy. 
By assumption, a $\Psi$ exists that satisfies the specification robustly (via 
ADS), and hence, we show that any suitably close approximation of $\Psi$ will  
function as a controller that satisfies the specification as well (albeit 
non-robustly). This implies that we need only design a NN architecture with 
enough parameterization that it can approximate \emph{any} possible $\Psi$ that 
satisfies the conditions of the theorem.

There is, however, an important and non-obvious sequence of observations 
required to design such a NN architecture. To start, any such $\Psi$ is 
Lipschitz continuous, so it is possible to uniformly approximate it by 
interpolating between its values taken on a uniform grid. Moreover, its 
Lipschitz constant 
has a known upper bound, so for a given approximation accuracy, the fineness of 
that grid can be chosen conservatively, and hence \emph{independent} of any 
particular $\Psi$. However, we show it is possible to interpolate between 
points over such a uniform grid using a CPWA that has a number of linear 
regions (\cref{def:linear_region}) proportional to the number of points in the 
grid -- which we reiterate is independent of any particular $\Psi$. This CPWA 
is in effect \emph{parameterized} by the values it takes on those grid points, 
but in such a way that its number of linear regions is \emph{independent} of 
those parameter values. This shows that an arbitrary $\Psi$ can be approximated 
by a CPWA with a fixed number of linear regions; it remains to connect this to 
a NN architecture. Fortunately, the TLL NN architecture 
\cite{FerlezAReNAssuredReLU2020} can be used directly for this purpose by way 
of \cref{thm:tll_architecture} \cite[Theorem  7]{FerlezAReNAssuredReLU2020}: 
the result in question explicitly specifies a NN architecture that can 
implement any CPWA with a known, bounded number of linear regions.

Consequently, the proof of \cref{thm:main_theorem} can instead be decomposed 
into establishing the following two implications:

\begin{enumerate}[{Step }1)]
	\item \emph{``Approximate controllers satisfy the specification'':} 
		There is a approximation accuracy, $\mu$, and sampling period, $\tau$, 
		with the following property: if the unknown controller $\Psi$ satisfies 
		the specification (under $\delta$ disturbance and sampling period 
		$\tau$), then any controller -- NN or otherwise -- that approximates 
		$\Psi$ to accuracy $\mu$ in $\lVert \cdot \rVert_X$ will also 
		satisfy the specification (but under no disturbance). 
		See \cref{lem:final_spec}. 

	\item \emph{``Any controller can be approximated by a CPWA with the same 
		fixed number of linear regions'':} If unknown controller $\Psi$ has a 
		Lipschitz constant $K_\Psi \leq K_\text{cont}$, then $\Psi$ can be 
		approximated by a CPWA with a number of regions that depends only on 
		$K_\text{cont}$ and the approximation accuracy.
		See \cref{cor:interpolation_corollary}.
\end{enumerate}
%
%
\noindent 
The conclusion of \cref{thm:main_theorem} then follows from Step 1 and Step 2 
by means \cref{thm:tll_architecture} \cite[Theorem 
7]{FerlezAReNAssuredReLU2020}, since any CPWA with the same number of linear 
regions (or fewer) can be implemented exactly by a common TLL NN architecture. 

\begin{remark}\label{rem:fem_remark}
	Unlike our use of \cref{thm:tll_architecture} to directly construct a  
	single TLL, the architectures used in \cite{HeReluDeepNeural2020} consist 
	of \emph{two successive} TLL layers. The first is used to represent 
	``basis'' functions specified over overlapping regular polytopes 
	(decomposable by a common set of simplexes \cite[Figure 
	3.2]{HeReluDeepNeural2020}); and the second captures the sum of these basis 
	functions. This approach leads to larger architectures compared to our 
	direct approach: consider \cite[Eq. (5.3)]{HeReluDeepNeural2020}, which has 
	exponentially many neurons in the number of linear regions. By contrast, 
	the architecture of \cref{thm:main_theorem} has only polynomially many 
	neurons as a function of the number of regions: i.e. as a function of $N$,  
	polynomially many $\min$ units are needed \cite{FerlezAReNAssuredReLU2020}, 
	so the $\max$ network has polynomially many neurons; and each $\min$ 
	network is straightforwardly polynomial in $N$.
\end{remark}

\subsection{Proof of \cref*{thm:main_theorem}, Step 1: Approximate Controllers 
Satisfy the Specification} 
\label{sec:thm1_part1}
The goal of this section is to choose constant $\mu > 0$ such that: for any 
$\Psi$ satisfying the assumptions in \cref{thm:main_theorem}, then \emph{any 
other} controller $\Upsilon$ with $\lVert \Upsilon - \Psi \rVert_X \leq  \mu$ 
also satisfies the specification, i.e. $S_\tau(\Sigma_\Upsilon) 
\preceq_{\mathcal{AD}_0} S_\text{spec}$.

Specifically, we will prove the following lemma.
\begin{lemma}
\label{lem:final_spec}
	Let $\Sigma$ be as in \cref{prob:main_problem}. Also, let $\Psi :  X 
	\rightarrow U$ 
	be $\delta$-$\tau$ positively invariant w.r.t $X$, and have Lipschitz 
	constant at most $K_\text{cont}$.  Further suppose that $\mu > 0$ is such 
	that:
	\begin{equation}
		K_u \cdot \mu \cdot \tau \cdot
		e^{(K_x + K_u  K_\Upsilon) \tau } < \delta.
		\label{eq:approx_accuracy}
	\end{equation}

	Then for any $\Upsilon : \mathbb{R}^n \rightarrow \mathbb{R}^m$ that is 
	Lipschitz continuous with Lipschitz constant $K_\Upsilon$, we have that
	\begin{equation}
		\lVert \Upsilon - \Psi \rVert_X \leq \mu
		\implies S_{\tau}( \Sigma_{\Upsilon} )
			\preceq_{\mathcal{AD}_0}
		\mathfrak{S}_{\tau}( \Sigma_{\Psi} ).
		\label{eq:main_approximation_conclusion}
	\end{equation}
	And hence if in addition $S_{\tau}( \Sigma_{\Psi} ) 
	\preceq_{\mathcal{AD}_\delta} S_\text{spec}$, then:
	\begin{equation}
		S_{\tau}( \Sigma_{\Upsilon} ) \preceq_{\mathcal{AD}_0} S_\text{spec}.
		\label{eq:sim_conclusion_lemma}
	\end{equation}
\end{lemma}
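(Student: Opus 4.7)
The strategy is to reduce the claim to two steps: (a) show that for any $x_0 \in X$, the closed-loop trajectories of $\Sigma$ under $\Upsilon$ and $\Psi$ starting from $x_0$ differ in $\lVert \cdot \rVert$ by at most $\delta$ at time $\tau$; and (b) use this trajectory bound to exhibit a witnessing relation certifying $S_{\tau}(\Sigma_\Upsilon) \preceq_{\mathcal{AD}_0} \mathfrak{S}_\tau(\Sigma_\Psi)$. The second conclusion \eqref{eq:sim_conclusion_lemma} will then follow by transitivity of ordinary simulation together with the remark after \cref{def:delta_perturbation_simulation}, which identifies $\preceq_{\mathcal{AD}_\delta}$ with $\preceq_{\mathcal{AD}_0}$ precomposed with the $\delta$-perturbation operation.

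For step (a), I would write $\zeta_{x_0 \Upsilon}(t) - \zeta_{x_0 \Psi}(t)$ via the Volterra form \eqref{eq:feedback_integral_eq} as a single integral of vector-field differences, and then split the control error by a triangle inequality into a state-dependent part, controlled by $K_x$ together with the Lipschitz constant $K_\Upsilon$ of $\Upsilon$, and a constant part bounded by $\lVert \Upsilon(\zeta_{x_0\Psi}(\sigma)) - \Psi(\zeta_{x_0\Psi}(\sigma))\rVert \leq \mu$ from the hypothesis $\lVert \Upsilon - \Psi \rVert_X \leq \mu$. Setting $e(t) \triangleq \lVert \zeta_{x_0\Upsilon}(t) - \zeta_{x_0\Psi}(t) \rVert$ this yields a scalar Volterra inequality of the form $e(t) \leq \int_0^t [(K_x + K_u K_\Upsilon)\, e(\sigma) + K_u \mu]\, d\sigma$. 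Applying Gronwall's inequality (and a loose bound of the form $e(t) \leq K_u \mu\, t\, e^{(K_x + K_u K_\Upsilon)t}$) and evaluating at $t = \tau$ yields $e(\tau) < \delta$ by \eqref{eq:approx_accuracy}. Note that the Lipschitz constant of $\Upsilon$, not $\Psi$, is what enters the exponent, matching the hypothesis verbatim.

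For step (b), the natural candidate relation is the identity on $X$, $R = \{(x,x) : x \in X\}$. Conditions (1) and (2) of \cref{def:delta_perturbation_simulation} are immediate with $\delta = 0$. For condition (3), I take an arbitrary transition $x \xition{\Upsilon}{u_\Upsilon} \zeta_{x\Upsilon}(\tau)$ in $S_\tau(\Sigma_\Upsilon)$ (note that $\preceq_{\mathcal{AD}_0}$ collapses the perturbation on the source side to the original transition relation) and produce the $\delta$-perturbed transition in $\mathfrak{S}_\tau(\Sigma_\Psi)$ generated by the underlying $\Psi$-transition $x \xition{\Psi}{u_\Psi} \zeta_{x\Psi}(\tau)$: by \cref{def:perturbed_system} this $\Psi$-transition induces a perturbed transition to any state within $\delta$ of $\zeta_{x\Psi}(\tau)$, and in particular to $\zeta_{x\Upsilon}(\tau)$ by step (a), which lands back in $R$ as required.

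The main obstacle, and the only place where the extra hypotheses bite, is checking that the target $\zeta_{x\Upsilon}(\tau)$ in fact lies in $X$, since the state spaces are compact and the ADS definition requires transitions to remain in $X$. This is exactly where $\delta,\tau$ positive invariance of $\Psi$ is needed: for $x \in \text{edge}_\delta(X)$, \cref{def:delta_tau_positive_invariance} guarantees $\zeta_{x\Psi}(\tau) \in X \backslash \text{edge}_\delta(X)$, so $\lVert \zeta_{x\Upsilon}(\tau) - \zeta_{x\Psi}(\tau)\rVert \leq \delta$ from step (a) keeps $\zeta_{x\Upsilon}(\tau)$ in $X$; for $x \in X \backslash \text{edge}_\delta(X)$, ordinary positive invariance of $\Psi$ on this inner set does the same job. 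Once \eqref{eq:main_approximation_conclusion} is established, \eqref{eq:sim_conclusion_lemma} is a one-line consequence: $S_\tau(\Sigma_\Psi) \preceq_{\mathcal{AD}_\delta} S_\text{spec}$ is equivalent to $\mathfrak{S}_\tau(\Sigma_\Psi) \preceq_{\mathcal{AD}_0} S_\text{spec}$ by the remark, and ordinary simulation composes.
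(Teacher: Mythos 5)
Your proposal matches the paper's proof essentially step for step: the same Gr\"onwall bound on $\lVert \zeta_{x\Upsilon}(\tau)-\zeta_{x\Psi}(\tau)\rVert$ (the paper isolates it as \cref{prop:delta_state_bound}), the same identity relation $R=\{(x,x): x\in X\}$ witnessing the simulation, the same use of $\delta$,$\tau$ positive invariance to keep $\zeta_{x\Upsilon}(\tau)$ inside $X$, and the same transitivity argument for \eqref{eq:sim_conclusion_lemma}. The only cosmetic difference is that you verify $\zeta_{x\Upsilon}(\tau)\in X$ by direct case analysis while the paper argues by contradiction; the underlying facts used are identical.
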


Our specific approach to prove this 
will be as follows. 
First, we choose $\mu$ small enough such that a $\tau$-second flow of  
$\zeta_{x\Upsilon}$ doesn't deviate by more that $\delta$ from $\zeta_{x\Psi}$ 
over its duration, $\tau$. This is accomplished by means of a Gr\"onwall-type 
bound that uses $\lVert \Upsilon - \Psi \rVert_X < \mu$. That is assume $\Psi$ 
is $\delta$-$\tau$ positively invariant, and use $ \lVert \Upsilon - \Psi 
\rVert_X \leq \mu$ in the Gr\"onwall inequality:
	\begin{equation}
		\forall x \hspace{-0.5mm} \in \hspace{-0.6mm} X . \lVert \zeta_{x\Upsilon}(\tau) \hspace{-0.5mm} - \hspace{-0.5mm} \zeta_{x\Psi}(\tau)\rVert \hspace{-0.6mm} \leq \hspace{-0.6mm} 
			K_u \hspace{-0.75mm}  \cdot \hspace{-0.5mm} \mu \hspace{-0.5mm} \cdot \hspace{-0.5mm} \tau \hspace{-0.6mm}  \cdot \hspace{-0.5mm}
			e^{(\hspace{-0.5mm} K_x + K_u  K_\Upsilon\hspace{-0.5mm}) \tau } \negthinspace\negthinspace < \negthinspace \delta.
		\label{eq:gronwall_conclusion}
	\end{equation} 
\noindent Then we use this conclusion to construct the appropriate simulation 
relations that show:
	\begin{equation}
		S_{\tau}( \Sigma_{\Upsilon} )
			\preceq_{\mathcal{AD}_0}
		\mathfrak{S}_{\tau}( \Sigma_{\Psi} )
			\preceq_{\mathcal{AD}_0}
		S_\text{spec}.
	\end{equation}

First, we formally obtain the desired conclusion of 
\eqref{eq:gronwall_conclusion} by way of the following proposition.

\begin{proposition}
\label{prop:delta_state_bound}
	Let $\Sigma$ and $\Psi$ be as in the statement of \cref{lem:final_spec}, 
	and let  $\Upsilon : \mathbb{R}^n \rightarrow \mathbb{R}^m$ be Lipschitz 
	continuous with Lipschitz constant $K_\Upsilon$. Also, suppose that $\mu > 
	0$ is such that:
	\begin{equation}
		K_u \cdot \mu \cdot \tau \cdot
			e^{(K_x + K_u  K_\Upsilon) \tau } < \delta .
	\end{equation}
	If $\lVert \Upsilon - \Psi \rVert_X \leq \mu$, then for any $x \in X$:
	\begin{equation}
		\lVert \zeta_{x\Upsilon}(\tau) - \zeta_{x\Psi}(\tau)\rVert \leq \delta.
	\end{equation}
\end{proposition}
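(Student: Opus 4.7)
My plan is to run a standard Grönwall argument on the difference of the two closed-loop trajectories. Define the error signal $e(t) \triangleq \zeta_{x\Upsilon}(t) - \zeta_{x\Psi}(t)$, which satisfies $e(0) = 0$. Differentiating the integral equation in \eqref{eq:feedback_integral_eq} gives
\begin{equation}
\dot e(t) = f(\zeta_{x\Upsilon}(t), \Upsilon(\zeta_{x\Upsilon}(t))) - f(\zeta_{x\Psi}(t), \Psi(\zeta_{x\Psi}(t))).
\end{equation}
Applying the Lipschitz condition \eqref{eq:dynamics_lipshitz} on $f$ yields
\begin{equation}
\lVert \dot e(t) \rVert \le K_x \lVert e(t) \rVert + K_u \lVert \Upsilon(\zeta_{x\Upsilon}(t)) - \Psi(\zeta_{x\Psi}(t)) \rVert.
\end{equation}

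The next step is to split the control-difference term using a triangle inequality:
\begin{equation}
\lVert \Upsilon(\zeta_{x\Upsilon}(t)) - \Psi(\zeta_{x\Psi}(t)) \rVert \le \lVert \Upsilon(\zeta_{x\Upsilon}(t)) - \Upsilon(\zeta_{x\Psi}(t)) \rVert + \lVert \Upsilon(\zeta_{x\Psi}(t)) - \Psi(\zeta_{x\Psi}(t)) \rVert.
\end{equation}
The first summand is bounded by $K_\Upsilon \lVert e(t) \rVert$ since $\Upsilon$ is globally $K_\Upsilon$-Lipschitz. For the second summand, the crucial point is that $\Psi$ is positively invariant with respect to $X$ (implied by its $\delta,\tau$ positive invariance), so $\zeta_{x\Psi}(t) \in X$ for every $t \ge 0$, which permits using the hypothesis $\lVert \Upsilon - \Psi \rVert_X \le \mu$ to bound the second summand by $\mu$. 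Combining everything yields the scalar differential inequality
\begin{equation}
\lVert \dot e(t) \rVert \le (K_x + K_u K_\Upsilon)\lVert e(t)\rVert + K_u \mu.
\end{equation}

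From here I would invoke the integral form of the comparison/Grönwall lemma: if $v(t) \ge 0$ satisfies $\dot v \le a v + b$ with $v(0) = 0$, then $v(t) \le \tfrac{b}{a}(e^{at}-1) \le b\, t\, e^{at}$ (the last bound uses $e^{at}-1 = \int_0^t a e^{as} ds \le a t e^{at}$). Applied with $a = K_x + K_u K_\Upsilon$ and $b = K_u\mu$, evaluated at $t = \tau$, this yields
\begin{equation}
\lVert e(\tau)\rVert \le K_u\cdot \mu \cdot \tau \cdot e^{(K_x + K_u K_\Upsilon)\tau} < \delta,
\end{equation}
where the strict inequality is exactly the hypothesis \eqref{eq:approx_accuracy}, giving the claim.

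The only non-routine point — and hence the main thing to justify carefully rather than a hard obstacle — is invoking $\lVert \Upsilon - \Psi\rVert_X \le \mu$ at the correct argument. This requires that $\zeta_{x\Psi}(t)$ stay in $X$ for all $t \in [0,\tau]$, which is precisely where the positive-invariance assumption on $\Psi$ is used; note that we do \emph{not} need $\zeta_{x\Upsilon}(t) \in X$ because both $f$ (by global Lipschitzness) and $\Upsilon$ (by global Lipschitzness with constant $K_\Upsilon$) are controlled outside of $X$ as well. Everything else is a textbook Grönwall estimate.
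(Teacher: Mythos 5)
Your proof is correct and follows essentially the same route as the paper's: bound the trajectory difference via the Lipschitz condition on $f$, split the control-difference term by the triangle inequality using global Lipschitzness of $\Upsilon$ and positive invariance of $\Psi$ (so that $\lVert\Upsilon-\Psi\rVert_X\le\mu$ applies), and conclude by Gr\"onwall. The only cosmetic difference is that you work with the differential form of the inequality while the paper stays with the integral form throughout; you also correctly identify that only $\zeta_{x\Psi}$, not $\zeta_{x\Upsilon}$, needs to remain in $X$.
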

\begin{proof}
		By definition,
		we have:
	\begin{align}
		&\lVert \zeta_{x\Upsilon}(t) - \zeta_{x\Psi}(t) \rVert \notag \\
		&\negthinspace\negthinspace\leq\negthinspace\negthinspace\negthinspace
		\int_0^t 
			\lVert
			f(\zeta_{x\Upsilon}(\sigma),\Upsilon(\zeta_{x\Upsilon}(\sigma)))
			-
			f(\zeta_{x\Psi}(t), \Psi^\prime(\zeta_{x\Psi}(t)))
			\rVert
			d\sigma \notag \\
		&\negthinspace\negthinspace\leq\negthinspace\negthinspace\negthinspace
		\int_0^t \negthinspace\negthinspace\negthinspace\negthinspace\hspace{-0.1mm}
			K_{\hspace{-0.2mm}x} \hspace{-0.25mm}
			\lVert
			\hspace{-0.2mm}
				\zeta_{x\hspace{-0.2mm}\Upsilon}\hspace{-0.2mm}(\hspace{-0.25mm}\sigma\hspace{-0.35mm})
				\hspace{-0.7mm}
				-
				\hspace{-0.7mm}
				\zeta_{x\Psi}\hspace{-0.2mm}(\hspace{-0.25mm}\sigma\hspace{-0.35mm})
				\hspace{-0.25mm}
			\rVert 
		\hspace{-0.8mm}
		+
		\hspace{-0.8mm}
			K_{\hspace{-0.2mm}u}\hspace{-0.25mm}
			\lVert
			\hspace{-0.2mm}
				\Upsilon\hspace{-0.2mm}(\hspace{-0.2mm}\zeta_{x\hspace{-0.2mm}\Upsilon}\hspace{-0.2mm}(\hspace{-0.25mm}\sigma\hspace{-0.35mm})\hspace{-0.2mm})
				\hspace{-0.7mm}
				-
				\hspace{-0.7mm}
				\Psi\hspace{-0.2mm}(\hspace{-0.2mm}\zeta_{x\Psi}\hspace{-0.2mm}(\hspace{-0.25mm}\sigma\hspace{-0.35mm})\hspace{-0.2mm})
				\hspace{-0.25mm}
			\rVert \hspace{-0.2mm}
			d\sigma 
			\label{eq:first_gronwall_bound}
	\end{align}

	Now, we consider bounding the second normed quantity in 
	\eqref{eq:first_gronwall_bound}. In particular, we have that: 
	\begin{align}
			&\lVert
				\Upsilon(\zeta_{x\Upsilon}(\sigma))
				-
				\Psi(\zeta_{x\Psi}(\sigma))
			\rVert
				\notag \\
			&\;
			\leq
			\lVert
				\Upsilon(\zeta_{x\Upsilon}(\sigma))
				-
				\Upsilon(\zeta_{x\Psi}(\sigma))
			\rVert
			+
			\lVert
				\Upsilon(\zeta_{x\Psi}(\sigma))
				-
				\Psi(\zeta_{x\Psi}(\sigma))
			\rVert
			\notag \\
			&\;
			\leq 
			K_\Upsilon \cdot
			\lVert
				\zeta_{x\Upsilon}(\sigma)
				-
				\zeta_{x\Psi}(\sigma)
			\rVert
			+
			\lVert
				\Upsilon
				-
				\Psi
			\rVert_X.
			\label{eq:second_gronwal_inequality}
	\end{align}
	The first term is so bounded because $\Upsilon$ is assumed to be  globally 
	Lipschitz continuous on all of $\mathbb{R}^n$. In particular, the first 
	term of \eqref{eq:second_gronwal_inequality} can be bounded using the 
	global Lipschitz continuity of $\Upsilon$, whether $\zeta_{x\Upsilon}(t)$ 
	lies in $X$ or not. The second term is bounded as claimed because 
	$\zeta_{x\Psi}(t) \in X$ for all $t$ by the assumption of ($\delta$-$\tau$) 
	forward invariance of $\Psi$: consequently, the  $X$-restricted $\sup$ norm 
	may be employed.

	Thus, using \eqref{eq:first_gronwall_bound} and 
	\eqref{eq:second_gronwal_inequality}, we obtain the bound
	\begin{align}
		&\lVert \zeta_{x\Upsilon}(t) - \zeta_{x\Psi}(t) 
		\rVert
		\leq
		\negthinspace
		\int_0^t 
			\negthinspace
			(K_x  + K_u K_\Upsilon) \negthinspace \cdot \negthinspace
			\lVert
				\zeta_{x\Upsilon}(\sigma)
				-
				\zeta_{x\Psi}(\sigma)
			\rVert \notag \\
		&\qquad\qquad\qquad\qquad \qquad\qquad
			+
			K_u \cdot
			\lVert
				\Upsilon
				-
				\Psi
			\rVert_X d\sigma.
			\label{eq:third_gronwall_inequality}
	\end{align}
	where the second term of the integrand is bounded by the constant $K_u  
	\cdot \mu$ by assumption. The claimed bound now follows from 
	\eqref{eq:third_gronwall_inequality} by the Gr\"onwall Inequality 
	\cite{KhalilNonlinearSystems2001}.
\end{proof}

\noindent Now we can prove the main result of this section, 
\cref{lem:final_spec}; this proof follows more or less directly from  
\cref{prop:delta_state_bound}.
\begin{proof}{(\cref{lem:final_spec})}
	 By definition, $S_{\tau}( \Sigma_{\Upsilon} )$ and $\mathfrak{S}_{\tau}( 
	\Sigma_{\Psi} )$ have the same state spaces, $X$. Thus, we propose the 
	following as an abstract disturbance simulation under $0$ disturbance (i.e. 
	a conventional simulation for metric transition systems):
	\begin{equation}
		R = \{(x,x) | x \in X\}.
	\end{equation}

	Clearly, $R$ satisfies the property that for all $(x,y) \in R$, $d(x,y) 
	\leq 0$, and for every $x\in X$, there exists an $y \in X$ such that $(x,y) 
	\in R$. Thus, it only remains to show the third property of 
	\cref{def:delta_perturbation_simulation} under $0$ disturbance.

	To wit, let $(x,x) \in R$. Then suppose that $x^\prime \triangleq 
	\zeta_{x\Upsilon}(\tau) \in X$, so that $x 
	\xition{\Upsilon}{\Upsilon\circ\zeta_{x\Upsilon}} x^\prime$ in $S_{\tau}( 
	\Sigma_{\Upsilon} )$; we will show subsequently that any such $x^\prime$ 
	must be in $X$. In this situation, it suffices to show that $x 
	\xition{{\Psi}}{\Psi\circ\zeta_{x\Psi}} x^\prime$ in $\mathfrak{S}_{\tau}( 
	\Sigma_{\Psi} )$. 
	By the $\delta, \tau$ positive invariance of $\Psi$, it is the case that 
	$x^{\prime\prime} \triangleq \zeta_{x\Psi}(\tau) \in X \backslash 
	\text{edge}_\delta(X)$, so $x \xition{{\Psi}}{\Psi\circ\zeta_{x\Psi}} 
	x^{\prime\prime}$ in $S_\tau(\Sigma_\Psi)$. But by  
	\cref{prop:delta_state_bound}, $\lVert x^\prime - x^{\prime\prime} \rVert 
	\leq \delta$, so $x \xition{{\Psi}}{\Psi\circ\zeta_{x\Psi}} x^\prime$ in 
	$\mathfrak{S}_{\tau}( \Sigma_{\Psi} )$ by definition.

	It remains to show that any such $x^\prime \triangleq 
	\zeta_{x\Upsilon}(\tau) \in X$. This follows by contradiction from  
	\cref{prop:delta_state_bound} and the $\delta, \tau$ positive invariance of 
	$\Psi$. That is suppose $x^\prime \notin X$. By  
	\cref{prop:delta_state_bound}, $x^{\prime\prime} \triangleq 
	\zeta_{x\Psi}(\tau)$ satisfies $\lVert x^\prime - x^{\prime\prime} \rVert 
	\leq \delta$, which implies that $x^{\prime\prime} \in 
	\text{edge}_\delta(X) \cup \mathbb{R}^n \backslash X$. However, this 
	clearly contradicts the $\delta,\tau$ positive invariance of $\Psi$.
\end{proof}

\subsection{Proof of \cref*{thm:main_theorem}, Step 2: CPWA Approximation of a 
Controller} 
\label{sec:thm1_part2}
The results in \cref{sec:thm1_part1} established a choice of $\mu$ such that 
any controller, $\Upsilon$, whether it is CPWA or not, will satisfy the 
specification if it is close to $\Psi$ in the sense that $\lVert \Upsilon - 
\Psi \rVert_X \leq \mu$; that is, provided $\Psi$ itself satisfies the 
specification by $\delta$-ADS for the chosen $\tau$. In other words, if such a 
$\Psi$ were \emph{explicitly} known, then a CPWA controller, 
$\Upsilon\subcpwa$, that meets this approximation criterion would also be a 
safe controller. In such a circumstance, that \emph{explicit} CPWA controller 
could be implemented as a ReLU by way of \cref{thm:tll_architecture}, and the  
architecture of that ReLU controller would suffice as an assured controller 
architecture.

However, we have assumed ignorance of a specific such $\Psi$, so we instead 
have to devise an architecture that can approximate any such possible $\Psi$ 
that meets the specifications. Thus, we create a \emph{parameterized} CPWA that 
can achieve the desired approximation accuracy of $\mu$ for any possible $\Psi$ 
merely by a choice of parameters. Moreover, this parameterized CPWA must have a 
bounded number of linear regions in order  to obtain a ReLU architecture by 
\cref{thm:tll_architecture}. Both of these objectives will prove feasible 
because we have assumed that any $\Psi$ has a Lipschitz constant that is 
upper-bounded by the known constant $K_\text{cont}$. Indeed, our parameterized 
CPWA will have as parameters the values of $\Psi$ on a uniformly spaced set of 
points in $X$; we will show that these function evaluations can be interpolated 
by a CPWA with a number of linear regions proportional to the number of grid 
points.

The mechanism for obtaining such a CPWA interpolation will ultimately be to 
``tile'' the set $X \subset \mathbb{R}^n$ by $n$-simplexes\footnote{That is 
simplexes with $n+1$ vertices.}, each of whose vertices are points from the 
interpolation grid. That is, we will choose these simplexes such that a 
full-dimensional face of any simplex will be shared  \emph{exactly} by one and 
only one other simplex. Each simplex can then be regarded as corresponding to a 
linear region, because the graph of (any) $\Psi$ on its $n+1$ vertices defines 
an affine function over that simplex. Moreover, an affine function on one 
simplex is continuous with the affine function defined on any of its neighbors, 
because of the face-sharing property described above. Thus, this approach 
constructs an interpolation CPWA where each affine function is ``activated'' 
only on its respective simplex. \emph{Crucially, this construction leads to a 
CPWA with same number of linear regions, irrespective of the particular values 
of $\Psi$}: the number of simplexes is determined entirely by the spacing of 
the uniform grid, which is in turn determined only by Lipschitz constants and 
the set $X$ itself. Hence, the grid can be specified by the problem parameters, 
and the values of $\Psi$ thereon can be considered as parameters. This is 
exactly a parameterized CPWA of the sort described above; the culmination of 
this approach is in \cref{cor:interpolation_corollary}.

\subsubsection{Preliminaries} 
\label{ssub:preliminaries}

Before we state the main result of this section, we introduce the following 
notation to refer to those hypercubes that have at least one corner in 
$X_\eta$, but no elements of $X_\eta$ in their interior.
\begin{definition}[Interpolation Hypercube]
	Let $X_\eta$ be an $\eta$-grid of $X$, and let $\mathbf{x} \in X_\eta$. 
	Then each vector $\rho \in \{-1,1\}^n$ defines an \textbf{interpolation 
	hypercube} relative to $\mathbf{x}_i$ given by:
	\begin{equation}
		C_{\mathbf{x}\rho} \triangleq  B( \mathbf{x} + \tfrac{\eta}{2} \cdot {\scriptstyle \sum_{i=1}^n} \pi_i(\rho) \cdot e_j)
	\end{equation}
	and whose corners are
	\begin{equation}
		\mathscr{F}_0(C_{\mathbf{x}\rho}) = \{ \mathbf{x} + \eta \cdot \sum_{i \in Z} \pi_i(\rho) \cdot e_i | Z \subseteq \{1, \dots, n\} \},
	\end{equation}
	where $e_i$ is the $i^\text{th}$ column of the $(n \times n)$ identity 
	matrix.

	Note that $\cup_{\rho \in \{-1,1\}^n} C_{\mathbf{x}\rho} = B(\mathbf{x};  
	\eta)$. \cref{fig:extra_corner_definition} is a cartoon that depicts 
	interpolation hypercubes, among other things.
\end{definition}
Interpolation hypercubes can have corners that lie outside of $X$, and we 
will subsequently need to treat these corners separately. Hence the following 
notation for an \emph{extra corner}.
\begin{definition}[Extra Corner]\label{def:extra_corner}
	Let $C_{\mathbf{x}\rho}$ be an interpolation hypercube of an $\eta$-grid, 
	$X_\eta$. Then an \textbf{extra corner} of $C_{\mathbf{x}\rho}$ is any 
	corner $x \in \mathscr{F}_0(C_{\mathbf{x}\rho})\backslash  X_\eta$. Denote 
	the set of all extra corners of all interpolation hypercubes of  $X_\eta$ as
	\begin{equation}
		\Delta_{X_\eta} \triangleq \bigcup_{\mathbf{x} \in X_\eta} \mathscr{F}_0(C_{\mathbf{x}\rho})\backslash  X_\eta.
	\end{equation}
\end{definition}

Finally, we have the following proposition, which indicates the maximum number 
of interpolation hypercubes that are required to cover $X$ (by tiling), given 
an $\eta$-grid, $X_\eta$.
\begin{proposition}\label{prop:num_interpolation_hypercubes}
	Let $X_\eta$ be an $\eta$ grid for $X$ as before. Then $X$ is covered by 
	the union of at most
	\begin{equation}\label{eq:hypercube_tiling}
		\left\lceil \frac{\text{ext}(X)}{\eta} + 2 \right\rceil^n
	\end{equation}
	interpolation hypercubes.
\end{proposition}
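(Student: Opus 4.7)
The plan is to count interpolation hypercubes by their coordinate-wise minimum corners and bound the number of admissible such corners on the underlying lattice.

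First, I would observe that every interpolation hypercube $C_{\mathbf{x}\rho}$ is an axis-aligned cube of edge length $\eta$ whose corners lie on the extended lattice $\Lambda \triangleq \mathbf{x}_0 + \eta \mathbb{Z}^n$ for any chosen anchor $\mathbf{x}_0 \in X_\eta$ (this is guaranteed by condition \emph{1)} of \cref{def:eta_grid}, which says the grid spacing is exactly $\eta$ along every coordinate). Consequently, each interpolation hypercube is uniquely determined by its coordinate-wise minimum corner $\mathbf{y} \in \Lambda$, obtained from $\mathscr{F}_0(C_{\mathbf{x}\rho})$ by taking $Z = \{ i : \pi_i(\rho) = -1 \}$. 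Re-indexing the collection of interpolation hypercubes by these minimum corners reduces the counting problem to bounding the number of admissible $\mathbf{y}$.

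Next, by the $\eta$-grid property, $X \subseteq \bigcup_{\mathbf{x} \in X_\eta} \overline{B}(\mathbf{x};\eta) = \bigcup_{\mathbf{x}\in X_\eta, \rho \in \{-1,1\}^n} C_{\mathbf{x}\rho}$, so $X$ is covered by interpolation hypercubes and only those actually intersecting $X$ need be counted. For the hypercube $\mathbf{y} + [0,\eta]^n$ to meet $X$, it suffices that for each coordinate $i$, $\mathbf{y}_i \leq M_i$ and $\mathbf{y}_i + \eta \geq m_i$, where $m_i \triangleq \min_{x \in X} \pi_i(x)$ and $M_i \triangleq \max_{x \in X} \pi_i(x)$. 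Hence $\mathbf{y}_i \in [m_i - \eta, M_i]$, an interval of length $(M_i - m_i) + \eta \leq \mathrm{ext}(X) + \eta$.

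Since $\mathbf{y}_i$ is restricted to a $1$-dimensional lattice with spacing $\eta$, the number of admissible values along each axis is at most $\lfloor(M_i - m_i + \eta)/\eta\rfloor + 1 = \lfloor(M_i - m_i)/\eta\rfloor + 2 \leq \lfloor\mathrm{ext}(X)/\eta\rfloor + 2 \leq \lceil\mathrm{ext}(X)/\eta + 2\rceil$. Taking the product over the $n$ coordinates yields the claimed bound $\lceil \mathrm{ext}(X)/\eta + 2\rceil^n$.

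The main obstacle is the bookkeeping needed to avoid double-counting: hypercubes $C_{\mathbf{x}\rho}$ and $C_{\mathbf{x}'\rho'}$ often coincide when $\mathbf{x}, \mathbf{x}'$ are neighboring grid points with complementary $\rho, \rho'$, and a naive $2^n |X_\eta|$ count would be wildly loose. Re-parametrizing each interpolation hypercube by its unique minimum corner $\mathbf{y} \in \Lambda$ circumvents this cleanly. A minor secondary point: some admissible positions $\mathbf{y}$ may correspond to hypercubes whose corners are all extra corners (none in $X_\eta$) and thus do not arise as any $C_{\mathbf{x}\rho}$; this only loosens the upper bound, so it does not affect correctness.
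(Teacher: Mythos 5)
Your proof is correct and follows essentially the same counting idea as the paper's: both arguments reduce to counting lattice points (with spacing $\eta$) inside the bounding box of $X$ enlarged by one cell on each side, yielding at most $\lceil \mathrm{ext}(X)/\eta + 2\rceil$ positions per coordinate. Your re-parametrization of each interpolation hypercube by its coordinate-wise minimum corner is a cleaner and more rigorous way to organize the count than the paper's ``one hypercube per grid point plus extras at the extremes,'' but it is the same underlying argument.
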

\begin{proof}
	A compact set $X$ can be contained in a hypercube with edge-length 
	$\text{ext}(X)$, which can in turn be partitioned by at most $\lceil  
	\text{ext}(X) / \eta \rceil^n$ elements. Thus, $|X_\eta| \leq \lceil  
	\text{ext}(X)/\eta \rceil^n$. It follows then that $X$ can be covered by 
	the inclusion of one interpolation hypercube per element of $X_\eta$, plus 
	one extra set of interpolation hypercubes at either extreme of each 
	coordinate.
\end{proof}

\begin{remark}
	Note that an $\eta$-grid of $X$ effectively specifies a tiling of no more 
	than \eqref{eq:hypercube_tiling} hypercubes covering $X$: i.e. the 
	interpolation hypercubes cover $X$, and a full dimensional face of any one 
	of them is shared by one and only one other interpolation hypercube.
\end{remark}

\subsubsection{Parameterized CPWA for Approximation} 
\label{ssub:parameterized_cpwa_for_approximation}

Now we can state the main results of this section. The first, 
\cref{lem:parameterized_cpwa}, specifies the structure and properties of the 
parameterized CPWA that we will use. Thus, as a corollary, we have 
\cref{cor:interpolation_corollary}, which particularizes this parameterized 
CPWA to Step 2 in the proof of \cref{thm:main_theorem}.

\begin{lemma}
\label{lem:parameterized_cpwa}
	Let $\eta > 0$ be fixed, and let $X$ be compact. Also let $X_\eta$ be an 
	$\eta$-grid of $X$ that is enumerated as $X_\eta =  
	\{\mathbf{x}_i\}_{i=0}^{|X_\eta|-1}$.

	Then there is a parameterized CPWA $\Upsilon_\omega : X \rightarrow  
	\mathbb{R}$ with parameter vector $\omega \in \mathbb{R}^{|X_\eta|}$ that 
	has the following properties:

	\begin{enumerate}[{\itshape i)}]
		\item for all $i = 0, \dots |X_\eta|-1$ and $j = 0, \dots, n-1$,  
			\begin{equation}
				\pi_{i}(\omega) = \Upsilon_\omega(\mathbf{x}_{i}); 
			\end{equation}
			i.e. $\omega$ specifies the value of $\Upsilon_\omega$ on the grid 
			$X_\eta$;

		\item $\Upsilon_\omega$ has at most
			\begin{equation}
				n! \cdot \left\lceil \frac{\text{ext}(X)}{\eta} + 2 \right\rceil^n
			\end{equation}
			linear regions no matter the value of $\omega$; and

		\item let $\nu_{\mathbf{x}_i\rho} \triangleq \cup_{x \in  
			\mathscr{F}_0(C_{\mathbf{x}_i\rho})} B(x; \eta)$; then for each 
			interpolation hypercube $C_{\mathbf{x}_i\rho}$ of  $X_\eta$ and 
			each $x \in C_{\mathbf{x}_i\rho}$, 
		\begin{equation}
			\min_{\mathbf{x}^\prime \in \nu_{\mathbf{x}_i\rho} \cap X_\eta} \negthickspace\negthickspace \Upsilon_{\omega}(\mathbf{x}^\prime)
			\leq 
			\Upsilon_\omega(x)
			\leq \negthickspace
			\max_{\mathbf{x}^\prime \in \nu_{\mathbf{x}_i\rho} \cap X_\eta} \negthickspace\negthickspace \Upsilon_{\omega}(\mathbf{x}^\prime).
			\label{eq:approximation_conclusion}
		\end{equation}
	\end{enumerate}
\end{lemma}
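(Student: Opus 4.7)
The plan is to construct $\Upsilon_\omega$ as a piecewise affine interpolant over a Freudenthal-type (Kuhn) triangulation of each interpolation hypercube. Property (i) will be built in by construction, property (ii) will follow from counting simplices, and property (iii) will follow because affine interpolation on a simplex is a convex combination of its vertex values.

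First I would define a canonical triangulation of each interpolation hypercube $C_{\mathbf{x}_i \rho}$ into $n!$ simplices, one per permutation $\sigma$ of $\{1, \dots, n\}$: the simplex $S_{\mathbf{x}_i \rho \sigma}$ has vertex set $\{\mathbf{x}_i + \eta \sum_{k=1}^{j} \pi_{\sigma(k)}(\rho) \cdot e_{\sigma(k)} : j = 0, 1, \dots, n\}$. This Kuhn triangulation has the key property that whenever two interpolation hypercubes share a full-dimensional face, the induced triangulations on that face agree, so any function defined by affine interpolation of vertex data is automatically continuous across shared faces. Every vertex of every such simplex is a corner of some interpolation hypercube, hence is either an element of $X_\eta$ or an extra corner in $\Delta_{X_\eta}$ (\cref{def:extra_corner}).

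Next I would fix an assignment of values to extra corners. The parameters $\omega = (\omega_0, \dots, \omega_{|X_\eta|-1})$ directly give values $\Upsilon_\omega(\mathbf{x}_i) \triangleq \pi_i(\omega)$ at grid points, establishing property (i). For each extra corner $p \in \Delta_{X_\eta}$, fix once and for all an adjacent grid point $\mathbf{x}(p) \in X_\eta$ with $\lVert p - \mathbf{x}(p)\rVert \leq \eta$; this exists because $p$ is a corner of some interpolation hypercube $C_{\mathbf{x}_i\rho}$, so one may simply take $\mathbf{x}(p) = \mathbf{x}_i$. Set $\Upsilon_\omega(p) \triangleq \Upsilon_\omega(\mathbf{x}(p))$. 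On each simplex $S_{\mathbf{x}_i \rho \sigma}$, define $\Upsilon_\omega$ as the unique affine function interpolating the values at its $n+1$ vertices. Continuity across simplex boundaries within a hypercube and across hypercube boundaries follows from the Kuhn-triangulation compatibility property, so $\Upsilon_\omega$ is a well-defined CPWA on $\bigcup_{\mathbf{x}_i, \rho} C_{\mathbf{x}_i\rho} \supseteq X$.

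For property (ii), each simplex is a linear region of $\Upsilon_\omega$, and by \cref{prop:num_interpolation_hypercubes} the number of interpolation hypercubes needed to cover $X$ is at most $\lceil \text{ext}(X)/\eta + 2 \rceil^n$; multiplying by $n!$ simplices per hypercube gives the stated bound, which holds uniformly in $\omega$ since the triangulation is independent of $\omega$. For property (iii), fix $x \in C_{\mathbf{x}_i\rho}$ and locate the simplex $S_{\mathbf{x}_i\rho\sigma} \ni x$. Then $\Upsilon_\omega(x)$ is a convex combination of the values of $\Upsilon_\omega$ at the $n+1$ vertices of that simplex; each such vertex is either a grid point in $\mathscr{F}_0(C_{\mathbf{x}_i\rho}) \cap X_\eta \subseteq \nu_{\mathbf{x}_i\rho} \cap X_\eta$ or is an extra corner $p$, whose assigned value equals $\Upsilon_\omega(\mathbf{x}(p))$ with $\mathbf{x}(p) \in B(p;\eta) \cap X_\eta \subseteq \nu_{\mathbf{x}_i\rho} \cap X_\eta$. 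Thus the vertex values are all realized as $\Upsilon_\omega(\mathbf{x}')$ for some $\mathbf{x}' \in \nu_{\mathbf{x}_i\rho} \cap X_\eta$, and the convex combination lies between the min and the max over this finite set.

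The main obstacle is handling the extra corners consistently. Each extra corner may be shared by up to $2^n$ adjacent interpolation hypercubes, so a single value must simultaneously satisfy the min/max bound (iii) for every hypercube containing it. The fix above works because $\mathbf{x}(p)$ lies within max-norm distance $\eta$ of $p$, and therefore sits inside $\nu_{\mathbf{x}\rho}$ for every hypercube $C_{\mathbf{x}\rho}$ having $p$ as a corner; this is the symmetry that makes the assignment globally consistent without inflating the region count.
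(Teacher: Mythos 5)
Your overall strategy is the paper's: triangulate each interpolation hypercube into $n!$ simplexes whose vertices are hypercube corners, assign values to extra corners by borrowing from a nearby grid point, interpolate affinely on each simplex, and count simplexes via \cref{prop:num_interpolation_hypercubes}. (The Kuhn triangulation you invoke is the same object as the paper's braid-arrangement dissection $\mathcal{D}_{\mathcal{B}_n}$ of \cref{lem:cpwa_interpolate_new}: the simplex of the permutation $\sigma$ is the closure of the braid region on which the coordinates are sorted according to $\sigma$.) Your treatment of extra corners --- a single designated neighbor $\mathbf{x}(p)$ rather than the paper's $\min$ over all neighboring grid points in \eqref{eq:upsilon_extra_corners} --- is a harmless variant, and your closing observation that $\mathbf{x}(p)$ lies in $\nu_{\mathbf{x}\rho}$ for \emph{every} hypercube having $p$ as a corner is exactly what property \emph{iii)} needs.

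The gap is in the well-definedness and cross-face continuity of the triangulation, which is precisely what the paper labors to establish in \cref{lem:cpwa_interpolate_new}\emph{(ii)} and its appendix proof. Your simplexes $S_{\mathbf{x}_i\rho\sigma}$ are anchored at the grid point $\mathbf{x}_i$ and oriented by $\rho$; but a single geometric hypercube generally admits several representations $C_{\mathbf{x}_i\rho}=C_{\mathbf{x}_j\rho^\prime}$ (it lies in the overlap of the balls $B(\mathbf{x}_i;\eta)$ and $B(\mathbf{x}_j;\eta)$ of adjacent grid points), and the Kuhn triangulation anchored at different corners is generally \emph{different}: already for $n=2$, anchoring $[0,\eta]^2$ at $(0,0)$ versus at $(\eta,0)$ selects different diagonals, so as written $\Upsilon_\omega$ is multiply defined on such a cube. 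Relatedly, the ``key property'' you assert --- that the induced triangulations of two adjacent hypercubes agree on their shared $(n-1)$-face --- holds for the Freudenthal triangulation of the whole lattice (every cube anchored consistently, e.g.\ at its coordinate-wise minimal corner), but it is not automatic for per-cube anchors read off from $(\mathbf{x}_i,\rho)$: the Kuhn triangulation is invariant under the full reflection $x\mapsto\mathbf{1}-x$ but not under single-coordinate reflections, so incompatible anchors on neighboring cubes can place different diagonals on the shared face (visible from $n=3$ onward, where that face is a square cut by one of two diagonals), destroying continuity of the interpolant. The repair is easy --- triangulate every hypercube by one translation-equivariant rule depending only on the geometric cube, which is exactly what the braid hyperplanes $\pi_i(x)=\pi_j(x)$ taken relative to the minimal corner accomplish, together with the opposite-face matching verified in \cref{lem:cpwa_interpolate_new}\emph{(ii)} --- but it has to be stated; it cannot be read off from the parametrization you chose.
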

We defer the proof of \cref{lem:parameterized_cpwa} to 
\cref{ssub:proof_of_lem:parameterized_cpwa} in order to immediately state the 
conclusion of relevance for Step 2 in the proof of \cref{thm:main_theorem}.
\begin{corollary}\label{cor:interpolation_corollary}
	Let $K_\text{cont} > 0$ be given, and let $\mu > 0$ and $\eta \leq  
	\tfrac{\mu}{{3} \cdot K_\text{cont}}$ be as above. Also let $\Psi : X 
	\rightarrow U$ be any Lipschitz continuous function with Lipschitz constant 
	at most $K_\text{cont}$.

	If we use the parameterized CPWA from \cref{lem:parameterized_cpwa} 
	component-wise, with the $j^\text{th}$ parameter vector specified by
		\begin{equation}
			\omega^{(j)} = \omega_\Psi^{(j)} \triangleq \prod_{i = 0}^{|X_\eta| - 1} \{ \pi_j(\Psi(\mathbf{x}_i)) \},
		\end{equation}
	then $\Upsilon_{\omega_\Psi} \triangleq \left(\begin{smallmatrix} 
	\Upsilon_{\omega_\Psi^{(1)}} & \dots & \Upsilon_{\omega_\Psi^{(m)}} 
	\end{smallmatrix}\right)$ agrees with $\Psi$ on $X_\eta$ and
	\begin{equation}\label{eq:interpolation_cor_conclusion}
		\lVert \Upsilon_{\omega_\Psi} - \Psi \rVert_X \leq \mu.
	\end{equation}
	\noindent Moreover, $\Upsilon_{\omega_\Psi}$ has at most $n! \cdot \lceil 
	\text{ext}(X)/ \eta  +2\rceil^n$ linear regions for each component-wise 
	function, independent of the particular choice of $\Psi$. Finally, 
	$\Upsilon_{\omega_\Psi}$ has a Lipschitz constant of at most  ${3}\cdot 
	K_\Psi$ on $\bigcup_{\mathbf{x}\in X_\eta, \rho \in \{-1,1\}^n} 
	C_{\mathbf{x}\rho} \supseteq X$.
\end{corollary}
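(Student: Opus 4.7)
The plan is to verify each of the four conclusions in turn; each follows fairly routinely from the corresponding property of the parameterized CPWA in \cref{lem:parameterized_cpwa}, applied component-wise (via \cref{cor:parallel_tll}), together with the $K_\text{cont}$-Lipschitz continuity of $\Psi$.

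First, agreement on the grid is immediate from item \emph{i)} of \cref{lem:parameterized_cpwa}: by construction of $\omega_\Psi^{(j)}$, the $i^\text{th}$ entry of $\omega_\Psi^{(j)}$ is $\pi_j(\Psi(\mathbf{x}_i))$, so the lemma yields $\pi_j(\Upsilon_{\omega_\Psi}(\mathbf{x}_i)) = \pi_j(\Psi(\mathbf{x}_i))$ for every $j$ and every $\mathbf{x}_i \in X_\eta$.

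For the approximation bound $\lVert \Upsilon_{\omega_\Psi} - \Psi \rVert_X \leq \mu$, the plan is to fix $x \in X$ and pick an interpolation hypercube $C_{\mathbf{x}\rho}$ containing $x$, whose existence is guaranteed by \cref{prop:num_interpolation_hypercubes}. Every grid point $\mathbf{x}' \in \nu_{\mathbf{x}\rho} \cap X_\eta$ lies within $2\eta$ of $x$ in max-norm: indeed, $x$ is within $\eta$ of every corner of $C_{\mathbf{x}\rho}$ since that hypercube has edge length $\eta$, and any such $\mathbf{x}'$ is within $\eta$ of some corner by definition of $\nu_{\mathbf{x}\rho}$. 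The $K_\text{cont}$-Lipschitz continuity of $\Psi$ then places both the min and the max appearing in \eqref{eq:approximation_conclusion} within $2 K_\text{cont} \eta$ of $\pi_j(\Psi(x))$, and combining this with the sandwich from item \emph{iii)} gives $\lvert \pi_j(\Upsilon_{\omega_\Psi}(x)) - \pi_j(\Psi(x)) \rvert \leq 2 K_\text{cont} \eta \leq \tfrac{2}{3}\mu$, where I have used $\eta \leq \mu/(3 K_\text{cont})$. Taking the max over components $j$ establishes \eqref{eq:interpolation_cor_conclusion}, and the region-count claim is then item \emph{ii)} of \cref{lem:parameterized_cpwa} applied component-wise.

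The remaining Lipschitz-constant claim is the only step that does not quote the lemma directly, and is where I expect the subtle bookkeeping to sit. My plan is to work simplex by simplex in the (Kuhn-type) triangulation of each interpolation hypercube that is implicit in \cref{lem:parameterized_cpwa}: on each such simplex, $\Upsilon_{\omega_\Psi}$ is affine and its gradient is obtained from differences of $\Psi$ at adjacent grid vertices which are $\eta$ apart in a single coordinate direction, so each coordinate of the gradient is bounded in absolute value by $K_\Psi$. The max-norm-to-max-norm Lipschitz constant on a single simplex is then controlled by the $\ell_1$ norm of this gradient, and continuity of $\Upsilon_{\omega_\Psi}$ across shared full-dimensional simplex faces upgrades this to a uniform bound on $\bigcup_{\mathbf{x},\rho} C_{\mathbf{x}\rho}$ by decomposing an arbitrary segment into in-simplex pieces. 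The hard part will be extracting the precise factor of $3$ stated in the claim rather than a naive dimension-dependent constant; this would require exploiting the specific simplex geometry to control how many coordinate-gradient components can be simultaneously active along such a segment.
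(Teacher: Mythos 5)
Your treatment of the first three claims (agreement on $X_\eta$, the bound \eqref{eq:interpolation_cor_conclusion}, and the region count) is correct and essentially identical to the paper's argument: you invoke properties \emph{i)}--\emph{iii)} of \cref{lem:parameterized_cpwa}, observe that every point of $\nu_{\mathbf{x}\rho}\cap X_\eta$ is within $2\eta$ of any $x\in C_{\mathbf{x}\rho}$, and use the $K_\text{cont}$-Lipschitz continuity of $\Psi$ to land at $2K_\text{cont}\eta \le \tfrac{2}{3}\mu$. (Your symmetric ``both the min and the max are within $2K_\text{cont}\eta$'' phrasing is, if anything, cleaner than the paper's sign-by-sign case split.)

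The Lipschitz-constant claim is where your proposal has a genuine gap, and you have also misdiagnosed where the factor $3$ comes from. Your premise that ``the gradient is obtained from differences of $\Psi$ at adjacent grid vertices which are $\eta$ apart'' is false: the simplex vertices are corners of interpolation hypercubes, and by \cref{def:extra_corner} some of these are \emph{extra corners} not in $X_\eta$. At an extra corner the value of $\Upsilon_{\omega_\Psi}$ is defined by \eqref{eq:upsilon_extra_corners} as $\Psi(\mathbf{x}^*)$ for some grid point $\mathbf{x}^*$ up to $\eta$ away, so the values at two corners of the same hypercube (themselves at most $\eta$ apart) are values of $\Psi$ at grid points up to $\eta+\eta+\eta=3\eta$ apart --- and \emph{this} triangle inequality is the entire source of the constant $3$ in the statement; it has nothing to do with limiting how many gradient components are simultaneously active. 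Moreover, your proposed aggregation is not carried out and would not yield $3$: for a braid/Kuhn simplex the interpolant's gradient components are the successive differences of corner values along a monotone lattice path, and nothing in the geometry prevents several of them from being simultaneously nonzero (even with alternating signs), so the $\ell_1$ norm of the gradient --- which is exactly the max-norm-to-max-norm Lipschitz constant of the affine piece --- is not bounded by a dimension-free multiple of the largest single component. The paper's proof avoids this route entirely: it bounds the total oscillation $y_\Delta$ of the affine piece over the corners of the simplex by $3K_\Psi\eta$ via the extra-corner argument above, and then asserts the per-simplex Lipschitz bound $y_\Delta/\eta$ by taking the worst case to be a coordinate-aligned segment between the extremal corners. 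As written, the step you flag as ``the hard part'' is precisely the step that remains unproved, so the factor-of-$3$ claim is not established by your argument.
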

\begin{proof}
	This follows from \cref{lem:parameterized_cpwa}, with the exception of 
	\eqref{eq:interpolation_cor_conclusion}, and the claim about  the Lipschitz 
	constant of $\Upsilon_{\omega_\Psi}$. 

	The claim \eqref{eq:interpolation_cor_conclusion} follows from 
	\eqref{eq:approximation_conclusion} in \cref{lem:parameterized_cpwa}, given 
	that $\eta$ was assumed chosen as $\eta \leq \tfrac{\mu}{{3} 
	K_\text{cont}}$.
	In particular, we have that for any interpolation hypercube 
	$C_{\mathbf{x}_i\rho}$ and any $x \in C_{\mathbf{x}_i\rho}$, there exists a 
	$j \in \{1, \dots, m\}$ s.t. 
	\begin{equation}
		\lVert \Upsilon_{\omega_\Psi}(x) - \Psi(x) \rVert 
			=
			| \pi_j(\Psi(x)) - \Upsilon_{\omega_\Psi^{(j)}}(x) |
	\end{equation}
	From here, we consider as separate cases both possible signs of 
	$\pi_j(\Psi(x)) - \Upsilon_{\omega_\Psi^{(j)}}(x)$. However, for 
	brevity, we write out only the case  $\pi_j(\Psi(x))  - 
	\Upsilon_{\omega_\Psi^{(j)}}(x) \geq 0$, since the other case can be 
	treated by similar arguments. In this case: 
	\begin{align}
		\lVert \Upsilon_{\omega_\Psi}(x) - \Psi(x) \rVert 
			&=
			| \pi_j(\Psi(x)) - \Upsilon_{\omega_\Psi^{(j)}}(x) | \notag \\
		~ 
			&=
			\pi_j(\Psi(x)) - \Upsilon_{\omega_\Psi^{(j)}}(x) \notag \\
		&\leq
	\pi_j( \Psi(x)) - \negthickspace\negthickspace\negthinspace \min_{\mathbf{x}^\prime \in 
		\nu_{\mathbf{x}_i\rho} \cap X_\eta} \negthickspace\negthickspace 
		\Upsilon_{\omega_\Psi^{(j)}}(\mathbf{x}^\prime)
	\label{eq:cor_proof_cpwa_conclusion}
	\end{align}
	where the \eqref{eq:cor_proof_cpwa_conclusion} follows from 
	\cref{lem:parameterized_cpwa}, equation 
	\eqref{eq:approximation_conclusion}. Then we note that there exists an  
	$\mathbf{x} \in \nu_{\mathbf{x}_i\rho} \cap X_\eta$ such that
	\begin{equation}
		\min_{\mathbf{x}^\prime \in \nu_{\mathbf{x}_i\rho} \cap X_\eta} 
		\negthickspace\negthickspace \Upsilon_{\omega_\Psi^{(j)}}(\mathbf{x}^\prime)
		=
		\Upsilon_{\omega_\Psi^{(j)}}(\mathbf{x})
		=
		\pi_j(\Psi(\mathbf{x}))
	\end{equation}
	by \emph{i)} of \cref{lem:parameterized_cpwa}. But by construction,  
	$\mathbf{x} \in \nu_{\mathbf{x}_i\rho}$, so there exists an 
	$x^{\prime\prime} \in \mathscr{F}_0(C_{\mathbf{x}_i\rho})$ such that 
	$\mathbf{x} \in  B(x^{\prime\prime};\eta)$. It follows  that $\lVert x - 
	\mathbf{x} \rVert \leq \lVert x - x^{\prime\prime} \rVert + \lVert 
	x^{\prime\prime} - \mathbf{x} \rVert \leq 2 \eta$. Thus, by the choice of 
	$\eta \leq \tfrac{\mu}{{3} K_\text{cont}}$,  $K_\text{cont}$-Lipschitz 
	continuity of $\Psi$ implies:
	\begin{align}
		\lVert \Upsilon_{\omega_\Psi}(x) - \Psi(x) \rVert
			&\leq
		\pi_j(\Psi(x)) - \pi_j(\Psi(\mathbf{x}))  \notag\\
			&\leq \lVert \Psi(x) - \Psi(\mathbf{x}) \rVert \leq \tfrac{2}{3}\mu \leq \mu
	\end{align}
	This concludes the proof for the case $\pi_j(\Psi(x))  - 
	\pi_j(\Upsilon_{\omega_\Psi^{(j)}}(x)) \geq 0$; the other case follows 
	similarly.

	Thus, it remains to show the claim about the Lipschitz constant of  
	$\Upsilon_{\omega_\Psi}$. This ultimately follows from the observation  
	that every affine function in $\Upsilon_{\omega_\Psi}$ interpolates between 
	the values specified on the corners of some interpolation hypercube.  
	Consequently, it suffices to show that the Lipschitz constant of none of 
	these affine functions exceeds the claimed value. Thus, let $x, x^\prime 
	\in  C_{\mathbf{x}\rho}$ be arbitrary such that $x$ and $x^\prime$ belong 
	to the same simplex of the dissection $\mathcal{D}_{\mathcal{B}_n}$ (see 
	\cref{lem:cpwa_interpolate_new}). Now by linear interpolation between the 
	values of $\Upsilon_{\omega_\Psi}$ on that simplex, there exists for each 
	coordinate   $i = 1, \dots, m$  a pair of corners $x_{\min}^i, x_{\max}^i 
	\in \mathscr{F}_0(C_{\mathbf{x}\rho})$ such that
	\begin{equation}
	\pi_i( \Upsilon_{\omega_\Psi}(x_{\min}^i) ) \leq \pi_i( 
	\Upsilon_{\omega_\Psi}(x) ) \leq  \pi_i(\Upsilon_{\omega_\Psi}(x_{\max}^i)),
	\end{equation}
	and which holds for any $x$ in the same simplex. Now define:
	\begin{align}
		y_{\Delta} &\triangleq \max_i  \left[ \pi_i(\Upsilon_{\omega_\Psi}(x_{\max}^i)) -
			\pi_i(\Upsilon_{\omega_\Psi}(x_{\min}^i)) \right]; \text{ and} \\
		\label{eq:max_achieved}
		\mathsf{i}_\Delta &\triangleq \arg \max_i  \left[ \pi_i(\Upsilon_{\omega_\Psi}(x_{\max}^i)) -
			\pi_i(\Upsilon_{\omega_\Psi}(x_{\min}^i)) \right]
	\end{align}
	Assuming without loss of generality that $y_{\Delta} \neq  0$, then by 
	linearity of $\Upsilon_{\omega_\Psi}$ on the simplex containing $x$ and 
	$x^\prime$:
	\begin{equation}
		\lVert \Upsilon_{\omega_\Psi}(x) - \Upsilon_{\omega_\Psi}(x^\prime) \rVert \leq
		\frac{y_\Delta}{\eta} \lVert x - x^\prime \rVert
		\label{eq:lipschitz_bound_main_inequality}
	\end{equation}
	assuming $x_{\min}^{\mathsf{i}_\Delta}$ and $x_{\max}^{\mathsf{i}_\Delta}$ 
	are achieved on the closest corners of $C_{\mathbf{x}\rho}$, and $x - 
	x^\prime$ defines a line parallel to the  ${\mathsf{i}_\Delta}^\text{th}$ 
	coordinate.

	On the other hand, we note that
	\begin{align}
		y_\Delta &= \pi_{\mathsf{i}_\Delta} (
			\Upsilon_{\omega_\Psi} (x_{\max}^{\mathsf{i}_\Delta})
			-
			\Upsilon_{\omega_\Psi} (x_{\min}^{\mathsf{i}_\Delta})
		) \notag \\
		&= \pi_{\mathsf{i}_\Delta} (
			|
				\Upsilon_{\omega_\Psi} (x_{\max}^{\mathsf{i}_\Delta})
				-
				\Upsilon_{\omega_\Psi} (x_{\min}^{\mathsf{i}_\Delta})
			|
		) \notag \\
		&\leq
		\lVert 
			\Upsilon_{\omega_\Psi} (x_{\max}^{\mathsf{i}_\Delta})
			-
			\Upsilon_{\omega_\Psi} (x_{\min}^{\mathsf{i}_\Delta})
		\rVert.
	\end{align}
	Now note that the value of $\Upsilon_{\omega_\Psi}$ on 
	$x_{\min}^{\mathsf{i}_\Delta}$ and $x_{\max}^{\mathsf{i}_\Delta}$  is 
	specified as the value of $\Psi$ on some $\eta$-grid points no further than 
	$\eta$ away from each of those corners; recall the use of  
	$\nu_{\mathbf{x}_i \rho}$ in the statement of 
	\cref{lem:parameterized_cpwa}, \emph{iii)}. In particular, this implies that
	\begin{equation}
		\exists \mathbf{x}_{\min} \in X_\eta \;.\; 
		\lVert x_{\min}^{\mathsf{i}_\Delta} - \mathbf{x}_{\min} \rVert \leq \eta
		\wedge
		\Upsilon_{\omega_\Psi}(x_{\min}^{\mathsf{i}_\Delta}) = \Psi(\mathbf{x}_{\min})
	\end{equation}
	and likewise for $\mathbf{x}_{\max}$. Thus, we have the following: %
	\begin{align}
		\lVert 
			\Upsilon_{\omega_\Psi} (x_{\max}^{\mathsf{i}_\Delta})
			-
			\Upsilon_{\omega_\Psi} (x_{\min}^{\mathsf{i}_\Delta})
		\rVert
		&=
		\lVert 
			\Psi(\mathbf{x}_{\max})
			-
			\Psi(\mathbf{x}_{\min})
		\rVert
		\notag \\
		&\leq
		K_\Psi \cdot
		\lVert
			\mathbf{x}_{\max}
			-
			\mathbf{x}_{\min}
		\rVert 
		\notag \\
		&\leq
		K_\Psi \cdot
		{3} \cdot \eta
		\label{eq:corner_bound}
	\end{align}
	which follow by Lipschitz continuity of $\Psi$. Substituting 
	\eqref{eq:corner_bound} into \eqref{eq:lipschitz_bound_main_inequality} 
	gives the bound on the Lipschitz constant of  $\Upsilon_{\omega_\Psi}$
\end{proof}


\subsubsection{Proof of \cref{lem:parameterized_cpwa}} 
\label{ssub:proof_of_lem:parameterized_cpwa}
Our approach to proving \cref{lem:parameterized_cpwa} will be as follows. On  
the domain $X_\eta$, we will first \emph{define} a function $\Gamma_\omega : 
X_\eta \rightarrow \mathbb{R}$ as $\Gamma_\omega : \mathbf{x}_i \mapsto 
\pi_i(\omega)$. Then we will extend $\Gamma_\omega$ as a CPWA to the union  of 
all of the interpolation hypercubes of $X_\eta$, i.e. $\bigcup_{\mathbf{x}\in 
X_\eta, \rho \in \{-1,1\}^n} C_{\mathbf{x}\rho}$.  By this means, we 
obtain a parameterized CPWA that satisfies 
\cref{lem:parameterized_cpwa}.

To accomplish this, we will first need to consider the problem of extending 
$\Gamma_\omega$ from the corners of a specific interpolation hypercube to the 
rest of that interpolation hypercube itself. Importantly, however, this 
interpolation must be systematized in such a way that adjacent interpolation 
hypercubes have consistent interpolations on the non-trivial faces they have in 
common. Thus, we have the following two generic lemmas, which are the last 
facilitators we will need in order to prove \cref{lem:parameterized_cpwa}. 
\cref{lem:simplex_interpolation} describes the general methodology we undertake 
for doing this: namely, dividing the hypercube into simplexes, each of which 
represents a linear region in the CPWA. \cref{lem:cpwa_interpolate_new} then 
considers a particular instance of this construction that has several unique 
properties; in particular, it is a consequence of 
\cref{lem:cpwa_interpolate_new} that will permit a globally consistent 
interpolation despite considering each interpolation hypercube individually.

\begin{lemma}\label{lem:simplex_interpolation}
	Let $C_n = [0,1]^n$, and suppose that:
	\begin{equation}
		\alpha : \mathscr{F}_0(C_n) \rightarrow \mathbb{R}
	\end{equation}
	is a function defined on the corners of $C_n$. Also suppose that  
	$\mathcal{D} = \{\mathsf{s}_i\}_{i=1}^{N_\mathcal{D}}$ is a set of 
	$n$-simplexes  such that: their vertices are contained in 
	$\mathscr{F}_0(C_n)$; they together partition $C_n$\footnote{i.e. for $i 
	\neq j$, $\mathsf{s}_i \cap \mathsf{s}_j$ is contained in lower-dimensional 
	subspace of $\mathbb{R}^n$.}; and any non-trivial face of one simplex is 
	shared exactly by one and only one other simplex. $\mathcal{D}$ is thus a 
	special case of a \textbf{dissection} of $C_n$.

	Then $\mathcal{D}$ induces a uniquely defined CPWA function  
	$\Xi_\alpha^{\mathcal{D}}: C_n \rightarrow \mathbb{R}$ such that:
	\begin{enumerate}
		\item $\forall x \in \mathscr{F}_0(C).  
			\Xi_\alpha^{\mathcal{D}}\negthinspace(x) = \alpha(x)$, i.e. 
			$\Xi_\alpha^{\mathcal{D}}$ extends $\alpha$ to $C$; and

		\item for all $x \in C$,
			\begin{equation}
				 \min_{x \in \mathscr{F}_0(C)} \alpha(x) 
				\leq \Xi_\alpha^{\mathcal{D}}\negthinspace(x) \leq \max_{x \in \mathscr{F}_0(C)} \alpha(x).
			\end{equation}
	\end{enumerate}
\end{lemma}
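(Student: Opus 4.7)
The plan is to build $\Xi_\alpha^{\mathcal{D}}$ simplex-by-simplex via affine (barycentric) interpolation on each simplex, then verify that adjacent simplexes agree on their shared faces so that the resulting function is well-defined, continuous, and CPWA. Concretely, for each $\mathsf{s}_i \in \mathcal{D}$, let $v_0^i, \dots, v_n^i$ denote its $n+1$ affinely independent vertices (all of which lie in $\mathscr{F}_0(C_n)$ by hypothesis). Because these vertices are affinely independent, there is a unique affine function $A_i : \mathbb{R}^n \rightarrow \mathbb{R}$ satisfying $A_i(v_j^i) = \alpha(v_j^i)$ for $j = 0,\dots,n$; equivalently, for any $x = \sum_j \lambda_j v_j^i$ with $\sum_j \lambda_j = 1$, set $A_i(x) = \sum_j \lambda_j \alpha(v_j^i)$. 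Define $\Xi_\alpha^{\mathcal{D}}(x) \triangleq A_i(x)$ for $x \in \mathsf{s}_i$.

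The first key step is well-definedness on simplex intersections. If $x \in \mathsf{s}_i \cap \mathsf{s}_j$ for $i \neq j$, then by the dissection hypothesis $x$ lies in a lower-dimensional shared face; I would first handle the case of a shared full-dimensional $(n-1)$-face $F$, which by assumption has exactly $n$ common vertices among $\{v_k^i\}$ and $\{v_k^j\}$. Since $A_i$ and $A_j$ agree on these $n$ affinely independent points spanning the affine hull of $F$, and an affine function on that $(n-1)$-dimensional affine space is determined by its values at $n$ affinely independent points, we get $A_i \equiv A_j$ on $F$. For shared faces of lower dimension, the same argument works (with fewer common vertices, but still spanning the relevant affine hull), or alternatively the lower-dimensional case follows by repeatedly applying the full-dimensional case along a chain of simplexes that meet along increasingly lower-dimensional intersections.

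Continuity of $\Xi_\alpha^{\mathcal{D}}$ follows immediately: on each closed simplex $\Xi_\alpha^{\mathcal{D}}$ is continuous (being affine), and the preceding matching argument on shared faces shows the piecewise definition glues together continuously. It is CPWA by construction: the finite collection $\{\mathsf{s}_i\}$ covers $C_n$ and each piece is affine. Claim~(1) is immediate since each corner $x \in \mathscr{F}_0(C_n)$ is a vertex of whichever simplex(es) contain it, and $A_i(v_j^i) = \alpha(v_j^i)$ by construction. For claim~(2), on $\mathsf{s}_i$ any $x$ has barycentric coordinates $\lambda_0,\dots,\lambda_n \geq 0$ with $\sum_j \lambda_j = 1$, so
\begin{equation}
\Xi_\alpha^{\mathcal{D}}(x) = \sum_j \lambda_j \alpha(v_j^i) \in \Big[\min_j \alpha(v_j^i), \max_j \alpha(v_j^i)\Big] \subseteq \Big[\min_{y\in\mathscr{F}_0(C_n)}\alpha(y), \max_{y\in\mathscr{F}_0(C_n)}\alpha(y)\Big].
\end{equation}

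The main obstacle will be the lower-dimensional face compatibility, but I expect it to reduce cleanly to the full-dimensional case via the explicit face-sharing hypothesis on $\mathcal{D}$; the rest is bookkeeping on barycentric coordinates. Uniqueness of $\Xi_\alpha^{\mathcal{D}}$ subject to being affine on each $\mathsf{s}_i$ and extending $\alpha$ follows from the uniqueness of the affine interpolant on each simplex.
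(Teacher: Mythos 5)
Your proposal is correct and takes essentially the same route as the paper: both construct the unique affine interpolant of $\alpha$ on each simplex (the paper via an explicit interpolating hyperplane in $\mathbb{R}^{n+1}$ with unit coefficient on the last coordinate, you via barycentric coordinates — the same function), both glue by noting that two affine maps agreeing on the vertices of a shared face agree on its affine hull, and both obtain the min/max bound from the convex-combination form of the interpolant.
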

\begin{proof}
	This follows because each such simplex specifies $n+1$ points from 
	$\text{graph}(\alpha)$, so there is at least one hyperplane that 
	interpolates between those points in $\mathbb{R}^{n+1}$.

	In particular, let $V_{\mathsf{s}_i}$ be the vertices of the simplex 
	$\mathsf{s}_i \in \mathcal{D}$. By assumption, $V_{\mathsf{s}_i} \subset  
	\mathscr{F}_0(C_n)$, so $\alpha$ is defined on $V_{\mathsf{s}_i}$ for each 
	$\mathsf{s}_i$. Thus, let $G_i \triangleq \{ (v, \alpha(v)) \in 
	\mathbb{R}^{n+1} | v \in V_{\mathsf{s}_i} \} \subset \text{graph}(\alpha)$ 
	with $G_i$ enumerated as $G_i = \{g_{i,j}\}_{j=1}^{n+1}$. Then  
	$\mathsf{s}_i$ defines at least one hyperplane in $\mathbb{R}^{n+1}$ 
	specified by
	\begin{equation}
		h_i : x \in \mathbb{R}^{n+1} \mapsto
		\left( 
		\begin{smallmatrix}
		w_i \\ 1
		\end{smallmatrix}
		\right)^\text{T} (x - b_i) + b_i
		\label{eq:interp_hyperplane}
	\end{equation}
	where $b_i \triangleq \tfrac{1}{n+1}\sum_{g\in G_i} g$ and $w_i\in 
	\mathbb{R}^n$ solves the following (assuming without loss of  generality 
	that $\alpha$ has distinct values):
	\begin{equation}
		\left(
			\begin{matrix}
				g_{i,1}\negthinspace-\negthinspace b_i &
				\dots &
				g_{i,n+1}\negthinspace-\negthinspace b_i
			\end{matrix}
		\right)^\text{T}
		\left(\begin{smallmatrix} w_i \\ 1\end{smallmatrix}\right)
		=
		0.
	\end{equation}
	Consequently, each hyperplane, $h_i$, defines an affine function on 
	$\mathbb{R}^n$ that interpolates $\alpha$ between the relevant points in 
	its domain  (the vertices of its simplex). That is the graph of 
	\begin{equation}
		a_i : x \in \mathbb{R}^n \mapsto 
			w_i^\text{T} x - 
				\left(\begin{smallmatrix} w_i \\ 1\end{smallmatrix}\right) b_i + b_i
	\end{equation}
	contains $G_i \subset \text{graph}(\alpha)$. It also follows that
	\begin{equation}
		\forall x \in \mathsf{s}_i \;.\;
		\min_{v\in V_{\mathsf{s}_i}} \alpha(v)
		\leq
		a_i(x)
		\leq 
		\max_{v\in V_{\mathsf{s}_i}} \alpha(v),
	\end{equation}
	since $a_i$ linearly interpolates between the points $G_i$.

	Moreover, any two such affine functions (hyperplanes) necessarily agree on 
	any non-trivial face shared by their respective simplexes. To see this, let 
	$\hat{G}_{i,i^\prime} \triangleq G_{i} \cap  G_{i^\prime} \neq \emptyset$ 
	be the vertices of such a shared face. Then for all $x \in  
	\text{hull}(\hat{G}_{i,  i^\prime})$, we have $h_i(x) = 0 =  
	h_{i^\prime}(x)$. But from the form of \eqref{eq:interp_hyperplane}, the 
	coefficient of the $n+1^\text{th}$ coordinate of $x$ is one for both $h_i$ 
	and $h_{i^\prime}$. Hence we conclude that for all $x \in \mathbb{R}^n$ and 
	$y \in \mathbb{R}$
	\begin{equation}
		\left(\begin{smallmatrix}x \\y \end{smallmatrix}\right) \in \text{hull}(\hat{G}_{i,  i^\prime}) \implies a_i(x) = y = a_{i^\prime}(x).
	\end{equation}
	Thus $a_i \hspace{-0.2mm} = \hspace{-0.2mm} a_{i^\prime}$ on  
	$\text{hull}(V_{\mathsf{s}_i})\hspace{-0.3mm} \cap \hspace{-0.3mm} 
	\text{hull}(V_{\mathsf{s}_{i^\prime}})$, the face $\mathsf{s}_i$ and 
	$\mathsf{s}_{i^\prime}$ share.

	Hence, we can define a piecewise-affine function:
	\begin{equation}
		\Xi_\alpha^\mathcal{D} : x \in C_n \mapsto \begin{cases}
			a_i(x) & \text{if } x \in \mathsf{s}_i
		\end{cases}
	\end{equation}
	with the assurance that it is a \emph{continuous} function on $C_n$ by  the 
	argument above. Of course this CPWA trivially satisfies \emph{ii)} in the 
	statement of the lemma by construction. 
\end{proof}
Thus, to obtain interpolate between such an $\alpha$ it suffices to construct a 
dissection of $C_n$ with the properties assumed in 
\cref{lem:simplex_interpolation}. However, to prove 
\cref{lem:parameterized_cpwa}, we use a particular dissection in order to get a 
suitable CPWA extension; specifically, we need to consider how these CPWA 
extensions interact on adjacent interpolation hypercubes, whence \emph{ii)} 
below.
\begin{lemma}
\label{lem:cpwa_interpolate_new}
	Let $C = [0,1]^n$, and suppose that:
	\begin{equation}
		\alpha : \mathscr{F}_0(C_n) \rightarrow \mathbb{R}
	\end{equation}
	is a function defined on the corners of $C_n$.

	Also, define $n\cdot(n-1)/2$ hyperplanes from the equations $b_{ij}(x) = 
	0$, $i = 1, \dots, n; j<i$, where the $b_{ij}$ are given by:
	\begin{equation}
		\left.
		\begin{matrix}
			b_{ij} : \mathbb{R}^n \rightarrow \mathbb{R} \hphantom{12345678\;} \\
			b_{ij} : x \mapsto \pi_i(x) - \pi_j(x)
		\end{matrix}
		\right\}
		\;\text{ for }\;
		\begin{matrix}
			i \in \{1, \dots, n\} \\
			j < i \hphantom{\qquad1234}
		\end{matrix}
		\label{eq:braid_definition}
	\end{equation}
	so that $\mathcal{B}_n \triangleq \{ b_{ij} \}_{i\in\{1\dots n\}; j<i}$ is 
	the so-called \textbf{braid arrangement} of hyperplanes 
	\cite{StanleyIntroductionHyperplaneArrangements}. Furthermore, let 
	\begin{equation}
		\mathcal{D}_{\mathcal{B}_n} \triangleq \{ \overline{R \cap 
		\text{int}(C_n)} | R \text{ is a region in }\mathcal{B} \}
		\label{eq:dissection_definition}
	\end{equation}
	be the dissection of $C_n$ by the regions of $\mathcal{B}$.\\

	Then $\mathcal{D}_{\mathcal{B}_n}$ is a dissection of $C_n$ by  
	$n$-simplexes, each of which has vertices only in $\mathscr{F}_0(C_n)$. 
	Thus, \cref{lem:simplex_interpolation} applies, and the resultant 
	interpolation, $\Xi_\alpha^{\mathcal{B}_n}$, has further properties:
	\begin{enumerate}[{\itshape i)}]
		\item $\Xi_\alpha^{\mathcal{B}_n}$ has at most $n!$ linear regions 
			in $C_n$;

		\item for any $n-1$-dimensional face $F_i^k \in  
			\mathscr{F}_{n-1}(C_n)$ (as in \cref{def:face_corner}) and any 
			other function $\alpha^\prime : \mathscr{F}_0(C_n) \rightarrow 
			\mathbb{R}$ we have that
			\begin{multline}
				\forall x \in (F_i^k \cap \mathscr{F}_0(C)) \; . \; \alpha(x) = \alpha^\prime(|x-e_i|) \\
				\implies
				\forall x \in F_i^k \; . \; \Xi_\alpha^{\mathcal{B}_n}\negthinspace (x) = \Xi_{\alpha^\prime}^{\mathcal{B}_n}\negthinspace (|x - e_i|)
			\end{multline}

			i.e. if $\alpha$ and $\alpha^\prime$ agree on the corners of  
			\textbf{opposing} $n-1$-dimensional faces $F_i^k$ and 
			$F_i^{|k-1|}$, then the interpolations $\Xi_\alpha^{\mathcal{B}_n}$ 
			and $\Xi_{\alpha^\prime}^{\mathcal{B}_n}$ agree on the entirety of 
			those faces (that is agree under translation by $e_i$).
	\end{enumerate}


\end{lemma}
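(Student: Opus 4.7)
The plan is to identify the regions cut by $\mathcal{B}_n$ inside $C_n$ with permutations $\sigma \in S_n$ via $R_\sigma = \{x \in C_n : x_{\sigma(1)} \le x_{\sigma(2)} \le \dots \le x_{\sigma(n)}\}$. My first step would be to argue that each $R_\sigma$ is an $n$-simplex whose $n+1$ vertices form the chain $v_k^\sigma \triangleq \sum_{j=n-k+1}^{n} e_{\sigma(j)}$ for $k = 0, \dots, n$; these are all $0/1$-valued and hence in $\mathscr{F}_0(C_n)$. Next I would verify the face-sharing hypothesis required by \cref{lem:simplex_interpolation}: each $(n-1)$-face of $R_\sigma$ arises from tightening some interior inequality $x_{\sigma(k)} = x_{\sigma(k+1)}$ (and is shared with the region indexed by the permutation that swaps positions $k$ and $k{+}1$), or else from a boundary equality $x_{\sigma(1)} = 0$ or $x_{\sigma(n)} = 1$ (in which case it lies on $\text{bd}(C_n)$). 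This shows $\mathcal{D}_{\mathcal{B}_n}$ satisfies the hypotheses of \cref{lem:simplex_interpolation}, yielding $\Xi_\alpha^{\mathcal{B}_n}$. Claim \emph{(i)} then follows immediately, since $|S_n| = n!$.

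Claim \emph{(ii)} is the heart of the lemma, and my strategy is to show that the dissections induced on the opposing faces $F_i^0$ and $F_i^1$ are matched by the reflection $x \mapsto |x - e_i|$, and then invoke the linearity of $\Xi_\alpha^{\mathcal{B}_n}$ on each simplex. Concretely, I would first show that the $(n-1)$-simplexes of $\mathcal{D}_{\mathcal{B}_n}$ lying on $F_i^0$ are exactly those of the form $\{x \in R_\sigma : x_{\sigma(1)} = 0\}$ for permutations with $\sigma(1) = i$ (and dually on $F_i^1$ with $\sigma(n) = i$). Then I would exhibit the bijection $\sigma \mapsto \sigma'$ defined by $\sigma'(k) = \sigma(k{+}1)$ for $k < n$ and $\sigma'(n) = i$, and verify by direct vertex computation that the vertices of the corresponding $(n-1)$-simplex on $F_i^1$ are the $e_i$-translates of those on $F_i^0$.

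Finally, since the restriction of an affine function on an $n$-simplex to any of its $(n-1)$-faces is itself the affine interpolation of its values at the face's vertices, the hypothesis $\alpha(x) = \alpha'(|x - e_i|)$ on corners of $F_i^k$ forces matching interpolation data on corresponding $(n-1)$-simplexes across the reflection; $\Xi_\alpha^{\mathcal{B}_n}$ and $\Xi_{\alpha'}^{\mathcal{B}_n} \circ (x \mapsto |x - e_i|)$ therefore agree piece-by-piece on $F_i^k$, and hence everywhere on it. I expect the main obstacle to be the combinatorial bookkeeping that establishes the $e_i$-translation bijection between the vertex-chains $v_k^\sigma$ on $F_i^0$ and $v_k^{\sigma'}$ on $F_i^1$; once this is pinned down, the affine-interpolation step is routine.
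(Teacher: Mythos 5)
Your proposal is correct, and it reaches the conclusion by a genuinely different route than the paper. The paper proves that the regions of $\mathcal{D}_{\mathcal{B}_n}$ are simplexes with cube-corner vertices by \emph{induction on dimension}: each region is exhibited as a cone from the origin over a base lying in a single face $F_{i_R}^1$, that base being isomorphic (via the projection $\pi_{\neg i}$ and inclusion $\mathscr{I}_i^k$ of \cref{lem:braid_face_intersections}) to a region of $\mathcal{D}_{\mathcal{B}_{n-1}}$; property \emph{ii)} is then deduced from the same projection/inclusion machinery, which shows that the trace of the dissection on $F_i^1$ and on $F_i^0$ are both isomorphic to $\mathcal{D}_{\mathcal{B}_{n-1}}$ and hence translates of one another. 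You instead give a fully explicit, non-inductive description: the region of the permutation $\sigma$ is the order simplex with vertex chain $v_k^\sigma = \sum_{j=n-k+1}^{n} e_{\sigma(j)}$, its facets are read off from tightening one of the $n+1$ defining inequalities, and the matching of opposite faces is the explicit bijection $\sigma \mapsto \sigma'$ (cyclic shift moving $i$ from first to last position), under which one checks directly that $v_k^{\sigma'} = v_{k-1}^{\sigma} + e_i$. Your computation is right, and the final step --- that an affine interpolant restricted to a facet is determined by its vertex values, so matching data on translated facets forces matching interpolants --- is the same closing argument the paper uses. What your approach buys is concreteness and a self-contained verification of the face-sharing hypothesis of \cref{lem:simplex_interpolation} (which the paper leaves somewhat implicit); what the paper's induction buys is a reusable isomorphism between $\overline{\mathsf{F}_R}$ and lower-dimensional braid dissections that it also exploits in \cref{lem:braid_face_intersections}. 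The one point to be careful about when writing yours up in full is the degenerate intersections: $R_\sigma \cap F_i^0$ is $(n-1)$-dimensional only when $\sigma(1)=i$ (and lower-dimensional otherwise), so the continuity check on a cube face genuinely reduces to the $(n-1)!$ facets you identify --- but you have stated this correctly.
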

\noindent We defer the proof of \cref{lem:cpwa_interpolate_new} until 
\cref{sub:proof_of_cpwa_interpolate_new}. However, for context, the dissection 
obtained from the braid arrangement is shown below in 
\cref{fig:braid_dissection_3d} for $n=3$.
\begin{figure}[h]
	\begin{tabular}{cc}
		\begin{minipage}[t]{0.15\textwidth}
			\vspace{-100pt}
			\includegraphics[width=\textwidth, trim={0 0 0 0}, 
			clip]{./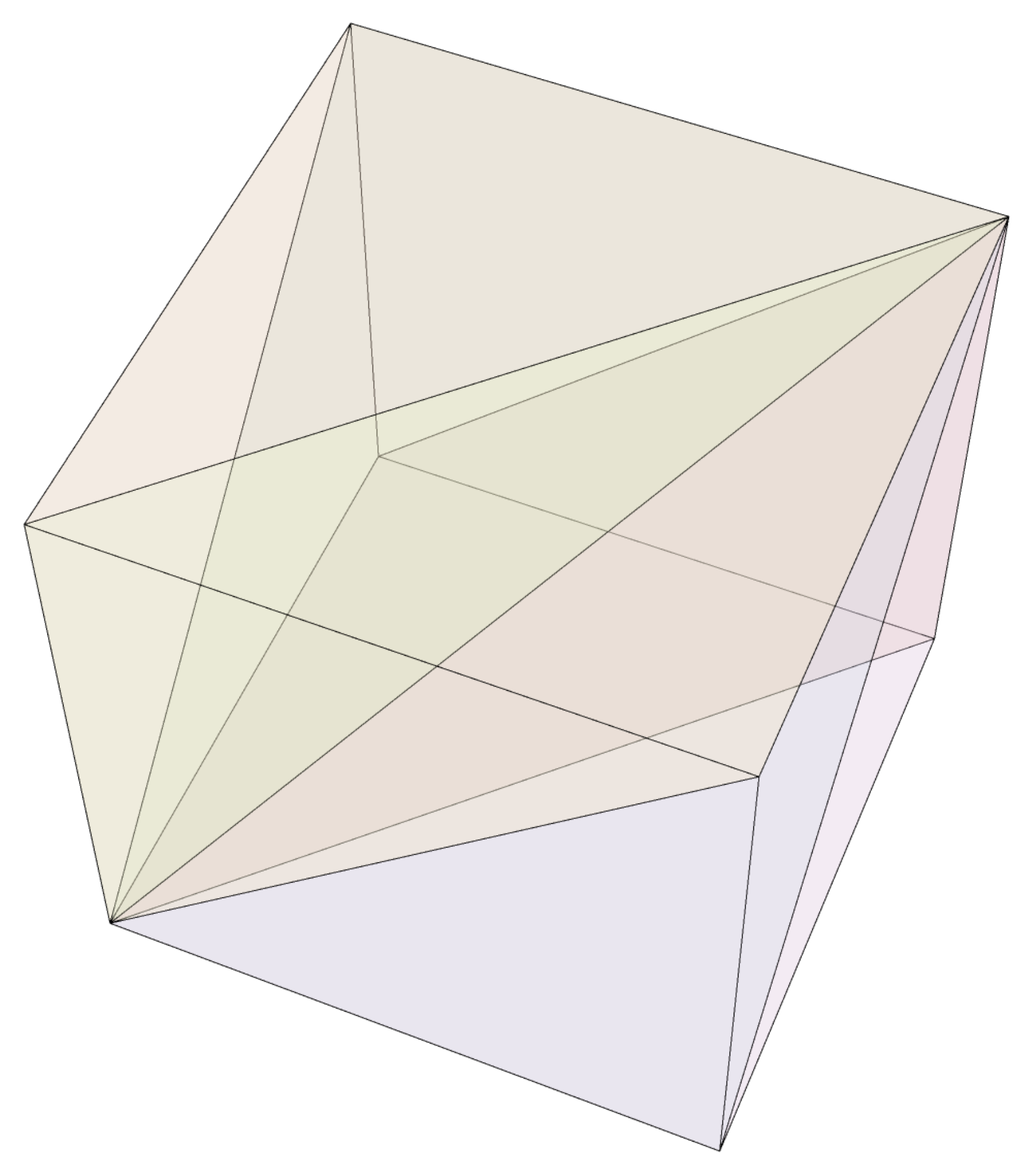} 
		\end{minipage}
		& %
		\includegraphics[width=0.3\textwidth, trim={0 100pt 0 100pt}, clip]{./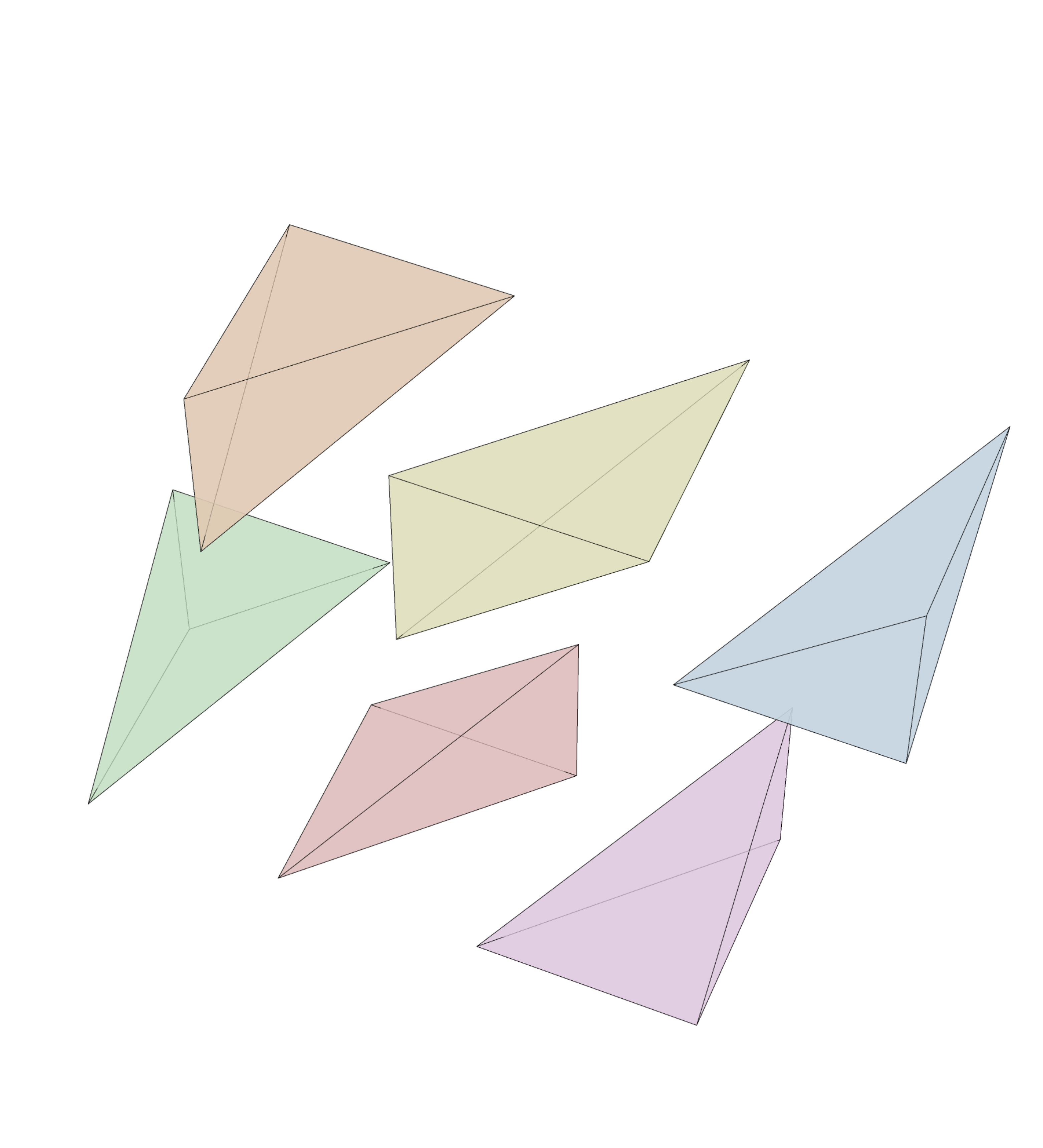}
	\end{tabular} \vspace{-1.5mm}
	\caption{Dissection of $C_3$ by the braid hyperplane arrangement, $\mathcal{B}_3$, to yield $\mathcal{D}_{\mathcal{B}_3}$; an exploded view depicts the individual simplexes more clearly. See \cref{lem:cpwa_interpolate_new}.} 
	\label{fig:braid_dissection_3d} 
\end{figure}

\noindent \cref{fig:braid_dissection_3d} also illustrates a property of 
$\mathcal{D}_{\mathcal{B}_n}$ that is crucial to establish \emph{ii)} in 
\cref{lem:cpwa_interpolate_new}: note that each face is split into two 
triangles, and each of those triangles is a single-coordinate translation of a 
triangle on the opposite face. This fact is considered in the proof of 
\cref{lem:cpwa_interpolate_new} in \cref{sub:proof_of_cpwa_interpolate_new}.

\begin{remark}
	Property ii) of \cref{lem:cpwa_interpolate_new} can also be interpreted as 
	a consequence of the following observation: the braid dissection of a 
	hypercube results in ``symmetric'' dissections on opposite  
	$n-1$-dimensional faces (c.f. \cref{fig:braid_dissection_3d}). Thus, 
	dissecting each individual interpolation hypercube this way yields a 
	collection of simplexes that ``tile'' their union; i.e. a full-dimensional 
	face of any simplex is shared  \emph{exactly} by one and only one other 
	simplex -- hence the desired continuity.
\end{remark}

\begin{remark}
	The braid arrangement dissection is a particular dissection by Sch\"afli 
	orthoschema. \cite{CoxeterRegularPolytopes1973}
\end{remark}

Now 
we can finally state the proof of \cref{lem:parameterized_cpwa}.
\begin{proof}
	\emph{(\cref{lem:parameterized_cpwa})} The proof will proceed as suggested 
	above. First, we will define the function $\Upsilon_\omega$ on the elements 
	of $X_\eta$ using the parameter vector $\omega$. Then we will extend 
	$\Upsilon_\omega$ to the corners of all the interpolation hypercubes 
	associated with $X_\eta$; i.e. we will define $\Upsilon_\omega$ on the 
	extra corners of the aforementioned interpolation hypercubes, again in 
	terms of $\omega$ only (see \cref{def:extra_corner}). Then we will employ 
	\cref{lem:simplex_interpolation} and \cref{lem:cpwa_interpolate_new} to 
	define a CPWA within each interpolation hypercube that interpolates between 
	the values on its corners -- those values being derived exclusively from 
	$\omega$ by the construction above.  Finally, 
	\cref{lem:cpwa_interpolate_new} can be used to show that this interpolation 
	scheme is consistent from one interpolation hypercube to the next, thus 
	making it a viable CPWA defined over the entirety of $X$. Moreover, it will 
	also lead to the claimed bound on the number of linear regions needed.

	To this end, first define $\Upsilon_\omega$ on $X_\eta$ as follows.  For 
	each $\mathbf{x}_i \in X_\eta =  \{\mathbf{x}_i\}_{i=0}^{\scriptscriptstyle 
	|X_\eta|-1}$, define: $\Upsilon_\omega(\mathbf{x}_i) \triangleq 
	\pi_i(\omega)$. Now, we have to define $\Upsilon_\omega$ to the extra 
	corners associated with the interpolation hypercubes of $X_\eta$, i.e.  
	$\Delta_{X_\eta}$ (see \cref{def:extra_corner}). In particular, we extend  
	$\Upsilon_\omega$ to $\Delta_{X_\eta}$ using the following definition:
	\begin{equation}\label{eq:upsilon_extra_corners}
		\forall x \in \Delta_{X_\eta} \; . \; \Upsilon_\omega(x) \triangleq 
		\min_{\mathbf{x} \in B(x; \eta)\cap X_\eta} \Upsilon_\omega(\mathbf{x}).
	\end{equation}
	Note that \eqref{eq:upsilon_extra_corners} is well defined, since $x\in  
	\Delta_{X_\eta}$ is the corner of an interpolation hypercube -- which has 
	edge length $\eta$ -- and each interpolation hypercube has at least one 
	corner that belongs to $X_\eta$; see \cref{fig:extra_corner_definition}. 
	Moreover, we have just defined $\Upsilon_\omega$ for any such $\mathbf{x} 
	\in X_\eta$ in terms of $\omega$ only, so this scheme for defining 
	$\Upsilon_\omega$ on $\Delta_{X_\eta}$ likewise depends only on $\omega$.
	\begin{figure}[h]
		\centering
		\includegraphics[width=0.35\textwidth, trim={0 0 0 0},clip]{./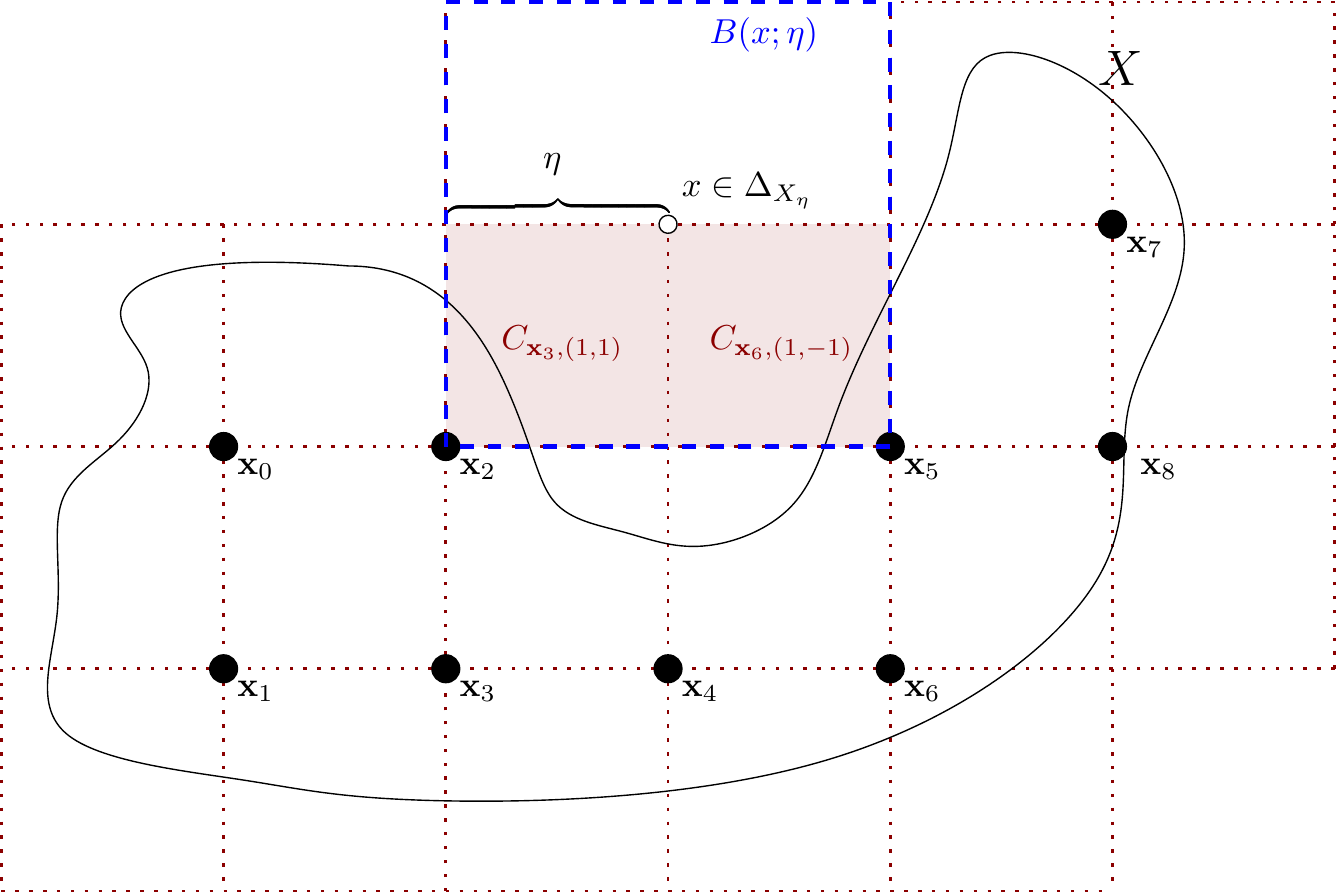} %
		\caption{Interpolation Hypercubes. Solid dots denote members of $X_\eta$; red dotted lines denote interpolation hypercubes; and $x\in\Delta_{X_\eta}$ is an extra corner (see \cref{def:extra_corner}).} 
		\label{fig:extra_corner_definition} 
	\end{figure}

	The previous construction ensures that $\Upsilon_\omega$ is defined on the 
	corners of each interpolation hypercube associated with $X_\eta$, and thus 
	\cref{lem:simplex_interpolation} and \cref{lem:cpwa_interpolate_new} may be 
	applied to obtain a number of CPWAs, each of whose domains is a single 
	interpolation hypercube. Crucially, however, if these CPWAs are continuous 
	from one interpolation hypercube to any adjacent one, then they may 
	together be considered to define a CPWA whose domain is the union of all 
	such interpolation hypercubes (simply switching between them as  
	appropriate). The result is a CPWA that contains $X$ in its domain.  Then, 
	since each individual CPWA requires at most $n!$ linear regions (by 
	\cref{lem:cpwa_interpolate_new}), this construction leads to the bound in 
	claim \emph{ii)} of the Lemma simply by using 
	\cref{prop:num_interpolation_hypercubes}.

	Thus, it remains to show that the CPWAs defined on each interpolation 
	hypercube are consistent (continuous) from one to the next. However, this 
	follows more or less directly from property \emph{ii)} of 
	\cref{lem:cpwa_interpolate_new}. To see this, let $C_{\mathbf{x}\rho}$ be 
	an interpolation hypercube, and let $C_{\mathbf{x}^\prime\rho^\prime}$ be 
	any other interpolation hypercube that shares an $n-1$-dimensional face, 
	$F_i^k$, with $C_{\mathbf{x}\rho}$. In particular, both 
	$\Upsilon_\omega|_{C_{\mathbf{x}\rho}}$ and 
	$\Upsilon_\omega|_{C_{\mathbf{x}^\prime\rho^\prime}}$ can be regarded as 
	inducing separate functions on the unit hypercube $C_n$ (by translation), 
	and because $\Upsilon_\omega$ is defined on the corners of all such 
	interpolation hypercubes, these functions satisfy the condition of property 
	\emph{ii)} of \cref{lem:cpwa_interpolate_new}. Thus, their interpolations 
	are identical on opposite faces, which yields the claimed continuity under 
	translation back to domains $C_{\mathbf{x}\rho}$ and 
	$C_{\mathbf{x}^\prime\rho^\prime}$.
\end{proof}



\subsection{Proof of \cref*{thm:main_theorem}}
\label{sec:proof_of_main_theorem}
Using the consequences of Step 1 (see \cref{sec:thm1_part1}) and Step 2 (see 
\cref{sec:thm1_part2}), we can now prove \cref{thm:main_theorem}.
\begin{proof}
	By \cref{cor:interpolation_corollary}, there is an 
	$\Upsilon_{\omega_\Psi}$ that:
	\begin{itemize}
		\item approximates any controller $\Psi$ to the desired accuracy  
			via $\lVert \cdot \rVert_X$ merely by changing its parameter  
			values;

		\item requires no more than the claimed number of linear regions on 
			$X$, given this accuracy, independent of $\Psi$; and

		\item has a Lipschitz constant at most ${3} \cdot K_\text{cont}$ on  
			$X$, also independent of $\Psi$.
	\end{itemize}
	By \cref{cor:parallel_tll} (a consequence of \cite[Theorem 
	2]{FerlezAReNAssuredReLU2020}) we can find a TLL NN architecture to 
	implement the CPWA $\Upsilon_{\omega_\Psi}$. Moreover, if such a TLL is 
	implemented using only linear functions whose Lipschitz constants are 
	bounded by ${3}\cdot K_\text{cont}$, as is the case to  implement 
	$\Upsilon_{\omega_\Psi}$ exactly on $X$, then that TLL will likewise have a 
	\emph{global} Lipschitz constant of at most ${3} \cdot  K_\text{cont}$. 
	Thus, \cref{lem:final_spec} applies, and that completes the proof.
\end{proof}


 %


\section{Extension: ReLU Architectures for System Identification} 
\label{sec:relu_architectures_for_system_identification}

The methodology of the previous section can be summarized in the following way: 
determine the amount of (instantaneous) controller error that can be tolerated 
relative to a robust controller (Step 1), and design an architecture that can 
achieve that error for an \emph{arbitrary} robust controller (Step 2). In  
particular, though, the methodology of Step 1 -- chiefly the Gr\"onwall 
inequality -- only recognizes instantaneous controller error as an error in the 
(controlled) vector field. As a result, essentially the same calculations can 
be recycled when the source of error comes not from a different controller, but 
an \emph{alternate, erroneous vector field} under the same controller. 

Indeed, this is almost a complete program to repurpose the results of 
\cref{sec:a_relu_architecture_for_nonlinear_systems} to obtain assured 
architectures for system identification. In this context, we treat the 
instantaneous error as coming from the fact that the \emph{identified} vector 
field -- a NN in this case -- can't exactly replicate the vector field on which 
it is trained. Nevertheless, from Step 2, we can find a sufficiently large NN 
system architecture that can be trained to a proscribed uniform approximation 
accuracy (relative to the unknown system). This architecture   leads to 
closed-loop assurances under system identification, since a Gr\"onwall 
calculation -- like \cref{prop:delta_state_bound} of Step 1 -- connects 
training error to a controller robustness margin (and conversely).

In fact, this proposed architecture will lead to something like the following 
assurance: assuming the architecture has been trained adequately, then an 
appropriately robust controller designed for the \emph{identified system} will 
also control the \emph{original, unknown system}. In this way, the proposed 
system identification architecture comes with both a (uniform) training 
accuracy requirement and a robustness requirement for controllers designed to 
meet a specification. If these requirements can be met by the identified  
architecture and a controller designed for said \emph{identified} architecture, 
then the very same controller is assured to control the original, unknown 
system.

This section begins with 
\cref{sub:problem_formulation_nn_architectures_for_systemid}, in which we 
formulate an assured NN architecture design problem for system identification. 
Then, in \cref{sub:relu_architectures_for_nonlinear_system_identification}, we 
present \cref{thm:main_theorem_systemid}, which provides just such an assured 
NN architecture.

\subsection{Problem Formulation: NN Architectures for System Identification} 
\label{sub:problem_formulation_nn_architectures_for_systemid}
As suggested, we will formulate the problem of designing assured architectures 
for system identification in roughly the \emph{reverse} way as the controller 
architecture problem, \cref{prob:main_problem}. That is we start from a given 
accuracy to which the identified dynamics can be learned relative to the 
original, unknown dynamics; this accuracy will be denoted by the suggestive 
symbol $\mu > 0$. Then the problem will be to find an identified architecture 
\emph{and} robustness requirements $\delta$ and $\tau$ such that: if the 
identified architecture is learned to a uniform error of $\mu$, then any 
$\delta$-robust controller designed on the \emph{identified} system will 
satisfy the same specification on the original system (non-robustly).  Hence, 
we have the following problem of designing assured ReLU architectures for 
system identification.

\begin{problem}[ReLU Architectures for System Identification]
\label{prob:system_id_problem}
	Let $\mu \negthinspace > \negthinspace 0$ and $K_\text{cont} \negthinspace 
	> \negthinspace 0$ be given. Let $\Sigma$ be a feedback controllable 
	Lipschitz control system, and let $S_\text{spec} \negthinspace = 
	\negthinspace (X_\text{spec},U_\text{spec}, 
	\ltsxition{S_\text{spec}\negthickspace\negthickspace}{~})$ be a transition 
	system specification on $\Sigma$.

	Then the problem is to find $\tau = \tau(K_x,K_u,K_\text{cont},\mu)$ and 
	$\delta =  \delta(K_x,K_u,K_\text{cont},\mu)$ together with a ReLU 
	architecture, $\text{Arch}(\Theta)$, which corresponds to an 
	\textbf{identified} system
	\begin{equation}
	\label{eq:identified_system}
		\Sigma^{\Theta} = ( X, U, \mathcal{U}, \nn_{\Theta}),
	\end{equation}
	where $X$, $U$ and $\mathcal{U}$ are as in $\Sigma$, and $\nn_\theta : X  
	\times U \rightarrow X$. The identified system, $\Sigma^{\Theta}$, must 
	further have the properties that:
	\begin{enumerate}[{\itshape i)}]
		\item there exists values of $\Theta$ s.t.
			\begin{equation}\label{eq:dynamics_approx}
				\lVert f - \nn_{\Theta} \rVert \leq \mu
			\end{equation}
			i.e. the architecture is able to learn the vector field $f$ to the 
			proscribed accuracy; and

		\item for $\Theta$ as in \eqref{eq:dynamics_approx}, if $\Psi: X 
			\rightarrow U$ is a $\delta$-$\tau$ invariant Lipschitz-continuous 
			controller with $K_\Psi \le K_\text{cont}$ s.t.
			\begin{equation}
				S_{\tau}(\Sigma^\Theta_\Psi) \preceq_{\mathcal{AD}_\delta} S_\text{spec},
			\end{equation}
			then
			\begin{equation}
				S_{\tau}(\Sigma_{\Psi})
					\preceq_{\mathcal{AD}_0}
				S_\text{spec}
			\end{equation}
			i.e. if $\Psi$ controls the \textbf{identified} system to meet the 
			specification robustly, then $\Psi$ also controls the original 
			system, $S(\Sigma_\Psi)$, to the same specification (albeit 
			non-robustly).
	\end{enumerate}
\end{problem}


\subsection{ReLU Architectures for Nonlinear-System Identification} 
\label{sub:relu_architectures_for_nonlinear_system_identification}

Now we have the following theorem, which solves \cref{prob:system_id_problem} 
with the claimed guarantees for system 
identification.
\begin{theorem}
\label{thm:main_theorem_systemid}
Let $\mu > 0$ and $K_\text{cont} > 0$ be given, and let $\Sigma$ and  
$S_\text{spec}$ be as in \cref{prob:system_id_problem}. Furthermore, let  
$\delta,\tau > 0$ be s.t.
\begin{equation}
\label{eq:sys_id_delta_tau}
	K_u \cdot \mu \cdot \tau \cdot
		e^{(K_x + K_u  K_\text{cont}) \tau } < \delta.
\end{equation}

If $\eta \leq \tfrac{\mu}{{3} \cdot K_\text{cont}}$ is such that there exists 
an $\eta$-grid of $X \times U$, then an $n$-fold TLL NN architecture 
$\text{Arch}(\tllThetaPar_{^{n+m}_n\negthinspace N})$ of size: 
	\begin{equation}\label{eq:sys_id_size}
		N \geq 
		(n + m)!
		\cdot
		\left\lceil\frac{\text{ext}(X \times U)}{\eta} + 2\right\rceil^{n+m}
	\end{equation}
defines an \textbf{identified system} as in \eqref{eq:identified_system} of 
\cref{prob:system_id_problem}, which has the following properties:

\begin{enumerate}[{\itshape i)}]
	\item there exist values for $\tllThetaPar_{^{n+m}_n\negthinspace N}$ 
		such that
		\begin{equation}
		\lVert f - \nn  {\scriptstyle 
		\tllThetaPar_{^{n+m}_n\negthinspace N}} \rVert_{X\times U} < \mu; 
		\text{ and}
		\end{equation}

	\item for this $\tllThetaPar_{^{n+m}_n\negthinspace N}$  and $\Psi : X 
		\rightarrow U$ $\delta$-$\tau$ positive invariant w.r.t to  $X$ and 
		Lipschitz continuous with $K_\Psi \leq K_\text{cont}$, 
			\begin{equation}
				S_{\tau}(\Sigma_{\Psi}\negthinspace\negthinspace\negthinspace{{\scriptstyle \tllThetaPar_{\scriptscriptstyle ^{n+m}_n\negthinspace N}} \atop \vphantom{{1 \atop 2}}} \negthinspace)
					\preceq_{\mathcal{AD}_\delta} \negthinspace
				S_\text{spec}
				\negthinspace\negthinspace \implies \negthinspace\negthinspace
				S_{\tau}(\Sigma_{\Psi} \negthinspace)
					\preceq_{\mathcal{AD}_0} \negthinspace
				S_\text{spec}.
			\end{equation}
\end{enumerate}
\vspace{-5pt}
 For convenience, denote a trajectory of
	$\Sigma_{\Psi}\negthinspace\negthinspace\negthinspace{{\scriptstyle 
	\tllThetaPar_{\scriptscriptstyle ^{n+m}_n\negthinspace N}} \atop 
	\vphantom{{1 \atop 2}}}$
by $\zeta_{x\Psi}^{\nn}$, and\\[-5pt]
the corresponding vector field by $f^{\nn}$.
\end{theorem}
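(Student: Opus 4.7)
The plan is to mirror the proof of Theorem \ref{thm:main_theorem} closely, reusing Corollary \ref{cor:interpolation_corollary} as the architectural primitive and a Gronwall-type argument in the spirit of Proposition \ref{prop:delta_state_bound}, but with the roles swapped: the source of instantaneous error is now the dynamics approximation $f \mapsto \nn_\Theta$, not the controller approximation. First, I would establish property \emph{i)}. Regarding $f : X \times U \to \mathbb{R}^n$ as Lipschitz on the product space with constants inherited from $K_x$ and $K_u$, I would apply Corollary \ref{cor:interpolation_corollary} componentwise in the $n$ output coordinates, using the assumed $\eta$-grid of $X \times U$. This produces a parameterized CPWA interpolant $\Upsilon_{\omega_f} : X \times U \to \mathbb{R}^n$ with $\lVert \Upsilon_{\omega_f} - f \rVert_{X \times U} \leq \mu$, each of whose $n$ component CPWAs has at most $(n+m)! \cdot \lceil \text{ext}(X \times U)/\eta + 2 \rceil^{n+m}$ linear regions. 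Invoking Corollary \ref{cor:parallel_tll} then realizes $\Upsilon_{\omega_f}$ exactly as an $n$-fold parallel TLL $\tllThetaPar_{^{n+m}_n N}$ of the size stated in \eqref{eq:sys_id_size}, yielding property \emph{i)}.

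Next I would establish a trajectory-divergence bound analogous to Proposition \ref{prop:delta_state_bound}. For fixed $x \in X$ and identified vector field $f^{\nn} \triangleq \nn_{\tllThetaPar_{^{n+m}_n N}}$, compare $\zeta_{x\Psi}$ (true system) with $\zeta_{x\Psi}^{\nn}$ (identified system) under the same controller $\Psi$. Inserting $f(\zeta_{x\Psi}^{\nn}(\sigma), \Psi(\zeta_{x\Psi}^{\nn}(\sigma)))$ into the integrand and splitting by the triangle inequality, then using the $(K_x, K_u)$-Lipschitz property of $f$, the $K_\Psi$-Lipschitz property of $\Psi$, and the uniform bound $\lVert f - f^{\nn} \rVert_{X \times U} \leq \mu$, one obtains
\begin{equation}
\lVert \zeta_{x\Psi}(t) - \zeta_{x\Psi}^{\nn}(t) \rVert \leq \int_0^t \bigl[ (K_x + K_u K_\Psi) \lVert \zeta_{x\Psi}(\sigma) - \zeta_{x\Psi}^{\nn}(\sigma) \rVert + \mu \bigr] d\sigma.
\end{equation}
The Gronwall Inequality then yields $\lVert \zeta_{x\Psi}(\tau) - \zeta_{x\Psi}^{\nn}(\tau) \rVert \leq \mu \tau e^{(K_x + K_u K_\Psi)\tau} < \delta$ by hypothesis \eqref{eq:sys_id_delta_tau} and $K_\Psi \leq K_\text{cont}$ (with the extra $K_u$ factor in the hypothesis providing slack).

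Finally, I would assemble the simulation chain just as in the proof of Lemma \ref{lem:final_spec}. Using the diagonal relation $R = \{(x,x) : x \in X\}$, any transition $x \xition{\Psi}{~} \zeta_{x\Psi}(\tau)$ in $S_\tau(\Sigma_\Psi)$ is matched by the identified-system transition $x \xition{\Psi}{~} \zeta_{x\Psi}^{\nn}(\tau)$ plus the $\delta$-closeness from the previous step, yielding a transition $x \xition{\Psi}{~} \zeta_{x\Psi}(\tau)$ in the $\delta$-perturbed identified system $\mathfrak{S}_\tau(\Sigma_\Psi^{\tllThetaPar_{^{n+m}_n N}})^\delta$; hence $S_\tau(\Sigma_\Psi) \preceq_{\mathcal{AD}_0} \mathfrak{S}_\tau(\Sigma_\Psi^{\tllThetaPar_{^{n+m}_n N}})^\delta$. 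By the Remark following Definition \ref{def:delta_perturbation_simulation}, the hypothesis $S_\tau(\Sigma_\Psi^{\tllThetaPar_{^{n+m}_n N}}) \preceq_{\mathcal{AD}_\delta} S_\text{spec}$ is equivalent to $\mathfrak{S}_\tau(\Sigma_\Psi^{\tllThetaPar_{^{n+m}_n N}})^\delta \preceq_{\mathcal{AD}_0} S_\text{spec}$, and transitivity of ordinary simulation closes the chain. The one technical subtlety, and the main obstacle, exactly as in Lemma \ref{lem:final_spec}, is verifying $\zeta_{x\Psi}(\tau) \in X$ so that the matched transition is a legitimate transition of $S_\tau(\Sigma_\Psi)$; this is overcome using the $\delta$-$\tau$ positive invariance of $\Psi$ on the identified dynamics, which forces $\zeta_{x\Psi}^{\nn}(\tau) \in X \setminus \text{edge}_\delta(X)$ and hence $\zeta_{x\Psi}(\tau) \in X$ by $\delta$-closeness.
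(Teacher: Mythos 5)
Your proposal is correct and follows essentially the same route as the paper's proof: property \emph{i)} via \cref{cor:interpolation_corollary} and \cref{cor:parallel_tll} applied to $f$ on an $\eta$-grid of $X\times U$, a Gr\"onwall bound on $\lVert \zeta_{x\Psi}(\tau)-\zeta_{x\Psi}^{\nn}(\tau)\rVert$ with the error now entering as a vector-field discrepancy, and then the simulation-chain assembly of \cref{lem:final_spec}. You actually spell out the final simulation argument and the positive-invariance subtlety more explicitly than the paper does (the paper defers to ``arguments already discussed''), and you correctly notice that the derived Gr\"onwall bound lacks the $K_u$ prefactor appearing in \eqref{eq:sys_id_delta_tau} --- just note that this prefactor is ``slack'' only when $K_u\geq 1$, a caveat the paper itself glosses over.
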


\begin{proof}
	From the results in \cref{sec:thm1_part2}, particularly 
	\cref{cor:interpolation_corollary}, we know that a parallel TLL 
	architecture of size determined by \eqref{eq:sys_id_size} can indeed 
	approximate the unknown dynamics as required by conclusion \emph{i)}  
	above. Thus, it remains to show that if this NN is trained to that accuracy 
	with respect to (unknown) $f$, then conclusion \emph{ii)} holds for a 
	controller, $\Psi$.

	To this end, let $\Psi$ be such a controller that is $\delta$-$\tau$ 
	positive invariant, and where $\delta$ and $\tau$ are selected according to 
	the assumptions of the Theorem. Moreover, suppose that $f^{\nn}$ has been 
	trained so that $\lVert f - f^{\nn} \rVert \leq \mu$ for $\mu$ as specified.

	Thus, we can proceed as in the proof of \cref{lem:final_spec}. In 
	particular, we seek to bound the difference
	\begin{equation}
	\label{eq:sys_id_sim_requirement}
		\lVert \zeta_{x\Psi}(\tau) - \zeta_{x\Psi}^{\nn}(\tau) \rVert < \delta
	\end{equation}
	so that we can construct the appropriate simulation relations. To prove  
	this bound, observe that:
	\begin{align}
		&\lVert \zeta_{x\Psi}(t) - \zeta_{x\Psi}^{\nn}(t) \rVert \notag \\
		&\negthinspace\negthinspace\leq\negthinspace\negthinspace\negthinspace
		\int_0^t 
			\lVert
				f(\zeta_{x\Psi}(\sigma),\Psi(\zeta_{x\Psi}(\sigma)))
				-
				f(\zeta_{x\Psi}^{\nn}(\sigma),\Psi(\zeta_{x\Psi}^{\nn}(\sigma)))
			\rVert \notag \\
		& \negthinspace+
			\lVert
				f(\zeta_{x\Psi}^{\nn}(\sigma),\Psi(\zeta_{x\Psi}^{\nn}(\sigma)))
				-
				f^{\nn}(\zeta_{x\Psi}^{\nn}(\sigma),\Psi(\zeta_{x\Psi}^{\nn}(\sigma)))
			\rVert
			d\sigma \notag \\
		&\negthinspace\negthinspace\leq\negthinspace\negthinspace\negthinspace
		\negthinspace
		\int_0^t \hspace{-1.6mm}
			(\hspace{-0.3mm} K_x \hspace{-0.5mm} + \hspace{-0.5mm} K_u K_\text{cont} \hspace{-0.3mm}) \hspace{-1.0mm} \cdot \hspace{-1.0mm}
			\lVert
				\zeta_{x\Psi}\hspace{-0.3mm}(\hspace{-0.3mm}\sigma\hspace{-0.3mm})
				\hspace{-0.4mm}
				-
				\hspace{-0.35mm}
				\zeta_{x\Psi}^{\nn}\hspace{-0.4mm}(\hspace{-0.3mm}\sigma\hspace{-0.3mm})
				\hspace{-0.3mm}
			\rVert \hspace{-1.0mm} + \hspace{-1.0mm }
			\lVert
				\hspace{-0.2mm}
				f
				\hspace{-0.65mm}
				-
				\hspace{-0.65mm}
				f^{\nn}
				\hspace{-0.35mm}
			\rVert_{\hspace{-0.4mm}\scriptscriptstyle X\hspace{-0.35mm}\times\hspace{-0.3mm} U} \hspace{-0.2mm}
			d\sigma
			\label{eq:sys_id_gronwall_bound}
	\end{align}
	where the last inequality follows from the fact that $\Psi$ was assumed to 
	be \emph{designed} such that it is $\delta$-$\tau$ positive invariant with 
	respect to $X$ (and by construction $\Psi$ has codomain $U$). Of course the 
	Gr\"onwall inequality applies to \eqref{eq:sys_id_gronwall_bound} when we  
	use the assumed training error bound $\lVert f - f^{\nn} \rVert \leq  \mu$. 
	Since $\delta$ and $\tau$ were chosen according to 
	\eqref{eq:sys_id_delta_tau} -- which has exactly the form of the  
	Gr\"onwall bound obtained in \eqref{eq:sys_id_gronwall_bound} -- we 
	conclude that \eqref{eq:sys_id_sim_requirement} holds. Hence, the 
	conclusion of the Theorem follows by way of arguments already discussed in 
	\cref{sec:a_relu_architecture_for_nonlinear_systems}.
\end{proof}



 %


\section{Conclusion} 
\label{sec:conclusion}

In this paper, we considered the problem of automatically designing assured 
ReLU NN architectures for both control and system identification of nonlinear 
control systems. To the best of our knowledge, our approach uniquely leverages 
the flexible TLL architecture and a generic specification paradigm via ADS in 
order to do \textbf{\emph{automatic, assured NN architecture design for 
incompletely known nonlinear systems}}. We thus expect this work to serve as a 
starting point for further work on the problem of automatic NN architecture 
design, particularly with closed-loop dynamical specifications in mind. In 
conclusion, we offer two observations that present opportunities for 
generalization.

First, we note that in this paper, we have constructed CPWA functions in terms 
of their linear regions (see Step 2 of the proof of \cref{thm:main_theorem}:  
\cref{sec:thm1_part2}). This construction by linear regions is ideally suited 
for conversion to TLL NNs using \cref{thm:tll_architecture}. However, in this 
paper, we have obtained these linear regions by a specific construction: first 
by setting down a regular grid of interpolation hypercubes and then subdividing 
those hypercubes into simplexes using the braid hyperplane arrangement. This 
scheme works because grid of hypercubes controls the size of the resulting 
simplexes, and the braid arrangement ensures that any full-dimensional face of 
one simplex is exactly shared by one and only one other simplex -- i.e. it 
constitutes a ``tiling'', which leads to a continuous piecewise function. 
However, this specific construction is not needed to employ 
\cref{thm:tll_architecture} to obtain a TLL network: any other ``tiling'' by 
sufficiently small simplexes could be converted to a TLL approximate controller 
just as easily. Thus, a direction of future work would be to find a more 
efficient  (or even minimal) ``tiling'' by simplexes, which would lead to 
smaller architectures.

Second, we make a similar observation but in the other direction. In 
particular, we have chosen to convert the above-described approximate CPWA 
controller specifically to a TLL NN, but a TLL may not be the most 
\emph{neuron-efficient} NN representation. In other words, there may be other 
architectures that can represent the same, approximate CPWA controller using 
considerably fewer neurons. Thus, another compelling direction of future 
research would be to seek more compact assured architectures by exploring 
classes of NN architectures other than TLL NNs.

\section{Acknowledgments} 
\label{sec:acknowledgments}
The authors wish to thank Xiaowu Sun (University of California, Irvine) for 
noting by example that a function defined on the corners of a hypercube can be  
extended by a CPWA that uses fewer regions than in Lemma 4 of 
\cite{FerlezTwoLevelLatticeNeural2020a}.


\vspace{-2mm}

\bibliographystyle{ieeetr} %
\bibliography{mybib} %





\appendix


\section{Appendix} 
\label{sec:appendix}



\subsection{Proof of \cref{lem:cpwa_interpolate_new}} 
\label{sub:proof_of_cpwa_interpolate_new}
To prove \cref{lem:cpwa_interpolate_new}, we need a few preliminary results.
\begin{proposition}[Coordinate Orderings and Braid Arrangements Regions \cite{StanleyIntroductionHyperplaneArrangements}]
	\label{prop:braid_regions}
	Let $\mathcal{B}_n$ be the braid arrangement defined in 
	\cref{lem:cpwa_interpolate_new}. Then for each (open) region $R$ of 
	$\mathcal{B}_n$ there is a unique permutation $\sigma_R : \{1, \dots, n\}  
	\rightarrow \{1, \dots, n\}$ such that
	\begin{equation}
		\forall x \in R\;.\; 
		\sigma_{R}(i) < \sigma_{R}(j) \implies \pi_{\sigma_R(i)}(x) < \pi_{\sigma_R(j)}(x).
	\end{equation}
	In other words, the coordinates of any point in $R$ can be sorted into 
	ascending order using a single, common permutation, and this permutation is 
	unique to $R$.
\end{proposition}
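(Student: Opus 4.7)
The plan is to prove \cref{prop:braid_regions} by combining the connectedness of each open region $R$ with the fact that every function $b_{ij}$ is continuous and nonvanishing on $R$. First, I would observe that by definition $R$ is a connected component of $\mathbb{R}^n \setminus \bigcup_{i<j} \{x : b_{ij}(x) = 0\}$, so each $b_{ij}$ is continuous on $R$ and never zero; by the intermediate value theorem applied along paths in $R$, $b_{ij}$ must therefore have constant sign on all of $R$. Consequently, for every ordered pair $(i,j)$ with $i \neq j$, exactly one of the inequalities $\pi_i(x) < \pi_j(x)$ or $\pi_i(x) > \pi_j(x)$ holds uniformly for all $x \in R$.

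Next, I would pick any representative $x_0 \in R$. Because $x_0$ avoids every hyperplane of $\mathcal{B}_n$, its coordinates $\pi_1(x_0), \dots, \pi_n(x_0)$ are pairwise distinct real numbers, and hence admit a unique total ordering. Let $\sigma_R : \{1,\dots,n\} \to \{1,\dots,n\}$ be the unique permutation satisfying $\pi_{\sigma_R(1)}(x_0) < \pi_{\sigma_R(2)}(x_0) < \cdots < \pi_{\sigma_R(n)}(x_0)$. I would then argue that this $\sigma_R$ does not depend on the choice of $x_0 \in R$: the relative ordering between any two coordinates $\pi_i$ and $\pi_j$ is determined by $\mathrm{sign}(b_{ij})$, and by Step~1 that sign is the same at every point of $R$. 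Hence the full total ordering of coordinates is the same at every point of $R$, making $\sigma_R$ well-defined as an invariant of $R$.

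Finally, I would address uniqueness. Suppose two permutations $\sigma, \sigma'$ both witnessed the required property on the same region $R$. Then at any point $x \in R$ the strict coordinate ordering forced by $\sigma$ would have to agree with the one forced by $\sigma'$, since the coordinates of $x$ are pairwise distinct real numbers with only one ascending arrangement; thus $\sigma = \sigma'$. This completes the argument.

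I do not anticipate any genuine obstacle in this proof: the entire content is the sign-constancy of each $b_{ij}$ on a connected region, which is a direct consequence of continuity. The only mildly subtle point is making explicit that consistent pairwise sign information across $R$ assembles into a single total order, but this is immediate because it already does so pointwise and the pointwise order is invariant across $R$. Thus the proposition is essentially a restatement of the standard correspondence between regions of the braid arrangement and permutations of $\{1,\dots,n\}$.
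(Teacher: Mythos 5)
Your argument is correct. Note, however, that the paper does not actually prove \cref{prop:braid_regions} itself: its ``proof'' is simply a pointer to \cite{StanleyIntroductionHyperplaneArrangements}, where this correspondence between regions of the braid arrangement and permutations of $\{1,\dots,n\}$ is a standard fact. Your write-up supplies the elementary argument that the citation leaves implicit, and it is the standard one: each $b_{ij}$ is continuous and nonvanishing on the connected open region $R$, hence of constant sign there; the resulting uniform pairwise comparisons assemble into the (unique) total order already present pointwise at any representative $x_0\in R$, whose coordinates are pairwise distinct precisely because $x_0$ lies off every hyperplane. Nothing is missing. The only cosmetic remark is that the displayed implication in the proposition is stated a bit loosely (with $\sigma_R$ appearing on both sides of the implication), and you have sensibly read it in the intended sense, namely that $\sigma_R$ sorts the coordinates into ascending order; your uniqueness step is then immediate from the strictness of that order.
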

\begin{proof}
	See \cite{StanleyIntroductionHyperplaneArrangements}.
\end{proof}

\begin{lemma}
	\label{lem:braid_face_intersections}
	Let $\mathcal{B}_n$ be the braid arrangement, and let  
	$\mathcal{D}_{\mathcal{B}_n}$ be the braid dissection of $C_n$  (both as 
	defined in \cref{lem:cpwa_interpolate_new}).

	Then any region $R$ of $\mathcal{B}_n$ intersects exactly one 
	$n-1$-dimensional face not containing the origin, i.e.
	\begin{equation}
		\mathsf{F}_i^R \triangleq R \cap  F_{i}^1
	\end{equation}
	is nonempty for exactly one $i = i_R \in \{1, \dots, n\}$. Moreover, for 
	any two distinct regions $R$ and $R^\prime$ of $\mathcal{B}_n$,
	\begin{equation}
		\mathsf{F}_{R} \cap \mathsf{F}_{{R^\prime}} = \emptyset,
	\end{equation}
	where the notation $\mathsf{F}_{i_R}^R$ is simplified to $\mathsf{F}_R$ by 
	the above claim.

	Finally, for each $R$, the closure $\overline{\mathsf{F}_R}$ is isomorphic 
	to a region $P$ of the braid dissection $\mathcal{D}_{\mathcal{B}_{n-1}}$ 
	by means of the projection and embedding ($e_i$ is a column of the $(n 
	\times n)$ identity matrix):
	\begin{align}
		\hspace{-2.5mm} \pi_{\neg i} \hspace{-0.2mm} &: \hspace{-0.2mm} \mathbb{R}^{\scriptscriptstyle n} \rightarrow \mathbb{R}^{\scriptscriptstyle n \hspace{-0.5mm} - \hspace{-0.5mm} 1}, \; 
		\pi_{\neg i} \hspace{-0.2mm} : \hspace{-0.2mm} x \hspace{-0.2mm} \mapsto \hspace{-0.2mm} \left( \begin{smallmatrix}
			e_1 & \dots & e_{i-1} & e_{i+1} & \dots e_n
		\end{smallmatrix}\right)^{\scriptscriptstyle \text{T}} \hspace{-0.3mm} x \\
		\hspace{-2.5mm} \mathscr{I}_i^k \hspace{-0.2mm} &: \hspace{-0.2mm} \mathbb{R}^{\scriptscriptstyle n \hspace{-0.5mm} - \hspace{-0.5mm} 1} \hspace{-1.8mm} \rightarrow \mathbb{R}^{\scriptscriptstyle n}, \; 
		\mathscr{I}_i^k \hspace{-0.2mm} : \hspace{-0.2mm} x \hspace{-0.2mm} \mapsto \hspace{-0.2mm} (\hspace{-0.7mm} \begin{smallmatrix}
			e_1 & \dots & e_{i-1} & e_{i+1} & \dots e_n
		\end{smallmatrix} \hspace{-0.7mm} ) x + k \hspace{-0.2mm} \cdot \hspace{-0.2mm} e_i
	\end{align}
	i.e. $\pi_{\neg i}: \mathsf{F}_R \rightarrow P$ is a bijection that maps 
	corners of $C_n$ to corners of $C_{n-1}$; its inverse is $\mathscr{I}_i^1 : 
	P \rightarrow \mathsf{F}_R$.
\end{lemma}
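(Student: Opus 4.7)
The plan is to exploit Proposition \ref{prop:braid_regions} throughout: it attaches to each region $R$ a unique permutation $\sigma_R$ that sorts the coordinates of points in $R$ in strictly increasing order. Everything in the lemma can be read off this combinatorial structure.

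First, for the claim that each $R$ meets exactly one face $F_i^1$, I will set $i_R \triangleq \sigma_R(n)$, i.e., the coordinate that is largest on $R$. If $x \in R \cap F_i^1 \subset R \cap C_n$, then $\pi_i(x) = 1$ and, by the sorting characterization, $\pi_{\sigma_R(n)}(x) \geq \pi_i(x) = 1$; since $x \in C_n$ also forces $\pi_{\sigma_R(n)}(x) \leq 1$, the inequalities are equalities. Combined with the strict ordering away from the boundary of $R$, this forces $i = \sigma_R(n)$, giving uniqueness. Existence follows by exhibiting a specific point: e.g., the vertex obtained by setting $\pi_{\sigma_R(k)}(x) = 0$ for $k < n$ and $\pi_{\sigma_R(n)}(x) = 1$ lies in $\overline{R} \cap F_{i_R}^1$, and perturbing its nonzero entry slightly produces an interior point of $\mathsf{F}_{i_R}^R$ establishing $(n-1)$-dimensionality. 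The disjointness $\mathsf{F}_R \cap \mathsf{F}_{R'} = \emptyset$ for distinct $R \neq R'$ is then immediate: the open regions of $\mathcal{B}_n$ are pairwise disjoint and $\mathsf{F}_R, \mathsf{F}_{R'}$ lie in the respective relative interiors (enforced by the strict coordinate ordering), so they cannot meet.

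The isomorphism claim is the most structural step. On $F_{i_R}^1$ the $i_R$-th coordinate is pinned to $1$, so the projection $\pi_{\neg i_R}$ identifies $F_{i_R}^1$ with the unit cube $C_{n-1}$ and $\mathscr{I}_{i_R}^1$ is its inverse. Restricting the braid hyperplanes $b_{k\ell}(x) = \pi_k(x) - \pi_\ell(x) = 0$ to $F_{i_R}^1$ yields two types: (a) those with $k,\ell \neq i_R$, which push forward through $\pi_{\neg i_R}$ to exactly the braid arrangement $\mathcal{B}_{n-1}$ on $C_{n-1}$; and (b) those involving $i_R$, which become $\pi_k = 1$ on the face --- constraints that are redundant because $x \in R$ already enforces $\pi_{i_R}(x) > \pi_k(x)$ for $k \neq i_R$, i.e., $\pi_k(x) < 1$, which is simply the cube constraint on the projected coordinate. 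Hence the regions induced by $\mathcal{B}_n$ on $F_{i_R}^1$ correspond bijectively via $\pi_{\neg i_R}$ to the regions of $\mathcal{D}_{\mathcal{B}_{n-1}}$ in $C_{n-1}$, and $\overline{\mathsf{F}_R}$ maps onto the particular region $P$ whose sorting permutation is the one induced by $\sigma_R$ restricted to $\{1,\dots,n\} \setminus \{i_R\}$.

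The main obstacle I anticipate is bookkeeping between the open-region and closed-region conventions: the braid characterization gives \emph{strict} orderings on open regions, while $\mathsf{F}_R$ really lives on the boundary of $R$ inside $C_n$, where equalities of coordinates are allowed. I will handle this by consistently using $R$ to mean the dissection element $\overline{R \cap \text{int}(C_n)}$ and invoking the sorting characterization in its non-strict form on the closure, reverting to strict inequalities to rule out stray intersections with other faces or regions. Once that convention is fixed, the three claims reduce to routine case analysis on the permutation $\sigma_R$.
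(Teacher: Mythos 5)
Your overall strategy is the same as the paper's: everything is driven by the coordinate--sorting characterization of braid regions (\cref{prop:braid_regions}), with $i_R$ taken to be the index of the maximal coordinate, uniqueness and disjointness following from strictness of the ordering on open regions, and the face isomorphism obtained by deleting/reinserting the pinned coordinate $i_R$ and matching $\sigma_R$ restricted to $\{1,\dots,n\}\setminus\{i_R\}$ with a permutation for $\mathcal{B}_{n-1}$. Your treatment of part (iii) is in fact more explicit than the paper's: classifying the restricted hyperplanes $b_{k\ell}$ into those avoiding $i_R$ (which push forward to exactly $\mathcal{B}_{n-1}$) and those involving $i_R$ (which degenerate to the redundant constraint $\pi_k=1$ on the face) is a cleaner justification of the region correspondence than the paper's point-by-point mapping argument.

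There is one step that fails as written: your existence argument for $R\cap F_{i_R}^1\neq\emptyset$. The vertex with $\pi_{\sigma_R(k)}=0$ for $k<n$ and $\pi_{\sigma_R(n)}=1$ does lie in $\overline{R}\cap F_{i_R}^1$, but perturbing its single nonzero entry accomplishes nothing useful: it moves the point off the face $F_{i_R}^1$ (which requires $\pi_{i_R}=1$ exactly), and it leaves the remaining $n-1$ coordinates tied at $0$, so the point still lies on every hyperplane $b_{k\ell}$ with $k,\ell\neq i_R$ and hence is not in the open region $R$. What you need is a point whose $i_R$-th coordinate equals $1$ and whose other coordinates are \emph{distinct} values in $(0,1)$ ordered according to $\sigma_R$; e.g., the paper's choice $\pi_{\sigma_R(k)}(x')=\sigma_R(k)/(\sigma_R(k)+1)$ for $k<n$ and $\pi_{i_R}(x')=1$, or equivalently perturb all of the zero coordinates of your vertex to small distinct values respecting $\sigma_R$. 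With that one repair the argument goes through, and the rest of your case analysis is sound.
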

\begin{proof}

	First, we claim that the any region $R$ of $\mathcal{B}_{n}$ intersects the 
	exactly one $F_i^1$. This follows because each point in (the open set) $R$ 
	necessarily has the same unique maximal coordinate by  
	\cref{prop:braid_regions}; identify this maximal coordinate as $i_R$. 
	Hence, $R$ intersects $F_{i_R}^1$, since every element $x \in 
	\text{int}(F_i^1)$ likewise has the same unique maximal coordinate of 
	$\pi_i(x) = 1$. Indeed, consider the point
	\begin{equation}
		x^\prime \hspace{-0.8mm} = \hspace{-0.6mm} ( \hspace{-0.3mm} \tfrac{\sigma_R(1)}{\sigma_{\hspace{-0.2mm}R}\hspace{-0.2mm}(1)\hspace{-0.2mm}+\hspace{-0.2mm}1} \hspace{0.3mm} \dots  \hspace{0.3mm} \tfrac{\sigma_{\hspace{-0.2mm}R}\hspace{-0.2mm}(i_R-1)}{\sigma_{\hspace{-0.2mm}R}\hspace{-0.2mm}(i_{\hspace{-0.2mm}R}\hspace{-0.2mm}-1)\hspace{-0.2mm}+\hspace{-0.2mm}1}  \hspace{0.5mm} \;  1 \;  \hspace{0.5mm}
		\tfrac{\sigma_{\hspace{-0.2mm}R}\hspace{-0.2mm}(i_{\hspace{-0.2mm}R}\hspace{-0.2mm} \hspace{-0.2mm}+\hspace{-0.2mm} 1)}{\sigma_{\hspace{-0.2mm}R}\hspace{-0.2mm}(i_{\hspace{-0.2mm}R}\hspace{-0.2mm} \hspace{-0.2mm}+\hspace{-0.2mm} 1)+1} 
		 \hspace{0.3mm}
		\dots  \hspace{0.3mm} \tfrac{\sigma_{\hspace{-0.2mm}R}\hspace{-0.2mm}(n)}{\sigma_{\hspace{-0.2mm}R}\hspace{-0.2mm}(n)\hspace{-0.2mm}+\hspace{-0.2mm}1}  \hspace{-0.3mm})
	\end{equation}
	which lies in: $F_{i_R}^1$ because of its $i_R^\text{th}$ coordinate; and 
	in $R$ because its coordinates respect the sorting associated with $R$. To 
	show the second claim, merely note that the regions of $\mathcal{B}_n$ are 
	disjoint by definition (they are regions of a hyperplane arrangement), so 
	$R \neq R^\prime$ implies that $F_R \cap F_{R^\prime} = \emptyset$.

	Finally, consider $\overline{\mathsf{F}_R}$ for $R$ in  $\mathcal{B}_n$. We 
	claim $\overline{\mathsf{F}_R}$ is isomorphic to the region in 
	$\mathcal{D}_{\mathcal{B}_{n-1}}$ given by $P =  \overline{R^\prime \cap 
	\text{int}(C_{n-1}) }$, where $R^\prime$ is the region of 
	$\mathcal{B}_{n-1}$ associated with:
	\begin{equation}
		\sigma_{R^\prime} : i \mapsto \begin{cases}
			\sigma_R(i) & i < i_R \\
			\sigma_R(i+1) & i \geq i_R
		\end{cases}
	\end{equation}
	which is in effect the ordering $\sigma_R$ with its maximal element 
	excluded. Thus, any point $x \in R$ maps to a point in $P$ so defined, 
	since it trivially maps to a point in $R^\prime$ via  $\pi_{\neg i}$. 
	Likewise, any point in $P \cap R^\prime$ maps to a point in $R \cap 
	F_{i_R}^1$ by the inclusion of the additional coordinate specified by 
	$\mathscr{I}_{i_R}^1$. Furthermore, these mappings extend to the closed 
	sets specified by continuity. It follows that corners of one set are mapped 
	to corners of the other and vice versa, since $\pi_{\neg i}$ and 
	$\mathscr{I}_{i_R}^1$ either remove or include a coordinate with value 1.
\end{proof}

Now we can state formally state the proof of \cref{lem:cpwa_interpolate_new}.
\begin{proof}
	We claim that the dissection $\mathcal{D}_\mathcal{B}$ (see 
	\eqref{eq:dissection_definition}) consists of exactly $n!$ simplexes, each 
	of whose vertices are in $\mathcal{F}_0(C)$. Thus,  
	$\Xi_\alpha^{\mathcal{B}_n}$ has at most $n!$ linear regions, one for each  
	simplex.

	First, note that the interior of each region of $\mathcal{B}$ intersects 
	$\text{int}(C_n)$; this is because all possible coordinate orderings of the 
	point $(\tfrac{1}{2}, \tfrac{1}{3}, \dots, \tfrac{1}{n+1})$ are contained 
	in  $\text{int}(C_n)$. Hence, each region of $\mathcal{B}_n$ contributes 
	one region to $\mathcal{D}_{\mathcal{B}_n}$, so 
	$|\mathcal{D}_{\mathcal{B}_n}| = n!$.

	Thus, it remains to show that the intersection of $C_n$ with each region in 
	$\mathcal{B}$  is in fact an $n$-simplex, and we show this by induction on 
	dimension. As the base case, consider $n=2$. In particular, $\mathcal{B}_2$ 
	has only one hyperplane, and it dissects $C_2$ into two triangles (i.e. 
	$2$-simplexes), each of whose vertices are vertices of the square $C_2$. 
	Hence, suppose that $\mathcal{B}_n$ divides $C_n$ into $n$-simplexes with 
	vertices from  $\mathscr{F}_0(C_{n})$; we will show that this implies that 
	$\mathcal{B}_{n+1}$ divides $C_{n+1}$ into $n+1$-simplexes, each of whose 
	vertices are from  $\mathscr{F}_0(C_{n+1})$. 

	To complete the proof of the induction step, let $R$ be a region from 
	$\mathcal{B}_{n+1}$, and observe that for any $x \in R \cap  
	\text{int}(C_{n+1})$ and any $t > 0$, $t \cdot x$ remains in $R$, since 
	multiplication by a non-negative scalar doesn't change the ordering of the 
	coordinates of $x$. For a similar reason, there is a $t_\text{max}(x) \in 
	\mathbb{R}$ such that
	\begin{equation}
		t_\text{max}(x) \triangleq \sup_{t\cdot x \in C_{n+1}} t = \sup_{t 
		\cdot \max_i \pi_i(x)  < 1} t.
	\end{equation}
	Thus, we conclude that $R \cap \text{int}(C_{n+1})$ is the interior of a 
	cone that extends from the origin, and it has a (convex) base contained 
	entirely in some $F_{i_R}^1$ by  \cref{lem:braid_face_intersections}; 
	indeed, that base is isomorphic to $\overline{C_n \cap R^\prime}$ where 
	$R^\prime$ is a region from $\mathcal{B}_{n}$. Of course, by the induction 
	hypothesis,  $\overline{C_n \cap R^\prime}$ is an $n$-simplex whose 
	vertices are contained in $\mathscr{F}_0(C_n)$, so the base of the cone 
	$\overline{R \cap C_{n+1}}$ is necessarily an $n$-simplex whose vertices 
	lie in $F_{i_R}^k \cap \mathscr{F}_0(C_{n+1})$, again by 
	\cref{lem:braid_face_intersections}. Thus, those vertices, together with 
	the origin, constitute the $n+2$ vertices of $\overline{R \cap  
	\text{int}(C_{n+1})}$, and each vertex is a corner of $C_{n+1}$ by 
	construction. Hence, $\overline{R \cap  \text{int}(C_{n+1})}$ is an $n+1$ 
	simplex  whose vertices are in $\mathscr{F}_0(C_{n+1})$ as claimed.

	It remains to show that this $\Xi_\alpha^{\mathcal{B}_n}$ satisfies 
	\emph{ii)}. To wit, let $\alpha, \alpha^\prime \negthinspace :  
	\negthinspace \mathscr{F}_0(C) \negthinspace \rightarrow \negthinspace 
	\mathbb{R}$, and assume without loss of generality that $k=1$ so that
	\begin{equation}
		\forall x \in F_i^1 \cap \mathscr{F}_0(C) \;.\;
		\alpha(x) = \alpha^\prime(|x - e_i|).
	\end{equation}
	Now let $R$ be a region of $\mathcal{B}_n$ with $R \cap  F_{i_R}^1 \neq 
	\emptyset$. It suffices to show that there is a region $R^\prime$ with 
	$R^\prime \cap  \text{int}(F_i^0) \neq \emptyset$ such that $\overline{R 
	\cap F_i^1}$ and $\overline{R^\prime \cap F_i^0}$ are isomorphic under the 
	(self-inverse) translation map $\mathcal{T}_i : x \mapsto | x - e_i |$. 
	This is because $\Xi_\alpha^{\mathcal{B}_n}$ and $\Xi_{\alpha^\prime}$ 
	linearly interpolate between the values of $\alpha$ and $\alpha^\prime$ on 
	$\overline{R \cap F_i^1}$ and $\overline{R^\prime \cap F_i^0}$, 
	respectively, and  $\alpha$ and $\alpha^\prime$ agree on translates of 
	these points in their domains. For example, consider the illustration of 
	$\mathcal{D}_{\mathcal{B}_3}$ shown in \cref{fig:braid_dissection_3d}: note 
	that any triangle (2-simplex) created by $\mathcal{D}_{\mathcal{B}_3}$ on 
	one face is the single-coordinate translate of a triangle created by 
	$\mathcal{D}_{\mathcal{B}_3}$ on the opposite face.

	However, this follows as a more or less direct consequence of 
	\cref{lem:braid_face_intersections}. Indeed, we note that $\overline{R\cap  
	F_{i_R}^1}$ is isomorphic to some $P \in \mathcal{D}_{\mathcal{B}_{n-1}}$ 
	derived from a region $R^\prime_{n-1}$ in $\mathcal{B}_{n-1}$. But it is 
	easy to see that this $P$ is isomorphic to a set of the claimed form 
	$\overline{R^\prime \cap F_{i_R}^0}$, by using the inclusion map  
	$\mathscr{I}_{i_R}^0$ instead of $\mathscr{I}_{i_R}^1$ -- and with the 
	usual inverse $\pi_{\neg i_R}$ (see \cref{lem:braid_face_intersections}). 
\end{proof}

%


\begin{IEEEbiography}[{\includegraphics[width=1in,height=1.25in,clip,keepaspectratio]{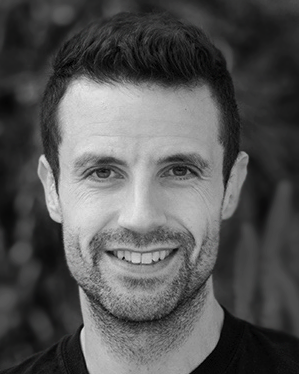}}]{James Ferlez} %
received a B.S. and M.S. in Electrical Engineering from the Pennsylvania State 
University, University Park, PA in 2008 and 2009, respectively, and earned a 
PhD in Electrical Engineering from the University of Maryland, College Park in 
2019.

He is currently a Postdoctoral Researcher at the University of California, 
Irvine in the Resilient Cyber-Physical Systems Laboratory. His research 
interests include formal methods for control, neural networks, machine learning 
and stochastic control.
\end{IEEEbiography}

\begin{IEEEbiography}[{\includegraphics[width=1in,height=1.25in,clip,keepaspectratio]{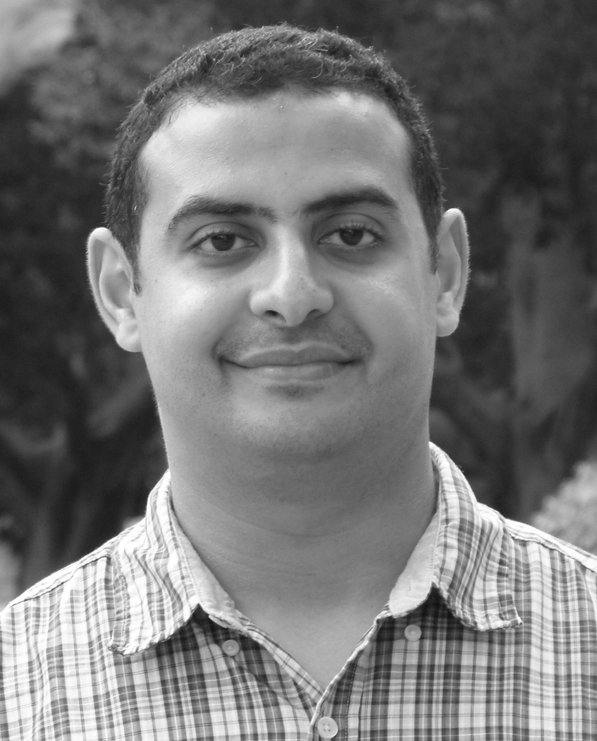}}]{Yasser Shoukry} (Member, IEEE) %
is an Assistant Professor in the Department of Electrical Engineering and Computer Science at the University of California, Irvine. He received his Ph.D. in Electrical Engineering from the University of California, Los Angeles in 2015. He received the M.Sc. and the B.Sc. degrees (with distinction and honors) in Computer and Systems engineering from Ain Shams University, Cairo, Egypt in 2010 and 2007, respectively. Between September 2015 and July 2017, Yasser was a joint post-doctoral associate at UC Berkeley, UCLA, and UPenn. Before pursuing his Ph.D. at UCLA, he spent four years as an R\&D engineer in the industry of automotive embedded systems. Yasser’s research interests include the design and implementation of resilient Cyber–Physical Systems (CPS) and Internet-of-Things (IoT) by drawing on tools from embedded systems, formal methods, control theory, and machine learning.

Prof. Shoukry is the recipient of the IEEE TC-CPS Early Career Award in 2021, the NSF CAREER award in 2019, the Best Demo Award from the ACM/IEEE IPSN conference in 2017, the Best Paper Award from the ACM/IEEE ICCPS in 2016 and the Distinguished Dissertation Award from UCLA EE department in 2016. In 2015, he led the UCLA/Caltech/CMU team to win the NSF Early Career Investigators (NSF-ECI) research challenge. His team represented the NSF-ECI in the NIST Global Cities Technology Challenge, an initiative designed to advance the deployment of Internet of Things (IoT) technologies within a smart city. He is also the recipient of the 2019 George Corcoran Memorial Award for his contributions to teaching and educational leadership in the field of CPS and IoT.

\end{IEEEbiography}

\end{document}